\newtheorem{thm}{Theorem}
\newtheorem{lemma}{Lemma}
\newtheorem{definition}{Definition}
\newtheorem{conjecture}{Conjecture}
\newcommand{\dho}{\partial}
\newcommand{\bR}{\ensuremath{\mathbb{R}}}
\newcommand{\cN}{\ensuremath{\mathcal{N}}}
\newcommand{\cO}{\ensuremath{\mathcal{O}}}
\newcommand{\cP}{\ensuremath{\mathcal{P}}}
\newcommand{\Gset}{\Gamma'(C)}
\newcommand{\Din}{\ensuremath{D_{\mathrm{in}}}}
\newcommand{\DL}{\mathrm{DL}}
\newcommand{\fnl}{f_{\mathrm{nl}}}
\newcommand{\hfnl}{\hat{f}_{\mathrm{nl}}}
\newcommand{\es}[2] {\begin{align} \label{#1} #2 \end{align}}
\DeclareMathOperator{\trace}{Tr}
\newcommand{\lexp}{\mathbb{E}\left[}
\newcommand{\lexpp}[1]{\mathbb{E}_{#1}\left[}
\newcommand{\rexp}{\right]}
\newcommand{\lvarp}[1]{\mathrm{Var}_{#1}\left[}
\newcommand{\pcite}[1]{\cite{#1}}
\begin{document}

\title{Asymptotics of Wide Networks \\ from Feynman Diagrams}

\author{Ethan Dyer\thanks{Authors listed alphabetically.}
  \\ Google \\ Mountain View, CA
  \\ \texttt{edyer@google.com}
  \and
  Guy Gur-Ari\footnotemark[1]
  \\ Google \\ Mountain View, CA
  \\ \texttt{guyga@google.com}
}

\maketitle
\begin{abstract}
 Understanding the asymptotic behavior of wide networks is of considerable interest.
  In this work, we present a general method for analyzing this large width behavior.
  The method is an adaptation of Feynman diagrams, a standard tool for computing multivariate Gaussian integrals.
  We apply our method to study training dynamics, improving existing bounds and deriving new results on wide network evolution during stochastic gradient descent.
  Going beyond the strict large width limit, we present closed-form expressions for higher-order terms governing wide network training, and test these predictions empirically.
\end{abstract}

\tableofcontents{}

\section{Introduction}
Neural networks achieve remarkable performance on a wide array of machine learning tasks,
yet a complete analytic understanding of deep networks remains elusive.
One promising approach is to consider the large width limit, in which the number of neurons in one or several layers is taken to be large.
In this limit one can use a mean-field approach to better understand the network's properties at initialization \pcite{Neal1996, lee2018deep}, as well as its training dynamics \pcite{daniely2017sgd, ntk}. Additional related works are cited below.

Suppose that $f(x)$ is the network function evaluated at an input $x$. Let us denote the vector of model parameters by $\theta$, whose elements are initially chosen to be i.i.d. Gaussian.
In this work we consider a class of functions we call \emph{correlation functions}, obtained by taking the ensemble averages of $f$, its products, and its derivatives with respect to the parameters $\theta$, evaluated on arbitrary inputs.
Here are a few examples of correlation functions.
\begin{align}
  \lexpp{\theta} f(x_1) f(x_2) \rexp \,,\;
  \sum_\mu \lexpp{\theta} \frac{\dho f(x_1)}{\dho \theta^\mu} \frac{\dho f(x_2)}{\dho \theta^\mu} \rexp \,,\;
  \sum_{\mu,\nu} \lexpp{\theta}
  \frac{\dho f(x_1)}{\dho \theta^\mu}
  \frac{\dho f(x_2)}{\dho \theta^\nu}
  \frac{\dho^2 f(x_3)}{\dho \theta^\mu \dho \theta^\nu}
  f(x_4)
  \rexp \,. \label{eq:cexample}
\end{align}
Correlation functions often show up in the study of wide networks.
For example, the first correlation function in \eqref{eq:cexample} plays a central role in the Gaussian Process picture of wide networks \pcite{lee2018deep}, and has been used to diagnose signal propagation in wide networks \pcite{DBLP:journals/corr/abs-1711-04735}.
The second example in \eqref{eq:cexample} is the ensemble average of the Neural Tangent Kernel (NTK), which controls the evolution of wide networks under gradient flow \pcite{ntk}, and the third example shows up when computing the time derivative of the NTK with MSE loss.

While correlation functions can be computed analytically in some special cases \pcite{NIPS2009_3628}, they are not analytically tractable in general.
In this work, we present a method for bounding the asymptotic behavior of such functions at large width.
Derivation of the method relies on Feynman diagrams \pcite{Feynman:1949zx}, a technique for calculating multivariate Gaussian integrals, and specifically on the 't Hooft expansion \pcite{tHooft:1973alw}.
However, applying the method is straightforward and does not require any knowledge of Feynman diagrams.

\paragraph{Our contribution.}
\begin{enumerate}
\item We present a general method for bounding the asymptotic behavior of \emph{correlation functions}.
  The method is an adaptation of Feynman diagrams to the case of wide neural networks.
  The adaptation involves a novel treatment of derivatives of the network function, an element that is not present in the original theoretical physics formulation.
\item We apply the method to the study of wide network evolution under gradient descent.
  We improve on existing results for gradient flow \pcite{ntk} by deriving tighter bounds, and extending the analysis to the case of stochastic gradient descent (SGD).
  Going beyond the infinite-width limit, we present a formalisn for deriving finite-width corrections to network evolution, and present explicit formulas for the first order correction.
  To our knowledge, this is the first time this correction has been calculated.
\item As additional applications of our method, in Appendix~\ref{app:discLinEvo} we show that in the large width limit the SGD updates are linear in the learning rate, and in Appendix~\ref{app:spectrum} we discuss finite width corrections to the spectrum of the Hessian.
\end{enumerate}

\paragraph{Limitations of our approach.} 
The main result of this paper is a conjecture.
We test our predictions extensively using numerical experiments, and prove the conjecture in some cases, but we do not have a proof that applies to all the cases we tested, including for deep networks with general non-linearities.
Furthermore, our method can only be used to derive asymptotic bounds at large width; it does not produce the width-independent coefficient, which is often of interest.

\paragraph{Related work.}
For additional works on wide networks, including relating them to Gaussian processes, see \cite{g.2018gaussian, novak2019bayesian, garriga-alonso2018deep, yang2019scaling, pennington2017geometry, pennington2018spectrum, schoenholz2017correspondence, xiao2018dynamical, chen2018dynamical, daniely2016toward, 2019arXiv190206720L, du2018gradient, du2018gradient2, allen2018convergence}.
For additional works discussing the training dynamics of wide networks see \cite{geiger2019scaling, arora2019fine}.
For a previous use of diagrams in this context, see \cite{NIPS2017_6857}. 

The Neural Tangent Hierarchy presented in \pcite{huang2019dynamics}, published during the completion of this version, has significant overlap with the recursive differential equations \eqref{short_tower} presented below.
\\

The rest of the paper is organized as follows.
In Section~\ref{sec:main} we present our main conjecture and supporting evidence.
In Section~\ref{sec:applications} we apply the method to gradient descent evolution of wide networks, and 
in Section~\ref{sec:feynman} we present details on Feynman diagrams, which is the basic technique used in our proofs.
We conclude with a Discussion.
Proofs, additional applications, and details can be found in the Appendices.
\\

\noindent\textbf{Note}: An earlier version of this work appeared in the ICML 2019 workshop, Theoretical Physics for Deep Learning \pcite{fdworkshop}.
\section{Correlation function asymptotics}\label{sec:main}

In this section we present our main result: a method for computing asymptotic bounds on correlation functions of wide networks.
We present the result as a conjecture, supported by analytic and empirical evidence.

\subsection{Notation}\label{sec:notation}
Let $f(x) \in \bR$ be the network output of a deep linear network with $d$ hidden layers and input $x \in \bR^{\Din}$, defined by
\begin{align}
  f(x) = n^{-1/2} V^T \sigma(n^{-1/2} W^{d-1} \cdots \sigma(n^{-1/2} W^1 \sigma(U x))) \,. \label{eq:fdl}
\end{align}
Here $U \in \bR^{n \times \Din}$, $V \in \bR^n$, $W^1,\dots,W^{d-1}$ are weight matrices of dimension $n$, and $\sigma:\bR\to\bR$ is the non-linearity.\footnote{
We take all layers widths to be equal to $n$ for simplicity, but our results hold in the more general case where all widths scale linearly with $n$.}
We denote the vector of all model parameters by $\theta$.
At initialization, the elements of $\theta$ are independent Gaussian variables, with each element $\theta^\mu \sim \cN(0,1)$.
The corresponding distribution of $\theta$ is denoted by $\cP_0$.\footnote{
We use $\mu,\nu,\dots$ to denote $\theta$ indices, $i,j,\dots$ to denote individual weight matrix and weight vector indices, and $\alpha,\beta,\dots$ for input dimension indices.}

Let us now define \emph{correlation functions}, the class of functions that is the focus of this work.
These functions involve derivative tensors of the network function.
We denote the rank-$k$ derivative tensor by
$
  T_{\mu_1 \dots \mu_k}(x;f) := \dho^k f(x) / \dho \theta^{\mu_1} \cdots \dho \theta^{\mu_k} \,.
$
For $k=0$ we define $T(x;f):=f(x)$, and still refer to this as a derivative tensor for consistency.

\begin{definition} \label{def:corr}
  A \emph{correlation function} is the expectation value of a product of derivative tensors, evaluated at arbitrary inputs, where the tensor indices are summed in pairs over all the model parameters.
  A general correlation function $C$ takes the form
  \begin{align}
    C(x_1,\dots,x_m) :=
    \!\! \sum_{\mu_1,\dots,\mu_{k_m}} \!\!
    \Delta_{\mu_1 \dots \mu_{k_m}}^{(\pi)}
    \lexpp{\theta}
    T_{\mu_1 \dots \mu_{k_1}} (x_1)
    T_{\mu_{k_1+1} \dots \mu_{k_2}} (x_2)
    \cdots
    T_{\mu_{k_{m-1}+1} \dots \mu_{k_m}} (x_m)
    \rexp \,. \label{eq:corr}
  \end{align}
  Here, $0 \le k_1 \le \cdots \le k_{m-1} \le k_m$ are integers,\footnote{When $k_a=k_{a-1}$, the tensor $T(x_{a})$ has no derivatives.} $m$ and $k_m$ are even, $\pi \in S_{k_m}$ is a permutation, and
  $
    \Delta_{\mu_1 \dots \mu_{k_m}}^{(\pi)} =
    \delta_{\mu_{\pi(1)} \mu_{\pi(2)}} \cdots \delta_{\mu_{\pi(k_m-1)} \mu_{\pi(k_m)}} \,.
  $
  We use $\delta$ to denote the Kronecker delta.
\end{definition}
If two derivative tensors in a correlation function have matching indices that are summed over, we say that they are \emph{contracted}.
For example, the correlation function $\sum_\mu \lexpp{\theta} \dho f(x_1) / \dho \theta^\mu \cdot \dho f(x_2) / \dho \theta^\mu  \rexp$ has one pair of contracted tensors.
See \eqref{eq:cexample} for additional examples of correlation functions.

\subsection{Asymptotic bounds on wide networks}

We now present our main conjecture, which allows us to place asymptotic bounds on general correlation functions of wide networks.
\begin{conjecture}\label{conj:main}
  Let $C(x_1,\dots,x_m)$ be a correlation function.
  The \emph{cluster graph} $G_C(V,E)$ of $C$ is a graph with vertices $V=\{v_1,\dots,v_m\}$ and edges $E=\{(v_i,v_j) \,|\, (T(x_i),T(x_j))\mathrm{~contracted~in~ }C\}$.
  Suppose that the cluster graph $G_C$ has $n_e$ connected components with an even size (even number of vertices), and $n_o$ components of odd size.
  Then $C(x_1,\dots,x_m) = \cO(n^{s_C})$, where
  \begin{align}
    s_C = n_e + \frac{n_o}{2} - \frac{m}{2} \,. \label{eq:s}
  \end{align}
\end{conjecture}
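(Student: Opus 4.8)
The plan is to expand $C$ into a finite sum of Feynman diagrams via Wick's theorem, power-count each diagram in the 't~Hooft double-line representation, and show that the largest power of $n$ occurring is exactly $s_C$; since the conjecture asserts only the upper bound $C=\cO(n^{s_C})$, only the ``$\le$'' direction is really needed (the extremal diagrams saturate it). I would begin with the deep linear case $\sigma=\mathrm{id}$, where $f(x)=n^{-d/2}\,V^TW^{d-1}\cdots W^1Ux$ and every derivative tensor in \eqref{eq:corr} is an explicit monomial in the weights: differentiating with respect to a bulk entry $W^\ell_{ij}$ deletes the matrix $W^\ell$ and exposes its two hidden indices ($i$ on the output side, $j$ on the input side) while leaving the $n^{-1/2}$ prefactor intact, and differentiating with respect to $V_i$ or $U_{i\alpha}$ exposes a single hidden index. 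Substituting into \eqref{eq:corr} and carrying out the Gaussian average $\mathbb{E}_\theta$ by Wick's theorem replaces each pair of weights by a Kronecker delta; together with the pairing tensor $\Delta^{(\pi)}$, each resulting diagram is a fixed input-dependent scalar times a product of $\delta$'s, so summing the free hidden indices produces a factor $n^{L(D)}$, with $L(D)$ the number of closed index loops of the diagram $D$.

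Since each of the $m$ insertions carries the fixed normalization $n^{-d/2}$, diagram $D$ scales as $n^{\,L(D)-md/2}$, so Conjecture~\ref{conj:main} becomes the purely combinatorial statement
\es{eq:Lbound}{
  \max_D L(D)\;\le\; n_e+\tfrac{n_o}{2}+\tfrac{m(d-1)}{2}\,.
}
I would prove \eqref{eq:Lbound} by the genus argument of the large-$n$ expansion adapted to the derivative structure: regard $D$ as a ribbon (fat) graph whose ribbons are the double lines of the $W^\ell$'s, with boundary components coming from the single-line $U,V$ ends and the frozen input indices; the number of faces is then controlled by the Euler characteristic $V-E+F=2-2g-b$ and is maximal at genus $g=0$ with the fewest boundary components. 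The nontrivial input is how the contractions encoded in $\Delta^{(\pi)}$ enter: contracting two derivative tensors splices an exposed ``output'' end of one to an exposed ``input'' end of another, and the heart of the argument is to show that each pair of tensors joined in this way can close at most one additional loop, while within a connected component of $G_C$ of odd size one tensor is always left with an exposed line that must run out to a $U$- or $V$-boundary and closes no loop, which is the origin of the $\tfrac12$ per odd component in \eqref{eq:s}. I would carry this out by induction on the size of each connected component of $G_C$, attaching one contracted tensor at a time and keeping a running list of the currently exposed ends.

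I expect the main obstacle to be twofold. First, making the ribbon-graph accounting fully rigorous once higher-rank derivative tensors are present: a single insertion is then cut in several places, the induction has to track several exposed ends at once, and one has to rule out re-gluings that would manufacture extra loops. Second, and more seriously, passing from the linear network to a general non-linearity: one would Taylor-expand each $\sigma$ about its Gaussian pre-activation, turning every non-linearity into an infinite tower of interaction vertices, and would then have to show that no configuration of these vertices together with the Wick contractions among the pre-activations ever beats the linear loop count. This last step is exactly what we have not been able to establish for deep networks with arbitrary $\sigma$, which is why the result is presented as a conjecture rather than a theorem.
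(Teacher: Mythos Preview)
Your high-level strategy for the deep linear case---Wick expansion into Feynman diagrams, 't~Hooft double-line representation, and a genus/Euler-characteristic bound on the number of closed index loops---is the same as the paper's, and the target inequality you write, $\max_D L(D)\le n_e+\tfrac{n_o}{2}+\tfrac{m(d-1)}{2}$, is exactly what is proved there.

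Where you diverge is in the combinatorial step. You propose an induction on the size of each cluster, attaching one contracted tensor at a time and tracking the exposed ends; you rightly worry that this becomes delicate once a single insertion carries several derivatives and hence several exposed ends simultaneously. The paper avoids this by factoring the bound through the number $c_\gamma$ of connected components of the Feynman diagram $\gamma$ itself (not of the cluster graph). First, an Euler-characteristic argument applied to each connected component of $\gamma$---which is a triangulation of a surface with at least one boundary, so $\chi\le 1$---yields $s_\gamma\le c_\gamma-\tfrac{m}{2}$, entirely independently of how many derivatives any tensor carries. Second, one observes that every connected component of a Feynman diagram must have an even number of vertices (each vertex has exactly one edge of each of the $d{+}1$ types, so the type-$V$ edges alone form a perfect matching on the component), and since every edge of $G_C$ is a forced edge of $\gamma$, clusters of $G_C$ cannot split across components of $\gamma$; combining these two facts gives $c_\gamma\le n_e+\tfrac{n_o}{2}$. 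This two-step route completely sidesteps your first stated obstacle: the rank of the derivative tensors never enters the count, only the global graph structure does.

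Your second obstacle---extending to a general non-linearity by Taylor-expanding $\sigma$ and controlling the resulting tower of interaction vertices---is genuine and is precisely why the statement is presented as a conjecture rather than a theorem. The paper establishes it only for deep ReLU networks with all inputs equal (by decoupling the binary activations from the weights and reducing to the linear case) and for a single hidden layer with smooth $\sigma$ (by a direct count of the surviving index sums after integrating out $V$); the general deep non-linear case is left open.
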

We will refer to the connected components of a cluster graph $G_C$ as the clusters of $C$.
Table~\ref{tab:empscaling} lists examples of bounds derived using the Conjecture for several correlation functions.
The intuition behind Conjecture~\ref{conj:main} comes from the following result for deep linear networks.
\begin{thm}\label{thm:main}
  Conjecture~\ref{conj:main} holds for correlation functions of networks with linear activations.
\end{thm}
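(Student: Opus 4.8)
The plan is to reduce a general correlation function of the deep linear network to an explicit multivariate Gaussian integral, and then to track the power of $n$ by a careful bookkeeping of index contractions. For a linear network, $\sigma=\mathrm{id}$, so \eqref{eq:fdl} becomes $f(x) = n^{-d/2} V^T W^{d-1}\cdots W^1 U x$, a multilinear function of the parameter blocks $U, W^1,\dots,W^{d-1}, V$. Each derivative tensor $T_{\mu_1\dots\mu_k}(x_a;f)$ is then obtained by deleting $k$ of these blocks (one per derivative, since $f$ is multilinear of total degree $d+1$ in the blocks) and replacing them by the appropriate Kronecker structure; in particular $T$ vanishes unless the $k$ derivatives hit distinct blocks. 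First I would set up this combinatorial description: a correlation function becomes a sum over ways of assigning, to each vertex $v_a$, a subset $S_a$ of the $d+1$ block-slots that are differentiated, together with the surviving product of weight-matrix entries, all contracted according to $\Delta^{(\pi)}$ and then averaged over $\cP_0$.

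Next I would apply Wick's theorem (Isserlis) to the Gaussian average over $\theta$: the expectation factorizes into a sum over pairings of the surviving weight-entry indices, each pairing contributing a Kronecker delta. The upshot is that every term in $C$ becomes a product of Kronecker deltas on the internal indices $i,j,\dots\in\{1,\dots,n\}$, and the value of the term is $n^{-(d+1)m/2+|\text{contractions}|}$ times $n^{(\#\text{ free index loops})}$, where a ``free index loop'' is a connected component of the graph whose nodes are internal-index slots and whose edges are the Kronecker deltas coming from (i) the network's matrix-multiplication structure, (ii) the derivative contractions $\Delta^{(\pi)}$, and (iii) the Wick pairings. So the whole problem is: maximize the number of such loops over all admissible Wick pairings and all choices of which blocks are differentiated at each vertex, and show the maximum equals $n_e + n_o/2 - m/2 + (d+1)m/2$... — more precisely, that the net power after including the $n^{-(d+1)/2}$ normalization per vertex is exactly $s_C$ from \eqref{eq:s}.

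The key structural input is that the pattern of loops is governed entirely by the cluster graph $G_C$: a $\theta$-contraction between $T(x_i)$ and $T(x_j)$ forces, via the derivative that produced it, a shared block slot, and the Gaussian average then ``glues'' the surviving chains of matrices from $x_i$ and $x_j$ together. Within one connected component (cluster) of $G_C$, the gluings form a single closed structure whose loop count I would compute by induction on the number of vertices in the cluster: each additional vertex added along a contraction edge either extends existing loops or creates one new loop, and the parity of the cluster size determines whether the optimal configuration closes up into the maximal number of loops (even case) or is forced to leave one ``open'' chain that costs a factor of $n^{-1/2}$ relative to the closed case (odd case). Summing over components then yields the $n_e + n_o/2$ contribution, and the $-m/2$ is the residual normalization. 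The last step is to verify the bound is not beaten by non-generic pairings — i.e. that ``degenerate'' Wick contractions that identify more indices never produce more loops — which follows because identifying two index slots can only merge loops, never split them.

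The main obstacle I expect is the inductive loop-counting within a single cluster: one must argue that over \emph{all} choices of differentiated blocks and \emph{all} Wick pairings compatible with a given cluster graph, the maximal loop count is achieved by a configuration whose count depends only on the parity of the cluster size, and in particular that odd clusters genuinely lose a half-power and cannot recover it. Handling vertices with $k_a>2$ derivatives (so a single $T(x_a)$ participates in several contraction edges and has several deleted blocks) complicates the chain-gluing picture and is where the permutation $\pi$ in $\Delta^{(\pi)}$ really enters; I would treat this by reducing a high-valence vertex to a small gadget and checking the loop bound on gadgets directly. A secondary technicality is that derivatives with respect to the first block $U$ or the last block $V$ behave slightly differently (they terminate a matrix chain rather than splicing it), and these boundary cases need to be checked separately but do not change the parity-based count. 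Everything else — the reduction to a Gaussian integral, Wick's theorem, and collecting the normalization powers — is routine.
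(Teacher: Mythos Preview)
Your setup is correct --- writing out the multilinear structure, applying Isserlis' theorem, and reducing everything to counting index loops is exactly how the paper begins as well. Where you diverge is in the endgame: you propose to bound the maximal loop count by induction on the size of a cluster, treating the gluing of matrix chains directly. The paper instead separates the problem into two independent pieces, and the first piece is where it uses an idea you are missing.

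The paper's route is: (1) package each Wick pairing as a Feynman diagram $\gamma$ with typed edges, then pass to the \emph{double-line} ('t~Hooft) diagram, so that loop counting becomes face counting of a triangulated surface with boundary; (2) for a single connected component $\gamma'$ with $v$ vertices, use the Euler characteristic $\chi = v - e + f$ together with $e = (d+1)v/2$ and the topological bound $\chi \le 1$ (valid because the surface has a boundary coming from the $U,V$ edges) to get immediately $f - \tfrac{dv}{2} \le 1 - \tfrac{v}{2}$; (3) observe that every connected component of $\gamma$ has an even number of vertices, so the number of components is at most $n_e + n_o/2$. Summing (2) over components and using (3) gives the bound $s_C$ without any induction at all.

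Your inductive plan is not wrong in spirit, but the step you flag as ``the main obstacle'' --- that over all choices of differentiated blocks and all Wick pairings the maximal loop count depends only on the parity of the cluster size --- is exactly the step that the Euler-characteristic argument handles in one line and that a direct induction would make painful, especially for general depth $d$ and for vertices with many derivatives. Your heuristic ``identifying two index slots can only merge loops, never split them'' is essentially a restatement of the genus/boundary monotonicity that the topological bound encodes; making it rigorous by hand for arbitrary $d$ is doable but unpleasant. One small correction: the normalization per vertex is $n^{-d/2}$, not $n^{-(d+1)/2}$; correspondingly the target loop count you wrote should be adjusted, though this does not affect the strategy.
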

Let us discuss the intuition behind this theorem.
Computing correlation functions of deep linear networks amounts to evaluating Gaussian integrals with polynomial integrands in $\theta$.
One can evaluate such integrals using Isserlis' theorem, which tells us how to express moments of multivariate Gaussian variables in terms of their second moments.
For example, given centered Gaussian variables $z_1,...,z_4$,
\begin{align}
  \lexpp{z} z_1 z_2 z_3 z_4 \rexp &=
  \lexpp{z} z_1 z_2 \rexp \lexpp{z} z_3 z_4 \rexp +
  \lexpp{z} z_1 z_3 \rexp \lexpp{z} z_2 z_4 \rexp +
  \lexpp{z} z_1 z_4 \rexp \lexpp{z} z_2 z_3 \rexp \,.
\end{align}
Therefore, correlation functions of deep linear networks can be expressed in terms of the covariances $\lexpp{\theta} U_{i\alpha} U_{j\beta} \rexp = \delta_{ij} \delta_{\alpha\beta}$, $\lexpp{\theta} V_i V_j \rexp = \delta_{ij}$, and $\lexpp{\theta} W_{ij}^{(l)} W_{kl}^{(l)} \rexp = \delta_{ik} \delta_{jl}$.
For example, for a deep linear network with 2 hidden layers, we have
\begin{align}
  \lexpp{\theta} f(x_1) f(x_2) \rexp
  &= \frac{1}{n^2} \lexpp{\theta} V^T W U x_1 V^T W U x_2 \rexp
  =
  \frac{x_1^T x_2}{n^2} \sum_{i,k}^n \delta_{ik} \delta_{ik} \sum_{j,l}^n \delta_{jl} \delta_{jl}
  = x_1^T x_2
  \,.
\end{align}
Every correlation function of a deep linear network can be similarly reduced to sums over products of Kronecker delta functions and width-independent functions of the inputs.
The asymptotic large width behavior is determined by these sums over delta functions, which are tedious to compute by hand.
Feynman diagrams are a graphical tool for computing these sums, allowing us to obtain the asymptotic behavior with minimal effort.
This tool, which is described in detail in Section~\ref{sec:feynman}, is used to prove Theorem~\ref{thm:main}.

For networks with non-linear activations we further show the following
\begin{thm}\label{thm:nonlin}
  Conjecture~\ref{conj:main} holds for (1) networks with ReLU activations, where all inputs are set to be equal, and for (2) networks with one hidden layer and smooth activation.
\end{thm}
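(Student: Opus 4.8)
The two cases call for rather different arguments; I would handle them separately.

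\textbf{(2) One hidden layer, smooth activation.} Here $\theta=(U,V)$ with $U\in\bR^{n\times\Din}$ and $V\in\bR^{n}$, and $f(x)=n^{-1/2}\sum_{i=1}^{n}V_i\,\sigma(z_i(x))$ with $z_i(x)=\sum_\alpha U_{i\alpha}x_\alpha$. The first step is to record the rigid form of the derivative tensors: because $f$ is linear in $V$ and depends on $U$ only through the per-neuron preactivations $z_i$, the component $T_{\mu_1\cdots\mu_{k_a}}(x_a;f)$ vanishes unless at most one of its indices is a $V$-index, and otherwise it equals $n^{-1/2}$ times a width-independent function of the inputs times $V_{i_a}^{\epsilon_a}\,\sigma^{(q_a)}(z_{i_a}(x_a))$, where Kronecker deltas force all of its $U$-neuron indices (and the $V$-index, if present) to a common value $i_a$, the exponent $\epsilon_a\in\{0,1\}$ records the absence/presence of a $V$-derivative, and $q_a$ is the number of $U$-derivatives. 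Expanding the correlation function over the finitely many assignments of $V$- or $U$-type to each derivative index (a contraction is always between indices of the same type, and produces a Kronecker delta $\delta_{i_a i_b}$ — an edge of the cluster graph $G_C$ — together with a width-independent input factor), a single term becomes
\[
  n^{-m/2}\sum_{i_1,\dots,i_m=1}^{n}\Bigl(\prod_{(v_a,v_b)\in E}\delta_{i_a i_b}\Bigr)\,
  \lexpp{V}\prod_{a}V_{i_a}^{\epsilon_a}\rexp\;
  \lexpp{U}\prod_{a}\sigma^{(q_a)}\!\bigl(z_{i_a}(x_a)\bigr)\rexp\;\times\;(\text{width-independent factor}).
\]
The deltas force all $i_a$ inside a cluster to coincide, so exactly one index $I_\kappa$ survives per cluster $\kappa$; since the preactivations are independent across neurons, $\lexpp{U}\cdots\rexp$ factorizes over clusters into width-independent Gaussian moments, finite by smoothness (with the usual growth condition on $\sigma$ making the integrals converge), and the residual factors $V_{i_a}$ are standard Gaussians whose expectation is computed by Isserlis' theorem, which pairs up the $\epsilon_a=1$ factors.

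The power of $n$ is then $-m/2$ plus the number of surviving neuron sums, namely the number of connected components of the multigraph whose vertices are the clusters of $C$ and whose edges are the Isserlis pairings. The parity observation that produces \eqref{eq:s} is this: inside a cluster $\kappa$ every tensor carrying a $V$-derivative has that derivative contracted with a $V$-derivative of another tensor in $\kappa$, so the number of $\epsilon_a=1$ tensors equals the cluster size $p_\kappa$ minus twice the number of $V$-$V$ contractions inside $\kappa$, and hence has the same parity as $p_\kappa$; because all residual $V$-factors in a connected component are paired among themselves, each component contains an even number of odd clusters. A component with no odd cluster uses at least one of the $n_e$ even clusters, and a component containing odd clusters uses at least two of the $n_o$ odd ones, so the number of components is at most $n_e+n_o/2$ and $C=\cO(n^{-m/2+n_e+n_o/2})=\cO(n^{s_C})$; terms in which two surviving cluster indices happen to coincide are of lower order and absorbed into the bound.

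\textbf{(1) ReLU, all inputs equal.} With $\sigma$ the ReLU and a single shared input, positive homogeneity gives $\sigma(h)=D(h)\,h$ with $D(h)=\mathrm{diag}(\mathbf{1}[h_i>0])$, so $f$ and every derivative tensor is exactly the corresponding polynomial of a deep \emph{linear} network with diagonal $0/1$ activation-pattern matrices $D^0,\dots,D^{d-1}$ inserted along the weight-matrix chains — the $D^l$ being piecewise constant, hence constant under differentiation, so that in particular $f$ is multilinear in the $d+1$ weight matrices and derivative tensors of order exceeding $d+1$ vanish. I would then condition on the preactivation vectors $h^0,\dots,h^{d-1}$: conditionally the $D^l$ are fixed, and each weight matrix remains Gaussian (one is conditioning a Gaussian on linear functionals of itself), so the Feynman-diagram expansion of Section~\ref{sec:feynman} applies verbatim to the conditional expectation, with the linear-network propagator $\delta_{ij}$ replaced by its conditional version — a projection off $\sigma(h^{l-1})$, equal to $\delta_{ij}$ up to a rank-one, $\cO(1/n)$-suppressed correction — plus rank-one ``mean'' insertions proportional to $h^l\sigma(h^{l-1})^{T}/\|\sigma(h^{l-1})\|^2$. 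The key point is that neither the $D^l$ insertions (entries in $\{0,1\}$, sitting on index loops) nor these corrections change the loop structure of the diagrams or add powers of $n$: a closed index loop still yields $\sum_i(\cdots)=\cO(n)$, with the $D^l$ entries and the normalizations $\|\sigma(h^l)\|^2/n=\cO(1)$ (which concentrate) affecting only the $\cO(1)$ coefficient. Taking the outer expectation over $(h^0,\dots,h^{d-1})$ and invoking Theorem~\ref{thm:main} for the linear network then gives $C=\cO(n^{s_C})$.

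I expect the ReLU case to be the main obstacle. The activation patterns $D^l$ are not independent of the weights, so one cannot Wick-contract the Gaussian weights in their presence; conditioning on the preactivations restores conditional Gaussianity of the weights but introduces the conditional mean and the rank-one correction to the propagator, and forces one to control the normalizations $\|\sigma(h^l)\|^2$, which are only $\cO(n)$ in expectation rather than pointwise. Verifying, diagram by diagram, that this conditional structure is still dominated by the linear-network count — in particular that the mean insertions and rank-one corrections cannot conspire to create an extra index loop — is the delicate step. By contrast, once the rigid form of the one-hidden-layer derivative tensors is in hand, case (2) is a direct, if fussy, bookkeeping exercise driven entirely by the parity of the $V$-moments.
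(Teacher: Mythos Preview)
Your treatment of case~(2) is correct and essentially matches the paper: integrate out $V$ via Isserlis, collapse each cluster to a single neuron index, and bound the number of surviving sums by the number of connected components of the graph whose vertices are clusters and whose edges are the Isserlis $V$-pairings. Your parity observation --- that each connected component contains an even number of odd clusters --- is precisely the content of the paper's Lemma~\ref{lemma:1hlconcomps}.

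For case~(1), your conditioning approach is a genuinely different route from the paper's, and as you yourself flag, it is incomplete: after conditioning on the preactivations the weights are only conditionally Gaussian, with nonzero means of order $n^{-1/2}$ and rank-one--corrected propagators, and you have not shown that these insertions cannot alter the loop count. The paper sidesteps this entirely by invoking a construction of Hanin~\cite{hanin2018products}: one introduces independent diagonal $\pm1$-Bernoulli matrices $\xi^j,\eta^j$ and passes to reweighted parameters $\hat U=\xi^1U$, $\hat V=\eta^dV$, $\hat W^j=\xi^{j+1}W^j\eta^j$. The key facts are that the resulting network $\hfnl$ is equal in distribution to $\fnl$, and that the activation-pattern matrices $\hat D^j(x)$ together with the sign products $\rho^j=\eta^j\xi^j$ are \emph{exactly independent} of the original Gaussian weights $U,V,W^j$. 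The correlation function then factors into a bounded sign/indicator expectation (taking $\cO(1)$ distinct values, all independent of $n$) times the exact deep-linear Gaussian integral, and one bounds directly by the linear-network correlation function and Theorem~\ref{thm:main}. This trick buys exact independence rather than conditional Gaussianity with corrections, and eliminates the diagram-by-diagram verification you identified as the obstacle; if you want to complete your conditioning route instead, you would need to control the mean insertions and the concentration of $\|\sigma(h^l)\|^2$ uniformly over all diagrams, which is a substantially harder bookkeeping task than the Hanin decoupling.
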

For case (1), the idea behind the proof is to put an asymptotic bound on the ReLU network in terms of a corresponding deep linear network.
For case (2), the basic idea is that each network function contains a single sum over the width, and by keeping track of these sums using Feynman diagrams we are able to bound the asymptotic behavior.
We refer the reader to Appendix~\ref{app:non-lin} for details. 

\subsection{Numerical experiments}

Table~\ref{tab:empscaling} lists asymptotic bounds on several correlation functions, derived using Conjecture~\ref{conj:main}.
These are compared against the asymptotic behavior computed using numerical experiments.
In addition to the results presented here, we performed experiments using the same correlation functions and experimental setup, but with weights sampled uniformly from $\{\pm 1\}$ instead of from a Gaussian distribution.
The results are shown in Appendix~\ref{app:nonGauss}.
In all cases tested, we found that Conjecture~\ref{conj:main} holds.
In most cases, we find that the bound is tight.
For cases where the bound is not tight, a tight bound can be obtained using the complete Feynman diagram analysis presented below.
\begin{table}[ht!]
 \centering
 \bgroup
 \def\arraystretch{1.5} 
 \begin{tabular}{c|cc|ccc}
   Correlation function $C$ & $n_e,n_o$ & $s_C$ & lin. & ReLU & tanh \\
   \hline
   $\lexpp{\theta} f(x_1) f(x_2) \rexp$
                        & 0,2 & 0 & -0.02 & 0.003 & -0.02 \\
   $\lexpp{\theta} f(x_1) f(x_2) f(x_3) f(x_4) \rexp$
                     & 0,4 & 0 & -0.01 & 0.03 & -0.03 \\
   $\sum_\mu \lexpp{\theta} \dho_\mu f(x_1) \dho_\mu f(x_2) \rexp$
                     & 1,0 & 0 & 0.00 & 0.00 & 0.00 \\
   $\sum_{\mu,\nu} \lexpp{\theta}
   \dho_\mu f(x_1) \dho_\nu f(x_2) \dho_{\mu,\nu} f(x_3) f(x_4)
   \rexp$
                     & 0,2 & -1 & -0.98 & -1.03 & -1.01 \\
   $\sum_{\mu,\nu,\rho} \lexpp{\theta}
   \dho_\mu f(x_1) \dho_\nu f(x_2) \dho_\rho f(x_3)
   \dho_{\mu,\nu,\rho} f(x_4)
   \rexp$
                     & 1,0 & -1 & -2.01 & -2.01 & -0.98 \\
   $\sum_{\mu,\nu,\rho,\sigma} \lexpp{\theta}
   \dho_\mu f(x_1) \dho_\nu f(x_2) \dho_{\mu,\nu} f(x_3)
   \dho_\rho f(x_4) \dho_\sigma f(x_5) \dho_{\rho,\sigma} f(x_6)
   \rexp$
                     & 0,2 & -2 & -2.05 & -2.01 & -1.99
 \end{tabular}
 \egroup
 \caption{Examples of bounds on correlation functions obtained from Conjecture~\ref{conj:main}. 
   The 3 right-most columns list numerical results for fully-connected networks with 3 hidden layers and with linear, ReLU, and tanh activations.
   The numerical results are obtained by computing the correlation functions for networks with widths $2^7,2^8,\dots,2^{13}$, each averaged over 1,000 initializations, and fitting the exponent.
   Inputs are chosen to be random vectors of dimension 4.
 }
 \label{tab:empscaling}
\end{table}

\section{Applications to training dynamics}
\label{sec:applications}

In this section we apply Conjecture~\ref{conj:main} to study the evolution of wide networks under gradient flow and gradient descent.
We begin by briefly reviewing existing results.
\newcommand{\Dtr}{D_{\mathrm{tr}}}
Let $\Dtr$ be a training set of size $M$, and let $L = \sum_{(x,y) \in \Dtr} \ell(x, y)$ be the MSE loss, with single sample loss $\ell(x,y) = \frac{1}{2} (f(x) - y)^2$.
The gradient flow equation is $\frac{d\theta}{dt} = -\nabla_\theta L$.
The evolution of the network function under gradient flow is given by
\es{NTK}{
  \frac{df(x)}{dt}=-\sum_{(x',y') \in \Dtr}\Theta(x,x') \frac{\dho \ell(x',y')}{\dho f}  \,.
}
Here, $\Theta$ is the Neural Tangent Kernel (NTK), defined by 
$ \Theta(x_1, x_2) := \nabla_\theta f^T(x_1) \nabla_\theta f(x_2) $.
The authors of \cite{ntk} showed that the kernel is constant during training up to $\cO(n^{-1/2})$ corrections.
This leads to a dramatic simplification in training dynamics \pcite{ntk,2019arXiv190206720L}.
In particular, for MSE loss the network map evaluated on the training data evolves as
$f(t)=y+e^{-t\Theta^{(0)}}(f^{(0)}-y)$.\footnote{
  Here we are using condensed notation:
  $\Theta^{(0)}$ and $f^{(0)}$ are values at initialization, and $f$, $f^{(0)}$, $y$ are treated as vectors in training set space. The kernel $\Theta^{(0)}$ is a square matrix in the same space.
}
We will use our technology to derive a tighter bound on finite-width corrections to the kernel during training and present explicit formulas for the leading correction.

The following result is useful in analyzing the behavior of correlation functions under gradient flow. 
\begin{lemma}\label{lemma:timeDerivs}
  Let $C(\vec{x}) = \lexpp{\theta} F(\vec{x}) \rexp$ be a correlation function, where $\vec{x} = (x_1,\dots,x_m)$, and suppose that $C = \cO(n^{s_C})$ for $s_C$ as defined in Conjecture~\ref{conj:main}.
  Then $\lexpp{\theta} \frac{d^k F(\vec{x})}{dt^k} \rexp = \cO(n^{s_C})$ for all $k$.
\end{lemma}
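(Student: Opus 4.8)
The plan is to show that a single application of $\frac{d}{dt}$ turns a correlation function into a finite sum of correlation functions whose scaling exponents are all at most $s_C$, and then induct on $k$. Throughout, $\frac{d^kF}{dt^k}$ is understood as the function of the parameters obtained by repeatedly applying the gradient-flow equation $\frac{d\theta^\mu}{dt}=-\dho_\mu L$, and the expectation $\lexpp{\theta}\cdots\rexp$ is over the Gaussian distribution $\cP_0$.

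First I would compute $\frac{dF}{dt}$. Using $\dho_\mu L=\sum_{(x',y')\in\Dtr}(f(x')-y')\dho_\mu f(x')$ and the chain rule, $\frac{dF}{dt}=-\sum_{(x',y')\in\Dtr}(f(x')-y')\sum_\mu(\dho_\mu F)(\dho_\mu f(x'))$. The integrand $F$ is a sum of products of derivative tensors with indices paired by Kronecker deltas, so by the Leibniz rule $\dho_\mu F$ is a sum of terms, in each of which exactly one tensor $T_{\dots}(x_a)$ acquires an extra index $\mu$. Contracting this fresh index with the lone index of $\dho_\mu f(x')=T_\mu(x')$ produces, for each training pair $(x',y')$ and each choice of $a$, again the integrand of a correlation function, multiplied by the scalar $f(x')-y'$; the $f(x')$ piece of that scalar is a rank-zero derivative tensor, and $-y'$ is an $n$-independent constant. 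Hence $\frac{dF}{dt}$ is a finite sum of constants times correlation-function integrands.

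Next I would track the cluster graph. Relative to $G_C$, the term indexed by $(x',y')$ and by a tensor $v_a$ has cluster graph obtained by attaching a new degree-one vertex $w_1$ (for $T_\mu(x')$) to $v_a$, and — in the term that keeps $f(x')$ — adding one isolated vertex $w_2$. Attaching $w_1$ flips the parity of the component containing $v_a$, and $w_2$ contributes one extra odd component. A short case check using $s_C=n_e+\tfrac{n_o}{2}-\tfrac{m}{2}$ then shows that the new exponent equals $s_C$ when $v_a$ lies in an odd-size cluster and $s_C-1$ when it lies in an even one — in either case at most $s_C$. Granting the bound \eqref{eq:s} for each of these correlation functions, every summand of $\frac{dF}{dt}$ is $\cO(n^{s_C})$, hence so is $\lexpp{\theta}\frac{dF}{dt}\rexp$. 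Since the same step applies to each summand, an induction on $k$ shows $\frac{d^kF}{dt^k}$ is a finite sum of constants times correlation-function integrands with exponents $\le s_C$, giving $\lexpp{\theta}\frac{d^kF}{dt^k}\rexp=\cO(n^{s_C})$ for all $k$.

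I expect the main obstacle to be the bookkeeping in the cluster-graph step: checking in every parity case that adding a pendant, and possibly an isolated vertex, never increases $n_e+\tfrac{n_o}{2}-\tfrac{m}{2}$. A secondary, smaller point is that the $-y'$ terms produce objects with an odd number of input points, which lie outside Definition~\ref{def:corr}; here I would note that the bound \eqref{eq:s} and its Feynman-diagram proof extend verbatim to this mild generalization (or, alternatively, that such odd terms vanish by the reflection symmetry $\theta\to-\theta$), so the reduction still goes through.
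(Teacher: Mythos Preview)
Your proposal is correct and is essentially the paper's argument: the main-text $k{=}1$ proof is exactly your parity case-check on the cluster graph, and for general $k$ the paper simply repackages that check as a subgraph lemma (adding a vertex or an edge to a graph never increases $n_e+\tfrac{n_o}{2}-\tfrac{m}{2}$) and applies it to every term of the expanded $k$-fold time derivative, which is equivalent to your induction. One small caveat: your parenthetical alternative of killing the $-y'$ terms via $\theta\mapsto -\theta$ is not reliable (for a depth-$d$ linear network the integrand has total degree $m(d{+}1)-k_m$, which can be even when $m$ is odd), so rely on your primary fix that the bound \eqref{eq:s} and its proof extend verbatim to odd $m$; the paper does the same implicitly.
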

Here we prove the statement for $k=1$. Appendix~\ref{sec:gradientFlow} contains a proof for the general case.
\begin{proof}[proof ($k=1$).]
  Let $C$ have $n_e$ even clusters and $n_o$ odd clusters.
  Consider the correlation function
  \begin{align}
    \lexpp{\theta} \frac{dF(\vec{x})}{dt} \rexp = - \sum_{\mu} \sum_{x'\in\Dtr} \lexpp{\theta} \frac{\dho F(\vec{x})}{\dho \theta^\mu} \frac{\dho f(x')}{\dho \theta^\mu} f(x') \rexp \,.
  \end{align}
  Denote by $n_e'$ ($n_o'$) the number of even (odd) clusters in this correlation function, which has $m'=m+2$ derivative tensors.
  One can check that either $(n_e',n_o')=(n_e+1,n_o)$ or $(n_e',n_o')=(n_e-1,n_o+2)$, depending on whether the $\dho_\mu F$ derivative is acting on an odd or even cluster in $F$.\footnote{Here we are extending the use of the term cluster to refer to derivative tensors in the integrand itself.}
  Therefore, $n'_e + \frac{n'_o}{2} - \frac{m'}{2} \le s_C$.
\end{proof}

With this result, it is easy to understand the constancy of the NTK at large width.
The first derivative of the NTK is given by
\es{ntk_diff}{
\lexpp{\theta} \frac{d\Theta(x_{1},x_{2})}{dt} \rexp &
\,=\,-\sum_{x'\in\Dtr}^{M}\sum_{\mu,\nu}
\lexpp{\theta}
\frac{\partial^{2}f(x_1)}{\partial\theta^{\mu}\partial\theta^{\nu}}\frac{\partial f(x_2)}{\partial\theta^{\mu}}\frac{\partial f(x')}{\partial\theta^{\nu}}f(x')
\rexp + (x_{1}\leftrightarrow x_{2}) \,.
}
This correlation function has $n_e=0$, $n_o=2$, and $m=4$, and is therefore $\cO(n^{-1})$ by Conjecture~\ref{conj:main}.
By Lemma~\ref{lemma:timeDerivs}, all higher-order time derivatives of the NTK are $\cO(n^{-1})$ as well.
If we now assume that the time-evolved kernel $\Theta(t)$ is analytic in training time $t$, and that we are free to exchange the Taylor expansion in time with the large width limit, then we find that $\lexpp{\theta} \Theta(t)-\Theta(0) \rexp = \sum_{k=1}^\infty \frac{t^k}{k!} \lexpp{\theta} \frac{d^k\Theta(0)}{dt^k} \rexp = \cO(n^{-1})$ for any fixed $t$.
This bound, which is tighter than that found in \cite{ntk}, was noticed empirically in \cite{2019arXiv190206720L} as well as in our own experiments, see Figure~\ref{fig:ntk_diff_evo}.

This analysis can be extended to show that $\lexpp{\theta} \Theta(t) - \Theta(0) \rexp = \cO(n^{-1})$ for SGD as well. The technique is similar, and again relies on Conjecture \ref{conj:main}. We refer the reader to Appendix~\ref{app:ntk} for details.
These results improve on existing ones in several ways.
  Our method applies to the case of SGD as well as to networks where all layer widths are increased simultaneously --- a setup that has proven to be difficult to analyze.
  In addition, the $\cO(n^{-1})$ bound we derive on kernel corrections during SGD is empirically tight, and improves on the existing bound of $\cO(n^{-1/2})$ which was derived for gradient flow \pcite{ntk}.

\subsection{Finite width corrections}

Next, we will compute the explicit time dependence of $\Theta$ and $f$ at order $\cO(n^{-1})$ under gradient flow.
This is the leading correction to the infinite width result.
We define the functions $O_1(x):=f(x)$ and
\es{operator_def}{
  O_{s}(x_1,\ldots,x_{s}):=\sum_{\mu}\frac{\partial O_{s-1}(x_1,\ldots,x_{s-1})}{\partial\theta_{\mu}}\frac{\partial f(x_{s})}{\partial\theta_\mu}\,,
  \quad s \ge 2 \,.
}
Notice that $O_2 = \Theta$ is the kernel.
It is easy to check that
\es{short_tower}{
\frac{dO_s(x_1,\ldots,x_{s})}{dt}& =-\sum_{(x',y')\in\Dtr} O_{s+1}(x_1,\ldots,x_{s},x')(f(x')-y')\,, \quad s \ge 1 \,.
}
    We can use equations \eqref{operator_def} and \eqref{short_tower} to solve for the evolution of the kernel and network map. Notice that each function $O_s$ has $s$ derivative tensors and a single cluster. As a result, correlation functions involving operators with larger $s$ are increasingly suppressed in width.
In particular, $\lexpp{\theta}dO_4/dt\rexp=-\sum_{x\in\Dtr}\lexpp{\theta}O_5(x)f(x)\rexp = \cO(n^{-2})$ at all times, using Conjecture~\ref{conj:main} and Lemma~\ref{lemma:timeDerivs}. Thus to solve for $f$ and $\Theta$ at $\mathcal{O}(n^{-1})$ we can set $O_{s}=0$ for all $s\geq5$. 

Let us denote the time-evolved kernel by $\Theta(t)=\Theta^{(0)}+\Theta_{1}(t)+\mathcal{O}(n^{-2})$, where $\Theta^{(0)}$ is the kernel at initialization, and $\Theta_1(t)$ is the $\cO(n^{-1})$ correction we are seeking.
Integrating equations \eqref{short_tower} starting with $s=4$, we find
\es{Theta1}{
\Theta_{1}(x_1,x_2; t)=&-\int_{0}^{t}dt^{\prime}\sum_{x\in D_{\textrm{tr}}}O_3^{(0)}(x_1,x_2,x)\Delta f(x;t^{\prime})\nonumber\\
&+\int_{0}^{t}dt^{\prime}\int_{0}^{t'}dt^{\prime\prime}\sum_{x,x'\in D_{\textrm{tr}}}O_4^{(0)}(x_1,x_2,x,x')\Delta f(x';t^{\prime\prime})\Delta f(x;t^{\prime}) \,.
}
Here we have introduced the notation $\Delta f(x;t)=e^{-t \Theta_0}(f^{(0)}-y)$.
A detailed derivation can be found in Appendix~\ref{app:corrections}.
There we also evaluate the integrals in \eqref{Theta1} in terms of the NTK spectrum.

To obtain the $\cO(n^{-1})$ correction to the network map (evaluated for simplicity on the training data), we further integrate \eqref{short_tower} for $s=1$ and find
\es{eq:NLO_expressions}{
    f(t) &= f_{0}(t)
    -e^{-t\Theta^{(0)}} \int_{0}^{t} dt^{\prime} e^{t^{\prime} \Theta^{(0)}} 
    \Theta_{1}
    (t^{\prime})  e^{-t^{\prime} \Theta^{(0)}} (f^{(0)}-y)+\mathcal{O}(n^{-2})\,.
  }
Here we have denote the infinite width evolution by $f_{0}(t) = y + e^{-t\Theta^{(0)}} (f^{(0)} - y)$. Figures~\ref{fig:ntk_detail_evol} and \ref{fig:Theta_detail_evol} compare these predictions against empirical results.

\begin{figure}[ht!]
    \centering
    \subcaptionbox{Mean deviation from init.\label{fig:ntk_diff_evo}}{\includegraphics[width=2.2in]{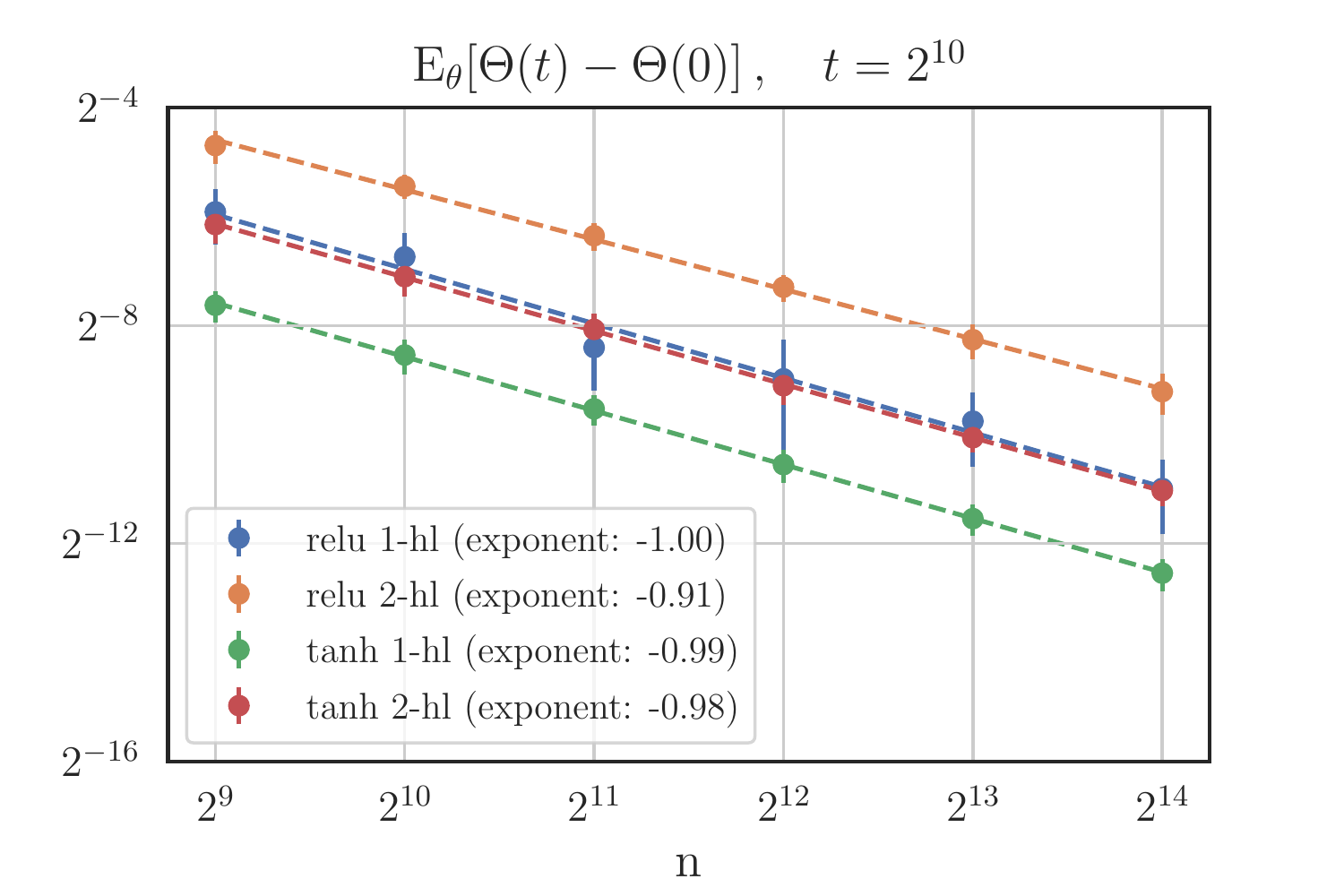}}\hspace{-1.5em}
    \subcaptionbox{$f$ beyond leading order.\label{fig:ntk_detail_evol}}{\includegraphics[width=2.2in]{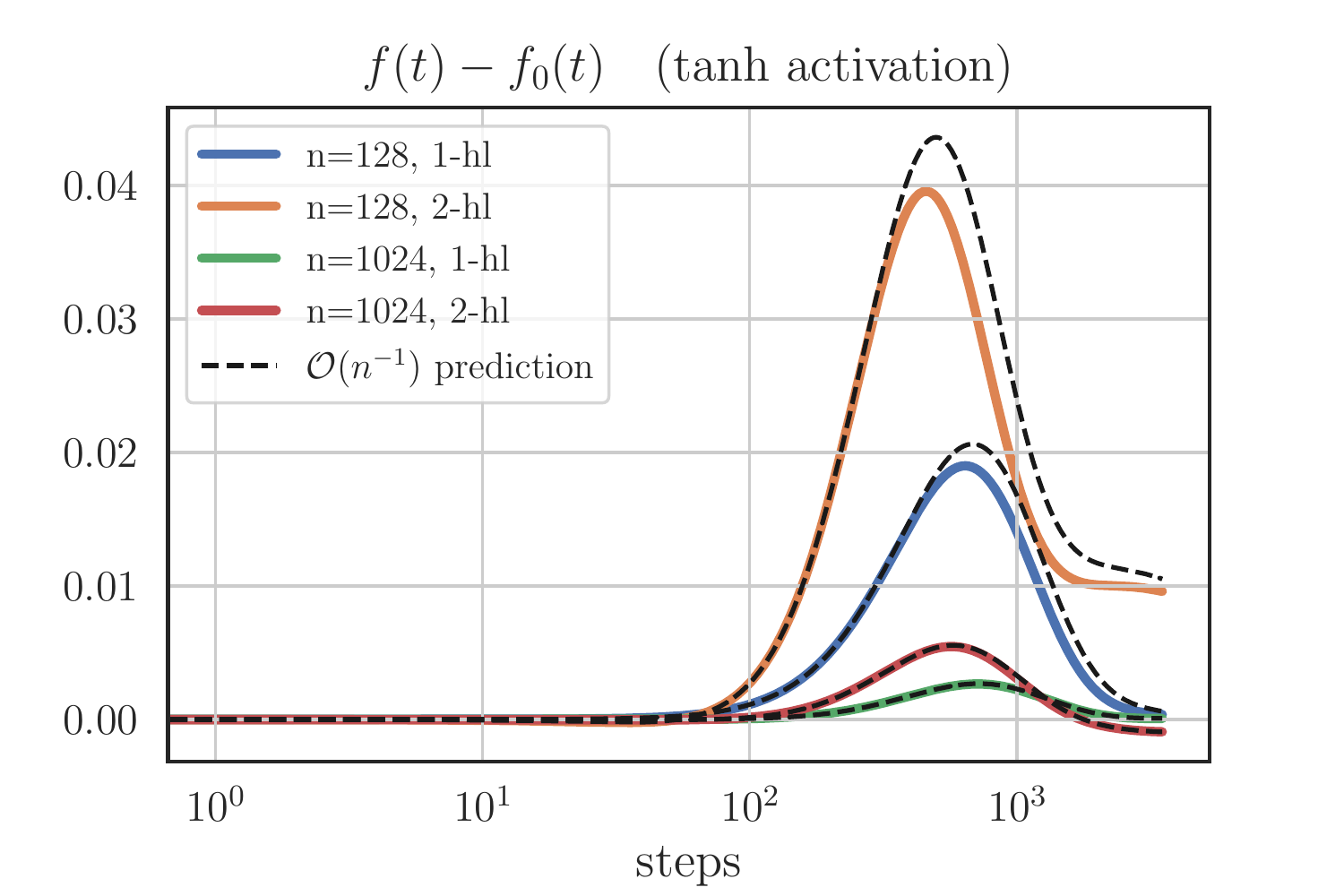}}\hspace{-1.5em}
    \subcaptionbox{Kernel evolution.\label{fig:Theta_detail_evol}}{\includegraphics[width=2.2in]{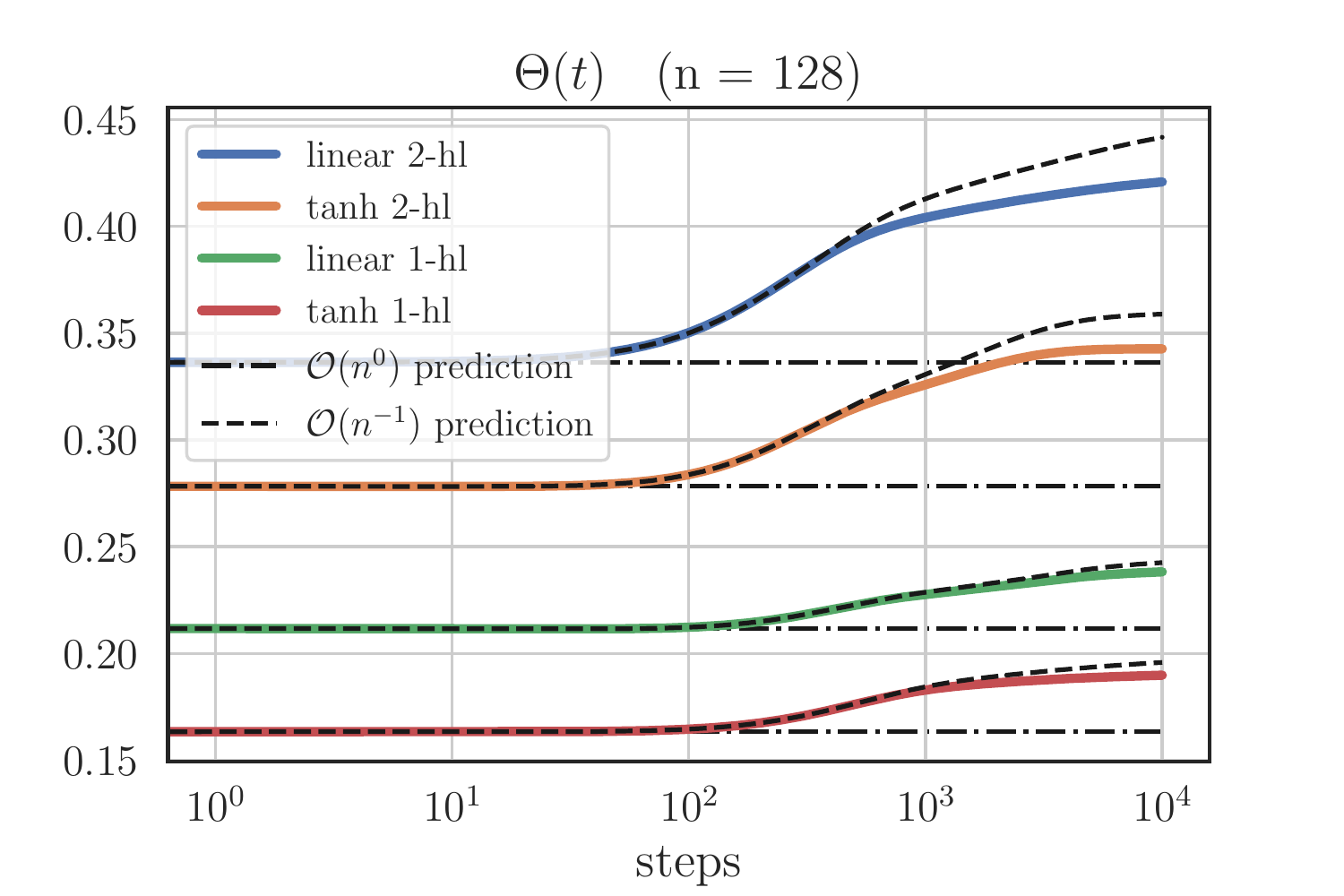}}
    \caption{Empirical verification of predicted asymptotics. 
      (a) The mean deviation of the NTK from its initial value for a variety of widths and activation functions. The fit (dashed) matches well with the predicted $\cO(n^{-1})$ asymptotics.
      (b-c) Comparison between the empirical evolution (solid) and the $\mathcal{O}(n^{-1})$ predicted evolution (dashed) for the network function and the kernel.
      All experiments were performed on two-class MNIST, computing a single randomly-chosen component of $\Theta$ or $f$.
      See Appendix~\ref{app:expDetails} for additional experimental details.
    }
    \label{fig:experiments}
\end{figure}

\section{Feynman diagrams for deep linear networks}
\label{sec:feynman}
In this section we present the Feynman diagram technique, and show how it allows us to compute the asymptotic behavior of correlation functions.
We end this section with a proof of Theorem~\ref{thm:main} for the case of networks with a single hidden layer and linear activations.

Given a correlation function $C$, we map it to a family of graphs called Feynman diagrams.
The graphs are independent of the inputs, and are defined as follows.
\begin{definition}\label{def:feynman}
  Let $C(x_1,\dots,x_m)$ be a correlation function for a network with $d$ hidden layers.
  The family $\Gamma(C)$ is the set of all graphs that have the following properties.
  \begin{enumerate}
  \item There are $m$ vertices $v_1,\dots,v_m$, each of degree $d+1$.
  \item Each edge has a type $t \in \{ U,W^1,\dots,W^{d-1},V \}$. Every vertex has one edge of each type.
  \item If two derivative tensors $T_{\mu_1,\dots,\mu_k}(x_i),T_{\nu_1,\dots,\nu_{k'}}(x_j)$ are contracted $k$ times in $C$, the graph must have at least $k$ edges (of any type) connecting the vertices $v_i,v_j$.
  \end{enumerate}
  The graphs in $\Gamma(C)$ are called the \emph{Feynman diagrams} of $C$.
\end{definition}

\paragraph{Single hidden layer.} 
For the rest of this section we focus on networks with a single hidden layer.
We refer the reader to Appendix~\ref{app:deep_linear_fd} for a full treatment of deep linear networks.
For networks with one hidden layer and linear activation, the network output is $f(x) = n^{-1/2} V^T U x$.
Consider the correlation function $C(x_1,x_2) = \lexpp{\theta} f(x_1)f(x_2) \rexp$.
We have
\begin{align}
  C(x_1,x_2) &= 
  \frac{1}{n} \sum_{i,j}^n \sum_{\alpha,\beta}^{\Din}
  \lexpp{V} V_i V_j \rexp \lexpp{U} U_{i\alpha} \, U_{j\beta} \rexp
  x_1^\alpha x_2^\beta
  = \frac{x_1^T x_2}{n} \sum_{i,j}^n \delta_{ij} \delta_{ij} 
  = x_1^T x_2 \,. \label{eq:fsqr_main}
\end{align}
As the factors of $x_1$ and $x_2$ are independent of $n$, we see that $C(x_1,x_2) = \cO(n^0)$. 
Notice that there are two relevant contributions to this answer: each factor of the network function in the integrand contributes $n^{-1/2}$, and the summed-over product of Kronecker deltas contributes $n$.
Other details, such as the input dependence, are irrelevant.
Feynman diagrams allow us to encode only those details that affect the $n$ scaling, ignoring the rest.

The set $\Gamma(C)$ for the correlation function \eqref{eq:fsqr_main} consists of a single Feynman diagram, shown in Figure~\ref{fig:f_sqr_1hl_0}.
\begin{figure}
    \centering
    \begin{subfigure}[b]{0.15\textwidth}
      \includegraphics[width=\textwidth]{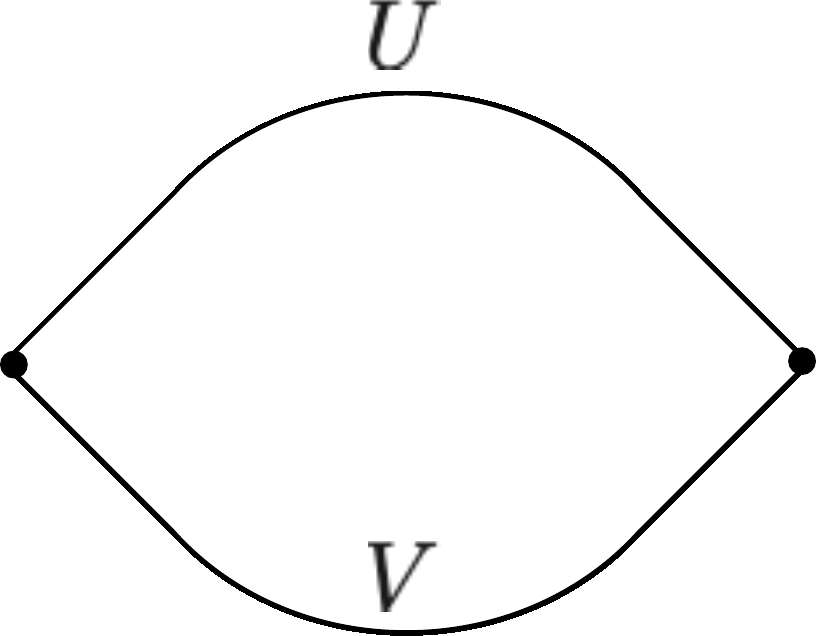}
      \caption{}
      \label{fig:f_sqr_1hl_0}
    \end{subfigure}
\quad \quad \quad \quad 
    \begin{subfigure}[b]{0.30\textwidth}
        \includegraphics[width=\textwidth]{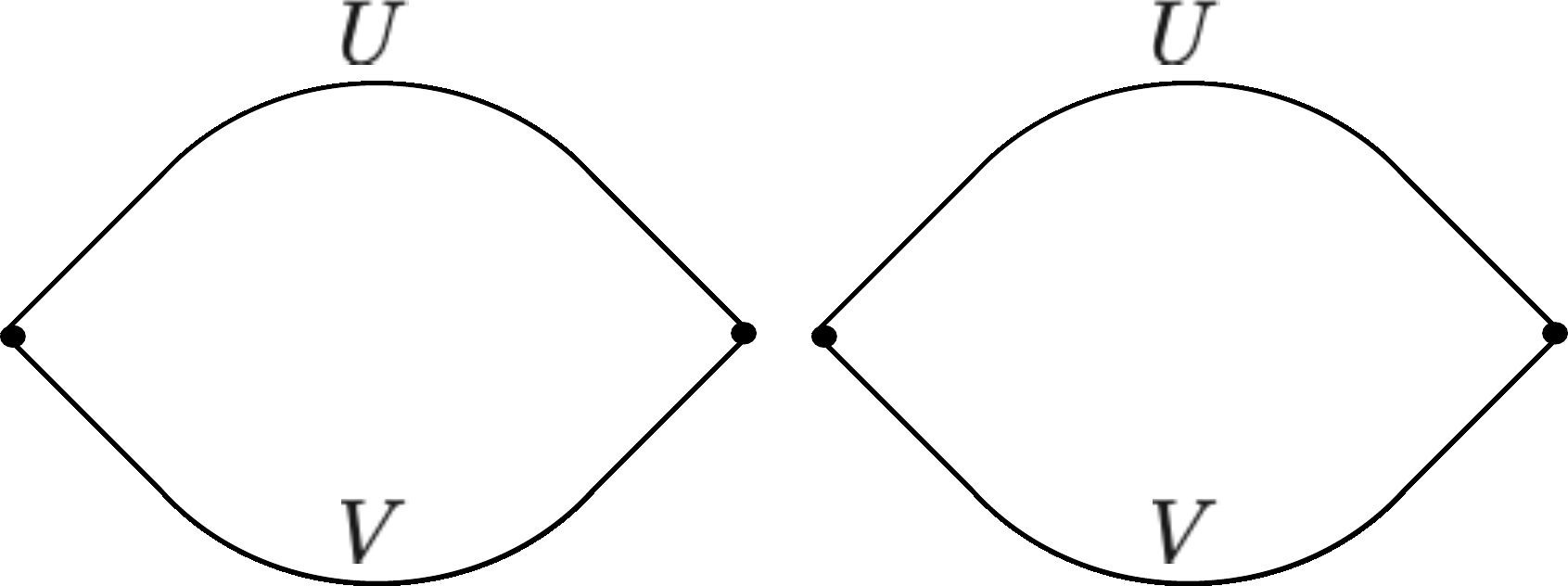}
        \caption{}
        \label{fig:f_4_1hla}
    \end{subfigure}
\quad \quad \quad \quad 
    \begin{subfigure}[b]{0.15\textwidth}
        \includegraphics[width=\textwidth]{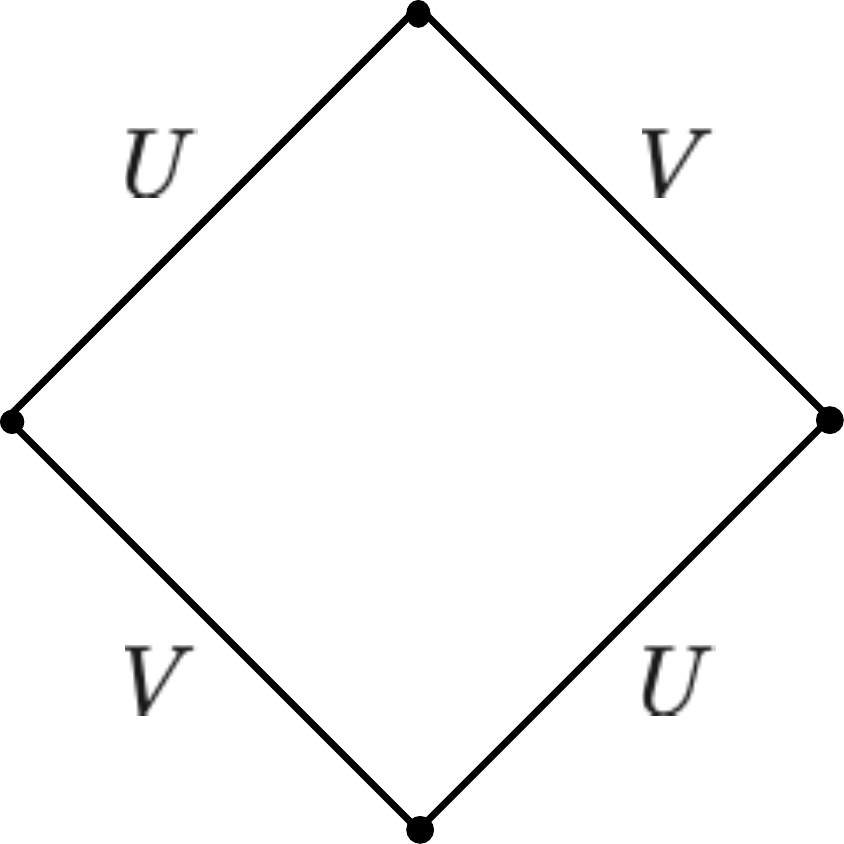}
        \caption{}
        \label{fig:f_4_1hlb}
    \end{subfigure}
    \caption{Feynman diagram examples. (a) The Feynman diagram of $\lexpp{\theta} f(x) f(x') \rexp$. (b)-(c) The feynman diagrams of $\lexpp{\theta} f(x_1) f(x_2) f(x_3) f(x_4) \rexp$;
      additional, equivalent diagrams are not shown.
    } \label{fig:dl_4_1hl_0}
\end{figure}
The asymptotic bound on a correlation function is obtained by the following result, which is due to \cite{tHooft:1973alw}.
\begin{thm}\label{thm:rules}
  Let $C(x_1,\dots,x_m)$ be a correlation function with one hidden layer and linear activation.
  Then $C = \cO(n^s)$ where $s = \max_{\gamma \in \Gamma(C)} l_\gamma - \frac{m}{2}$, and $l_\gamma$ is the number of loops in $\gamma$.
\end{thm}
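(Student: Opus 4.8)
The plan is to reduce $C$ to a Gaussian integral, apply Isserlis' theorem, and organize the resulting index sums with a double-line (``ribbon'') bookkeeping, so that each Wick pairing becomes a graph $\gamma \in \Gamma(C)$ contributing a power of $n$ equal to the number of its closed $n$-index loops minus $m/2$; this is essentially the large-$n$ counting of \cite{tHooft:1973alw} adapted to include derivative tensors.

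First I would note that $f(x) = n^{-1/2} V^T U x = n^{-1/2}\sum_{k=1}^{n} V_k \sum_{\alpha=1}^{\Din} U_{k\alpha} x^\alpha$ is bilinear in the Gaussian parameters, so all derivative tensors are elementary: up to the overall $n^{-1/2}$ one has $\dho f / \dho V_i = n^{-1/2}(Ux)_i$, $\dho f / \dho U_{i\alpha} = n^{-1/2} V_i x^\alpha$, $\dho^2 f / \dho V_i\, \dho U_{j\alpha} = n^{-1/2}\delta_{ij} x^\alpha$, while $\dho^2 f/\dho V\dho V = \dho^2 f/\dho U\dho U = 0$ and all derivatives of order $\ge 3$ vanish. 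Hence every tensor $T(x_a)$ in $C$ may be assumed to have rank $k_a \le 2$; it carries a prefactor $n^{-1/2}$, and it is a monomial in $U,V$ of degree $2-k_a$ times $\Din$-valued input factors. Thus $C = n^{-m/2}$ times a Gaussian expectation of a product of $U$'s and $V$'s, with the free parameter indices paired by $\Delta^{(\pi)}$ (noting that $\delta_{\mu\nu}$ vanishes unless $\mu,\nu$ are of the same type, so these pairings are $V$-with-$V$ or $U$-with-$U$) and the $\Din$-valued input indices paired with input vectors or with one another. Isserlis' theorem then expands this expectation into a finite sum over Wick pairings of the surviving $U$'s among themselves and $V$'s among themselves, using $\lexpp{\theta} U_{i\alpha} U_{j\beta}\rexp = \delta_{ij}\delta_{\alpha\beta}$ and $\lexpp{\theta} V_i V_j\rexp = \delta_{ij}$.

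Next I would read each resulting term as a graph $\gamma$ on the $m$ vertices $v_1,\dots,v_m$: draw a $U$-edge (resp.\ $V$-edge) between $v_a$ and $v_b$ whenever a surviving $U$ (resp.\ $V$) of $T(x_a)$ is Wick-paired with one of $T(x_b)$, or a $U$-type (resp.\ $V$-type) parameter index of $T(x_a)$ is paired with one of $T(x_b)$ by $\Delta^{(\pi)}$. By the case analysis above each vertex carries exactly one object of type $U$ and one of type $V$ (an explicit weight or a differentiated index), so $\gamma$ is $2$-regular with one edge of each type at every vertex, and has at least one $v_a$--$v_b$ edge for every contraction between $T(x_a)$ and $T(x_b)$; thus $\gamma \in \Gamma(C)$ in the sense of Definition~\ref{def:feynman}. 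I would then attach to each $U$ an $n$-valued line and a $\Din$-valued line, to each $V$ an $n$-valued line, and to each differentiated index the line(s) of its type, and observe that at every vertex the two $n$-lines are joined --- in an undifferentiated $f(x_a)$ because $V_k$ and $U_{k\alpha}$ share the index $k$, in the $k_a=1$ cases because this linkage is inherited, and in the $k_a=2$ case because of the explicit $\delta_{ij}$. Consequently the $n$-lines close up into disjoint loops, one for each cycle of the $2$-regular graph $\gamma$; their number $l_\gamma$ is precisely the loop count of the statement.

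Finally I would power-count. Each closed $n$-loop is an unconstrained sum $\sum_{k=1}^{n} 1 = n$, so the $n$-index sums contribute exactly $n^{l_\gamma}$; the $\Din$-valued lines either terminate on input vectors or close into loops of bounded size $\Din$, producing only $n$-independent factors (inner products $x_a^T x_b$ and the like). Together with the $n^{-m/2}$ prefactor, $\gamma$ contributes $\cO(n^{l_\gamma - m/2})$, and since $\Gamma(C)$ is finite, $C = \cO(n^{s})$ with $s = \max_{\gamma \in \Gamma(C)} l_\gamma - m/2$. The step I expect to be the main obstacle is exactly this vertex/edge bookkeeping for the derivative tensors: checking that differentiation never disturbs the $n^{-1/2}$ prefactor, that each differentiated index retains precisely one $n$-line of the correct type so that the ``two $n$-lines joined at each vertex'' property survives the $k_a = 1$ and $k_a = 2$ cases, that no $n$-valued index is left unsummed or counted twice, and that $U$-type and $V$-type $\Delta^{(\pi)}$-pairings are tracked separately since only the former also identify $\Din$-lines. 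Surjectivity onto $\Gamma(C)$ is not needed for an upper bound: diagrams in $\Gamma(C)$ with no corresponding Wick term simply contribute zero, consistent with $C = \cO(n^{s})$.
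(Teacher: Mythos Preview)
Your proposal is correct and follows essentially the same route the paper takes: expand via Isserlis' theorem, package each Wick term as a graph $\gamma\in\Gamma(C)$ with one $U$-edge and one $V$-edge per vertex, treat contracted derivative indices as forced edges via $\sum_k \dho_{V_k}V_i\,\dho_{V_k}V_j=\delta_{ij}=\lexp V_iV_j\rexp$, and read off the $n$-power as $l_\gamma-m/2$ from the closed $n$-index loops (the 't~Hooft counting). Your write-up is in fact more explicit than the paper's, which largely attributes the result to \cite{tHooft:1973alw} and sketches only the Feynman rules and the derivative--as--forced-edge observation; your case analysis of $k_a\in\{0,1,2\}$ and your remark that surjectivity onto $\Gamma(C)$ is unnecessary for the upper bound are both sound and fill in details the paper leaves implicit.
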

Let us give some intuition for Theorem~\ref{thm:rules}. 
Each Feynman diagram $\gamma$ encodes a subset of the terms contributing to the correlation function.
To get the asymptotic bound on the correlation function, we sum over the contributions of individual diagrams.
We can compute the asymptotic behavior of a single diagram $\gamma$ using the following \emph{Feynman rules}: (1) each vertex contributes a factor of $n^{-1/2}$, and (2) each loop contributes a factor of $n$.
Therefore, if a diagram has $l_\gamma$ loops, its contribution to the correlation function scales as $n^{l_\gamma - m/2}$.
Rule (1) is due to the explicit $n^{-1/2}$ factor in the network definition. 
Rule (2) follows from applying Isserlis' theorem, as follows.
Each covariance factor (such as the factor $\lexp V_i V_j \rexp = \delta_{ij}$ in eq. \eqref{eq:fsqr_main}) corresponds to an edge in a Feynman diagram.
A loop in the diagram corresponds to a sum over a product of Kronecker deltas, yielding a factor of $n$.

Returning to our example, the graph in Figure~\ref{fig:f_sqr_1hl_0} has two vertices and one closed loop, so we recover the asymptotic behavior $\cO(n^0)$ from Theorem~\ref{thm:rules}.
As another example, consider the correlation function $C(x_1,x_2,x_3,x_4) = \lexpp{\theta} f(x_1) f(x_2) f(x_3) f(x_4) \rexp$.
It has two Feynman diagrams, shown in Figures~\ref{fig:f_4_1hla} and \ref{fig:f_4_1hlb}.
Using the Feynman rules, we find that the disconnected graph represents terms that scale as $\cO(n^0)$, while the connected graph represents terms that scale as $\cO(n^{-1})$.
Therefore, $C(x_1,x_2,x_3,x_4) = \cO(n^0)$.
This is an example of a more general phenomenon, that connected graphs vanish faster at large $n$ compared with disconnected graphs.

\paragraph{Correlation functions with derivatives.}

We now extend the Feynman diagram technique to correlation functions that include derivatives of $f$.\footnote{We are not aware of a physics application in which such derivatives are included in a Feynman diagram description of correlation functions.
Therefore, to our knowledge our treatment of these derivatives is novel both in machine learning and in physics.}
As a concrete example, consider the correlation function $C(x,x') = \lexpp{\theta} \Theta(x,x') \rexp$, where $\Theta$ is the kernel defined in Section~\ref{sec:applications}.
For a single hidden layer, we have
\begin{align}
  C(x,x') 
  = \sum_{i=1}^n \lexpp{\theta} 
  \frac{\dho f(x)}{\dho U_i} \frac{\dho f(x')}{\dho U_i} + 
  \frac{\dho f(x)}{\dho V_i} \frac{\dho f(x')}{\dho V_i}
  \rexp \,.
\end{align}
The two derivative tensors in this correlation function are contracted: their indices are set to be equal and summed over.
Therefore, according to Definition~\ref{def:feynman}, $\Gamma(C)$ includes all diagrams in which the corresponding vertices share at least one edge.
The resulting diagrams are shown in Figure~\ref{fig:ntk_sqr_1hl_0}.
The edges forced by the contraction are explicitly marked by dashed lines for clarity, but mathematically they are ordinary edges.
We see that in fact there is only one diagram contributing to this correlation function --- the same one shown in Figure~\ref{fig:f_sqr_1hl_0}.
Following the Feynman rules, we find that $\lexpp{\theta} \Theta(x,x') \rexp = \cO(n^0)$.

The fact that contracted derivatives should be mapped to forced edges in the Feynman diagrams is proved in Appendix~\ref{app:deep_linear_fd}.
The basic reason behind this rule is the relation $\sum_k \frac{\dho V_i}{\dho V_k} \frac{\dho V_j}{\dho V_k} = \delta_{ij} = \lexp V_i V_j \rexp$.
Namely, when derivatives act in pairs they yield a Kronecker delta factor ($\delta_{ij}$), which is equal to the factor obtained from a covariance ($\lexp V_i V_j \rexp$).
While Isserlis' theorem instructs us to sum over all possible covariance configurations (and therefore over all possible edge configurations), a pair of summed derivative leads to a particular covariance factor.
Therefore, we should only consider graphs that include the edge corresponding to this covariance factor.

\begin{figure}
  \centering
\begin{subfigure}[b]{0.17\textwidth}
        \includegraphics[width=\textwidth]{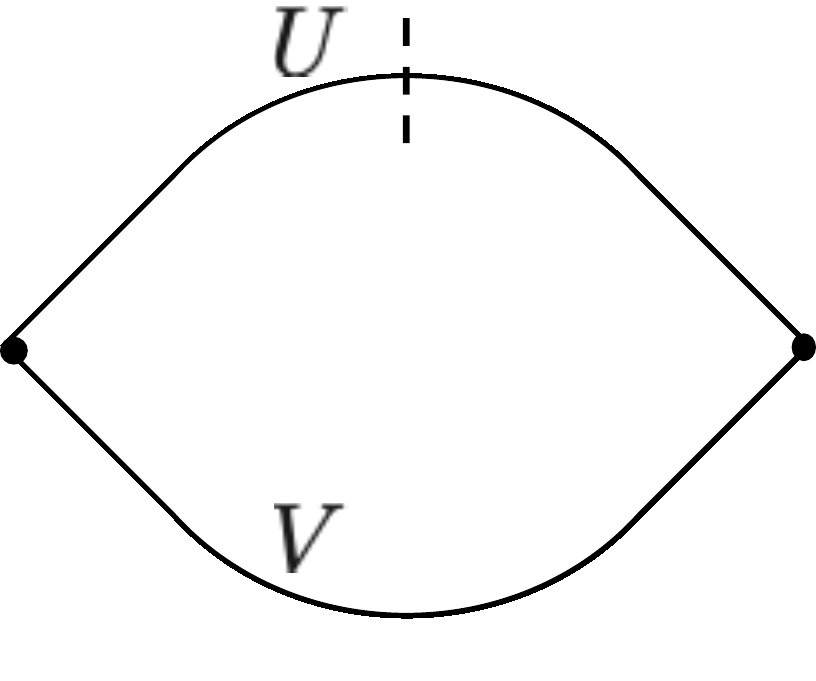}
        \caption{}
        \label{fig:df_2_1hla}
    \end{subfigure}
    \quad\quad\quad\quad\quad
\begin{subfigure}[b]{0.17\textwidth}
        \includegraphics[width=\textwidth]{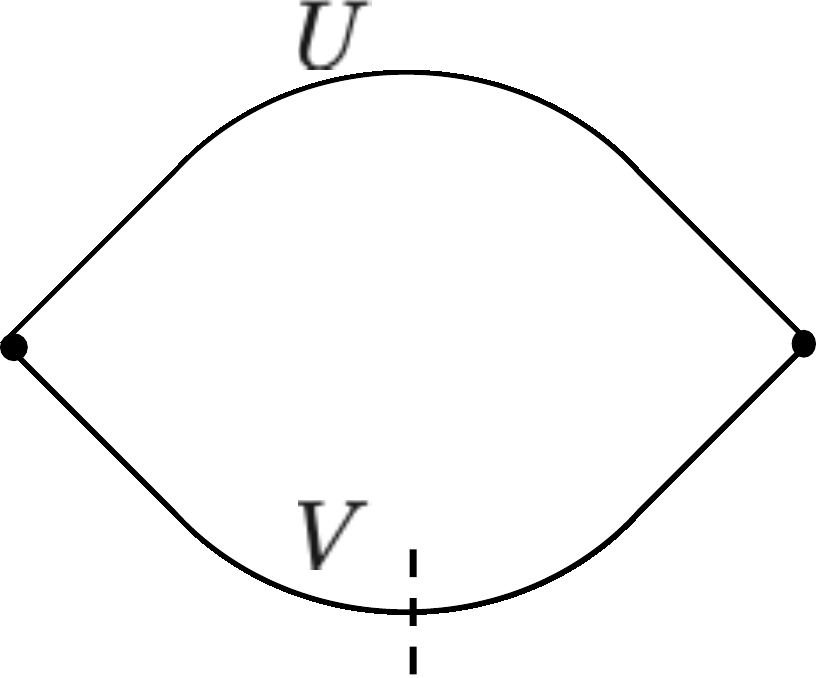}
        \caption{}
        \label{fig:df_2_1hlb}
    \end{subfigure}
  \caption{Feynman diagrams for $\lexpp{\theta} \Theta(x,x') \rexp$ with one hidden layer. The dashed vertical line represents vertices forced by contracted derivatives.}
  \label{fig:ntk_sqr_1hl_0}
\end{figure}

We are now ready to prove Theorem~\ref{thm:main} for the case of single hidden layer with linear activations.
A proof for the general case can be found in Appendix~\ref{app:deep_linear_fd}.
\begin{proof}[Proof (Theorem~\ref{thm:main}, one hidden layer).]
  Let $C(x_1,\dots,x_m)$ be a correlation function for a network with a single hidden layer and linear activation.
  Let $G_C(V,E)$ be the cluster graph of $C$, and let $\gamma(V,E') \in \Gamma(C)$ be a Feynman diagram.
  Notice that $E \subset E'$.
  Indeed, $E'$ contains an edge corresponding to each pair of contracted derivative tensors in $C$, and $E=\{(v_i,v_j) \,|\, (T(x_i),T(x_j))\mathrm{~contracted~in~ }C\}$.
  In addition, notice that $\gamma$ only contains connected components (\emph{i.e.} loops) with an even number of vertices, because every vertex has exactly one edge of each type.
  Therefore, $n_\gamma \le n_e + \frac{n_o}{2}$, where $n_\gamma$ is the number of loops in $\gamma$, and $n_e$ ($n_o$) is the number of even (odd) components in $G_C$.
  The bound is saturated when every even component of $G_C$ belongs to a different component of $\gamma$, and pairs of odd components in $G_C$ belong to different components of $\gamma$.
  From Theorem~\ref{thm:rules}, we have that $C = \cO(n^s)$ where $s = \max_{\gamma \in \Gamma(C)} n_\gamma - \frac{m}{2} \le n_e + \frac{n_o}{2} - \frac{m}{2}$.
\end{proof}

\section{Discussion}

Ensemble averages of the network function and its derivatives are an important class of functions that often show up in the study of wide neural networks.
Examples include the ensemble average of the train and test losses, the covariance of the network function, and the Neural Tangent Kernel \pcite{ntk}.
In this work we presented Conjecture~\ref{conj:main}, which allows one to derive the asymptotic behavior of such functions at large width.

For the case of deep linear networks, we presented a complete analytic understanding of the Conjecture based on Feynman diagrams.
In addition, we presented empirical and anlytic evidence showing that the Conjecture also holds for deep networks with non-linear activations, as well as for networks with non-Gaussian initialization.
We found that the Conjecture holds in all cases we tested.

The basic tools presented in this work can be applied to many aspects of wide network research, greatly simplifying theoretical calculations.
We presented several applications of our method to the asymptotic behavior of wide networks during stochastic gradient descent, and additional applications are presented in Appendix~\ref{app:app_details}.
We were able to improve upon known results by tightening existing bounds, and by applying the technique to SGD as well as to gradient flow.
In addition, we took a step beyond the infinite width limit, deriving closed-form expressions for the first finite-width correction to the network evolution.
These novel results open the door to studying finite-width networks by systematically expanding around the infinite width limit.

A central question in the study of wide networks is whether the infinite width limit is a good model for describing the behavior of realistic deep networks \pcite{2018arXiv181207956C,2019arXiv190608899G,DBLP:journals/corr/abs-1906-08034}.
In this work we take a step toward answering this question, by working out the next order in a perturbative expansion around the infinite width limit, potentially bringing us closer to an analytic description of finite-width networks.
We hope that the techniques presented here provide a basis to systematically answering these and other questions about the behavior of wide networks. 

\section*{Acknowledgements}

The authors would like to thank
Alex Alemi,
Yasaman Bahri,
Boris Hanin,
Jared Kaplan,
Jaehoon Lee,
Behanm Neyshabur,
Sam Schoenholz,
Sylvia Smullin,
and
Jascha Sohl-Dickstein
for useful discussion.
The authors would especially like to thank Ying Xiao for extensive comments on early versions of this manuscript.

\appendix

\section{Experimental details and additional results}\label{app:expDetails}

\subsection{Non-Gaussian initialization}
\label{app:nonGauss}

In Table~\ref{tab:empscaling} we listed asymptotic bounds on several correlation functions, where the model parameters were initialized from a Gaussian distribution.
Table~\ref{tab:empscalingNonGauss} shows additional results using the same experimental setup, but with weights sampled uniformly from $\{\pm 1\}$.
We again find good agreement with the predictions of Conjecture~\ref{conj:main}.

\begin{table}[ht!]
 \centering
 \bgroup
 \def\arraystretch{1.5} 
 \begin{tabular}{c|cc|ccc}
   Correlation function $C$ & $n_e,n_o$ & $s_C$ & lin. & ReLU & tanh \\
   \hline
   $\lexpp{\theta} f(x_1) f(x_2) \rexp$
                        & 0,2 & 0 & 0.00 & 0.00 & 0.02 \\
   $\lexpp{\theta} f(x_1) f(x_2) f(x_3) f(x_4) \rexp$
                     & 0,4 & 0 & -0.07 & 0.06 & -0.02 \\
   $\sum_\mu \lexpp{\theta} \dho_\mu f(x_1) \dho_\mu f(x_2) \rexp$
                     & 1,0 & 0 & 0.00 & 0.00 & 0.00 \\
   $\sum_{\mu,\nu} \lexpp{\theta}
   \dho_\mu f(x_1) \dho_\nu f(x_2) \dho_{\mu,\nu} f(x_3) f(x_4)
   \rexp$
                     & 0,2 & -1 & -1.02 & -1.01 & -0.97 \\
   $\sum_{\mu,\nu,\rho} \lexpp{\theta}
   \dho_\mu f(x_1) \dho_\nu f(x_2) \dho_\rho f(x_3)
   \dho_{\mu,\nu,\rho} f(x_4)
   \rexp$
                     & 1,0 & -1 & -2.00 & -1.99 & -2.02 \\
   $\sum_{\mu,\nu,\rho,\sigma} \lexpp{\theta}
   \dho_\mu f(x_1) \dho_\nu f(x_2) \dho_{\mu,\nu} f(x_3)
   \dho_\rho f(x_4) \dho_\sigma f(x_5) \dho_{\rho,\sigma} f(x_6)
   \rexp$
                     & 0,2 & -2 & -2.05 & -2.01 & -1.99
 \end{tabular}
 \egroup
 \caption{Examples of bounds on correlation functions obtained from Conjecture~\ref{conj:main}.
   The experimental setup is the same as in Table~\ref{tab:empscaling}, but the model parameters are sampled uniformly from $\{\pm 1\}$ instead of from a Gaussian distribution.
   We find good agreement with the theoretical predictoins, and in many cases the bound is tight.
 }
 \label{tab:empscalingNonGauss}
\end{table}

\subsection{Experimental details}

The experiments in Figure~\ref{fig:experiments} were performed on two-class MNIST, computing a single randomly-chosen component of the kernel $\Theta$.
Sub-figure (a) uses networks trained for 1024 steps with learning rate 1.0 and 1000 samples per class, averaged over 100 initializations.
Each curve in figure (b) represents a single instance of the network map evaluated on a random image over the corse of training. The models were trained with 10 samples per class and learning rate 0.1. The input to the network is normalized by the square root of the input dimension as in \pcite{ntk}
\es{norm}{
f(x) = n^{-1/2} V^T \sigma(n^{-1/2} W^{d-1} \cdots \sigma(n^{-1/2} W^1 \sigma(D_{\rm in}^{-1/2}U x)))\,.
}

\section{Feynman diagrams for deep linear networks}\label{app:deep_linear_fd}

Feynman diagrams can be used to derive asymptotic upper bounds on deep linear networks in the large width limit.
In this section we describe the method in detail, and use it to prove Theorem~\ref{thm:main}.

\subsection{Feynman diagrams and double-line diagrams}
\label{sec:dn}

In this section we build on the results of Section~\ref{sec:feynman} and consider correlation functions of deep linear networks with $d$ hidden layers.
The network function was defined in \eqref{eq:fdl}, and here we set the activation $\sigma$ to be the identity.
Definition~\ref{def:feynman} describes how to map a correlation function $C$ to $\Gamma(C)$, a family of graphs called Feynman diagrams.
The Feynman diagram method relies on Isserlis' theorem, which allows us to express arbitrary moments of multivariate Gaussian variables in terms of their covariance.
\begin{thm}[Isserlis] \label{thm:isserlis}
  Let $z=(z_1,\dots,z_l)$ be a centered multivariate Gaussian variable. For any positive integer $k$,
  \begin{align}
    \lexpp{z} z_{i_1} \cdots z_{i_{2k}} \rexp &=
    \frac{1}{2^{k} k!}
    \sum_{\pi \in S_{2k}}
    \lexp z_{i_{\pi(1)}} z_{i_{\pi(2)}} \rexp
    \lexp z_{i_{\pi(3)}} z_{i_{\pi(4)}} \rexp \cdots
    \lexp z_{i_{\pi(2k-1)}} z_{i_{\pi(2k)}} \rexp \,, \label{eq:isserlis}
    \\
    \lexpp{z} z_{i_1} \cdots z_{i_{2k-1}} \rexp &= 0
    \,.
  \end{align}
  In particular, if the covariance matrix of $z$ is the identity then
  \begin{align}
    \lexpp{z} z_{i_1} \cdots z_{i_{2k}} \rexp &=
    \frac{1}{2^{k} k!}
    \sum_{\pi \in S_{2k}}
    \delta_{i_{\pi(1)} i_{\pi(2)}}
    \delta_{i_{\pi(3)} i_{\pi(4)}} \cdots
    \delta_{i_{\pi(2k-1)} i_{\pi(2k)}} \,.
  \end{align}
\end{thm}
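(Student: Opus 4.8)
The plan is to prove the identity via the moment generating function of the Gaussian, since this route produces the fully symmetrized sum over $S_{2k}$ together with the $2^{-k}/k!$ prefactor directly, in exactly the form stated. Write $\Sigma$ for the covariance matrix of $z$, so that $M(t) := \mathbb{E}[e^{t^T z}] = \exp\!\big(\tfrac12 t^T \Sigma t\big)$ for $t \in \bR^l$. Since the Gaussian density decays like $e^{-c|z|^2}$ while $e^{t^T z}$ grows at most like $e^{C|z|}$, the function $M$ is entire, so one may differentiate under the expectation and obtain $\mathbb{E}[z_{i_1} \cdots z_{i_m}] = \partial_{t_{i_1}} \cdots \partial_{t_{i_m}} M(t)\big|_{t=0}$ for every $m$ and every choice of indices.

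Next I would expand $M(t) = \sum_{n \ge 0} \frac{1}{2^n n!} \big( t^T \Sigma t \big)^n$, where $t^T \Sigma t = \sum_{a,b} \Sigma_{ab} t_a t_b$ is homogeneous of degree $2$ in $t$. Applying $m$ partial derivatives and evaluating at $t=0$ annihilates every term of the series except the one whose total $t$-degree equals $m$: if $m = 2k-1$ is odd there is no such term and the moment vanishes, which gives the second equation; if $m = 2k$ the only survivor is $\frac{1}{2^k k!}\big( t^T \Sigma t\big)^k$.

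The remaining step is the bookkeeping for $\partial_{t_{i_1}} \cdots \partial_{t_{i_{2k}}} \big( \sum_{a,b}\Sigma_{ab} t_a t_b \big)^k \big|_{t=0}$. Expanding the $k$-th power gives $\sum \prod_{\ell=1}^k \Sigma_{a_\ell b_\ell}\, t_{a_\ell} t_{b_\ell}$, a sum of degree-$2k$ monomials in $t$; the derivative operator evaluated at $0$ records, for each such monomial, the number of bijections between the $2k$ derivative labels $i_1,\dots,i_{2k}$ and the $2k$ slots $(a_1,b_1,\dots,a_k,b_k)$ that respect the index values. Carrying out the sum over the $a_\ell,b_\ell$ then turns this into $\sum_{\pi \in S_{2k}} \Sigma_{i_{\pi(1)} i_{\pi(2)}} \cdots \Sigma_{i_{\pi(2k-1)} i_{\pi(2k)}}$, where $\pi$ encodes which label lands in which slot, and each perfect matching of $\{1,\dots,2k\}$ is hit exactly $2^k k!$ times ($2^k$ from swapping the two slots within a factor, $k!$ from permuting the factors). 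Combining with the $\frac{1}{2^k k!}$ from the series gives \eqref{eq:isserlis}, and setting $\Sigma = I$ yields the Kronecker-delta form.

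I expect the only genuinely delicate point to be the combinatorial accounting in the last step --- matching the multiset of derivative indices against the factors of $(t^T\Sigma t)^k$ while correctly tracking the symmetry factor $2^k k!$; everything else (analyticity of $M$, differentiation under the expectation, and the degree-counting that isolates a single term of the series) is routine. An alternative, induction-based argument is also available: Gaussian integration by parts (Stein's identity) gives $\mathbb{E}[z_{i_1}\cdots z_{i_{2k}}] = \sum_{r=2}^{2k} \Sigma_{i_1 i_r}\, \mathbb{E}\big[\prod_{s \neq 1,r} z_{i_s}\big]$, which is exactly the recursion ``pair $i_1$ with one of the remaining indices,'' and the claim follows by induction on $k$ (with the odd case handled by the base case $\mathbb{E}[z_{i_1}] = 0$); I would keep this in reserve as the cleaner route if the symmetrized constant need not be exhibited explicitly.
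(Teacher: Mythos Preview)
Your proof is correct. Note, however, that the paper does not supply its own proof of this statement: Theorem~\ref{thm:isserlis} is stated as the classical Isserlis theorem and used as a tool, with no argument given. So there is no paper proof to compare against. Both routes you outline --- the moment-generating-function computation that yields the $\tfrac{1}{2^k k!}\sum_{\pi\in S_{2k}}$ form directly, and the Stein-identity induction --- are standard and sound; the first matches the exact normalization appearing in \eqref{eq:isserlis}, while the second is cleaner if one only needs the sum over perfect matchings.
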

Using this theorem, a correlation function $C$ can be expressed as a sum over permutations as in \eqref{eq:isserlis}.
Each term in this sum maps to a Feynman diagram in $\Gamma(C)$.

As an example, consider the correlation function $C(x,x') = \lexpp{\theta} f(x) f(x') \rexp$ for a network with 2 hidden layers.
An explicit calculation gives
\begin{align}
  C(x,x') &= 
  \frac{1}{n^2} \sum_{i,j,k,l}^n \sum_{\alpha,\beta}^{\Din}
  \lexpp{\theta} V_i W^1_{ij} U_{j\alpha} V_{k} W^1_{kl} U_{l\beta} \rexp x_\alpha x'_\beta 
  \label{eq:fsqr0} \\ &=
  \frac{1}{n^2} \sum_{i,j,k,l}^n \sum_{\alpha,\beta}^{\Din}
  \lexpp{V} V_i V_{k} \rexp
  \lexpp{W} W^1_{ij} W^1_{kl} \rexp
  \lexpp{U} U_{j\alpha} \, U_{l\beta} \rexp
  x_\alpha x'_\beta 
  \label{eq:fsqr1} \\
  &= \frac{x^T x'}{n^2} \sum_{i,k=1}^n \delta_{ik} \delta_{ik}
  \sum_{j,l=1}^n \delta_{jl} \delta_{jl} 
  = x^T x' = \cO(n^0) \,. \label{eq:fsqr}
\end{align}
To get from \eqref{eq:fsqr0} to \eqref{eq:fsqr1}, we applied Isserlis' theorem for every choice of indices $i,j,k,l,\alpha,\beta$. 
We find that there is at most one permutation $\pi$ of the network parameters such that the covariances do not vanish.
This is because the covariance of parameters across different layers vanishes identically (for example $\lexpp{\theta}V_iW^1_{jk}\rexp=0$).
Correspondingly, this correlation function has a single Feynman diagram, shown in Figure~\ref{fig:f2_2_sl}.

For networks with one hidden layer, we saw in Section~\ref{sec:feynman} that the asymptotic behavior is determined by the number of loops in a graph.
This observation does not immediately generalize to networks with general depth, because it is not obvious how to count loops in diagrams such as the one in Figure~\ref{fig:f2_2_sl}.
The problem can be traced back to the fact that weight matrices have covariances of the form $\lexp W_{ij} W_{kl} \rexp = \delta_{ik} \delta_{jl}$ involving two Kronecker deltas, but the procedure described so far assumes that each covariance (and each edge in the graph) corresponds to a single Kronecker delta.

A similar question appeared in the context of theoretical physics, and the correct generalization is due to \cite{tHooft:1973alw}.
The idea is to treat each Feynman diagram as the triangulation of a Riemann surface, and to define the number of loops in a graph to be the number of faces of the triangulation.
In practice, this involves mapping each Feynman diagram to a new \emph{double-line diagram}: A graph in which each edge corresponds to a single Kronecker delta factor, and loops correspond to triangulation faces of the original diagram.
\begin{definition} \label{def:dl}
  Let $\gamma \in \Gamma(C)$ be a Feynman diagram for a correlation function $C$ involving $k$ derivative tensors for a network of depth $d$.
  Its \emph{double-line graph}, $\DL(\gamma)$ is a graph with $kd$ vertices of degree 2, defined by the following blow-up procedure.
  \begin{itemize}
  \item Each vertex $v^{(i)}$ in $\gamma$ is mapped to $d$ vertices $v^{(i)}_1,\dots,v^{(i)}_{d}$ in $\DL(\gamma)$.
  \item Each edge $(v^{(1)},v^{(2)})$ in $\gamma$ of type $W^l$ is mapped to two edges $(v^{(1)}_l,v^{(2)}_l)$, $(v^{(1)}_{l+1},v^{(2)}_{l+1})$.
  \item Each edge $(v^{(1)},v^{(2)})$ in $\gamma$ of type $U$ is mapped to a single edge $(v^{(1)}_1,v^{(2)}_1)$.
  \item Each edge $(v^{(1)},v^{(2)})$ in $\gamma$ of type $V$ is mapped to a single edge $(v^{(1)}_d,v^{(2)}_d)$.
  \end{itemize}
  The number of \emph{faces} in $\gamma$ is given by the number of loops in the double-line graph $\DL(\gamma)$.
\end{definition}

\begin{figure}
    \centering
    \begin{subfigure}[b]{0.3\textwidth}
      \centering
      \includegraphics[width=0.6\textwidth]{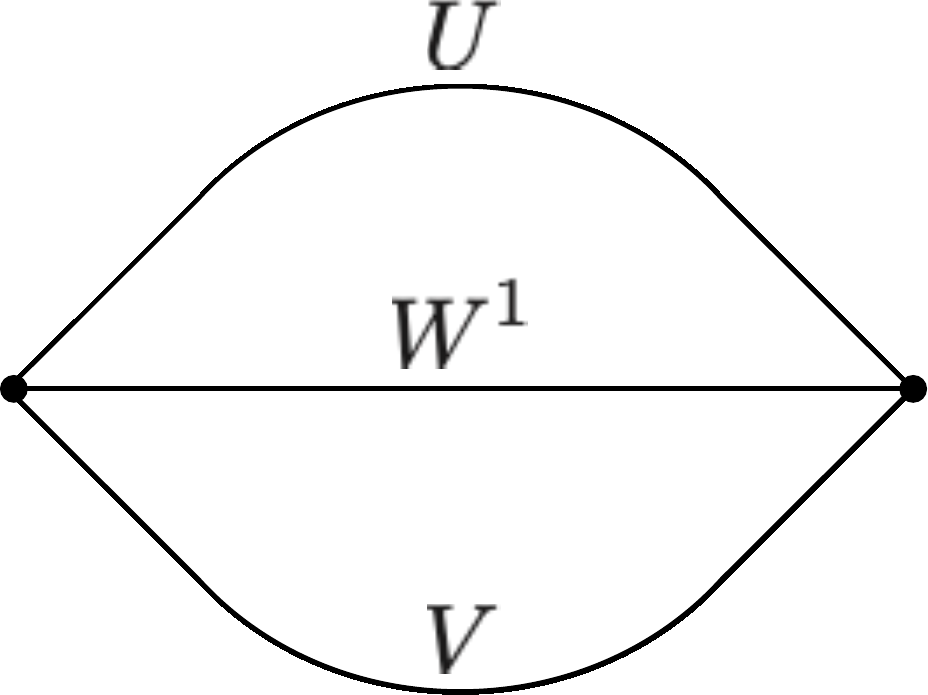}
      \caption{Feynman diagram}
      \label{fig:f2_2_sl}
    \end{subfigure}\ \ \ \ \ \ \ \ 
    \begin{subfigure}[b]{0.3\textwidth}
      \centering
      \includegraphics[width=0.57\textwidth]{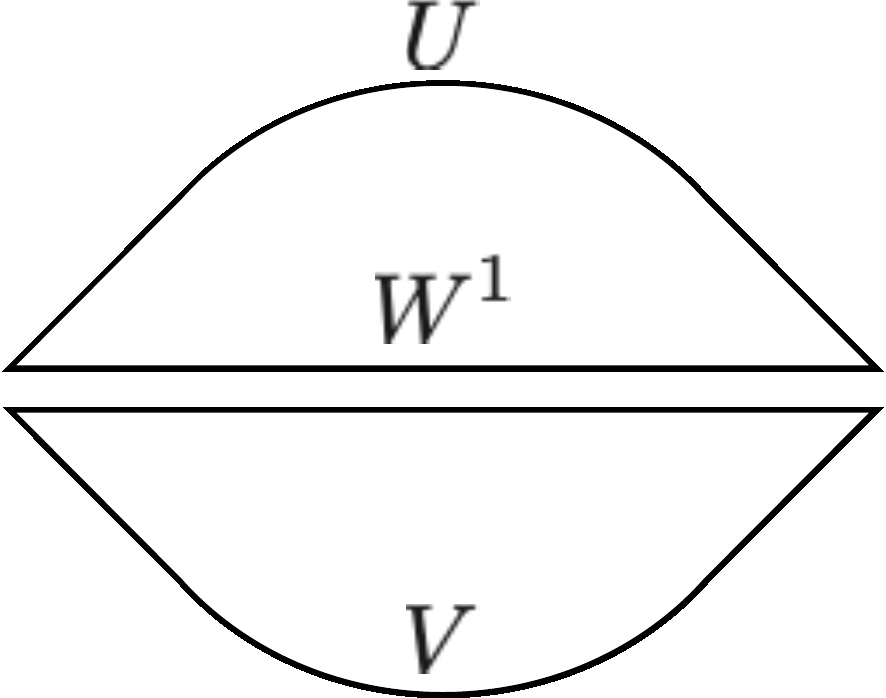}
      \caption{Double-line diagram}
      \label{fig:f2_2_dl}
    \end{subfigure}
    \caption{Feynman diagrams for $\lexpp{\theta} f(x) f(x') \rexp$ with 2 hidden layers.
      Notice that the $U,V$ edges in the Feynman diagram map to single edges in the double-line diagram, while the $W$ edge maps to a double edge.}
    \label{fig:f2_2hl_sldl}
\end{figure}
Figure~\ref{fig:f2_2hl_sldl} shows the Feynman diagram and corresponding double-line diagram for $\lexpp{\theta} f(x) f(x') \rexp$ with 2 hidden layers.
We can interpret this Feynman diagram as a triangulation of the disc: a 2-dimensional surface with a single boundary.
The triangulation has 2 vertices, 3 edges corresponding to the edges of the Feynman diagram, and 2 faces correponding to the loops of the double-line diagram.
Figure~\ref{fig:dl_2pt_4pt} shows additional examples of double-line diagrams, and Figure~\ref{fig:ntk_sqr_2hl} shows the double-line diagrams of a correlation function with derivatives.
As explained in Section~\ref{sec:feynman}, contracted derivative tensors in a correlation function $C$ map to forced edges in $\Gamma(C)$, and these are marked with dashed lines on the diagrams.
\begin{figure}[ht!]
    \centering
    \begin{subfigure}[b]{0.36\textwidth}
        \includegraphics[width=\textwidth]{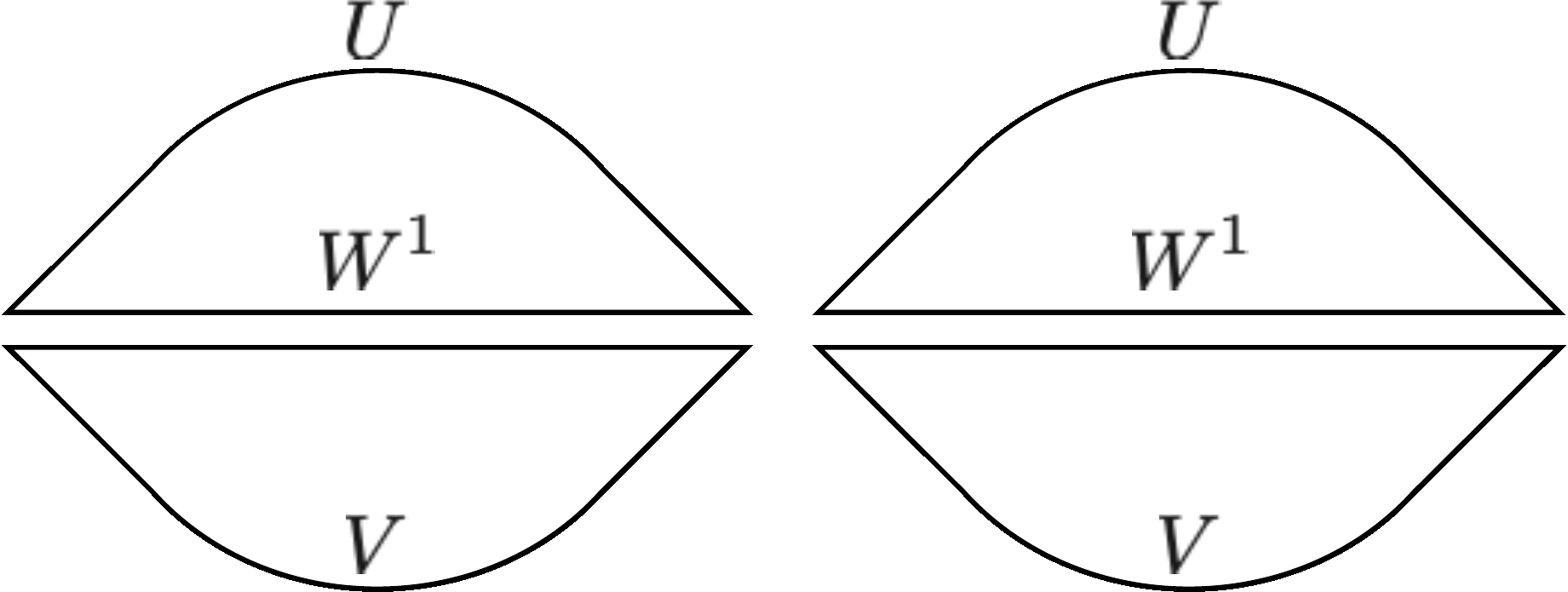}
        \caption{}
        \label{fig:dl_4_2hla_0}
    \end{subfigure}
\ \ \ \ \ \ \ \ \ \ \quad \quad \quad \quad 
    \begin{subfigure}[b]{0.18\textwidth}
        \includegraphics[width=\textwidth]{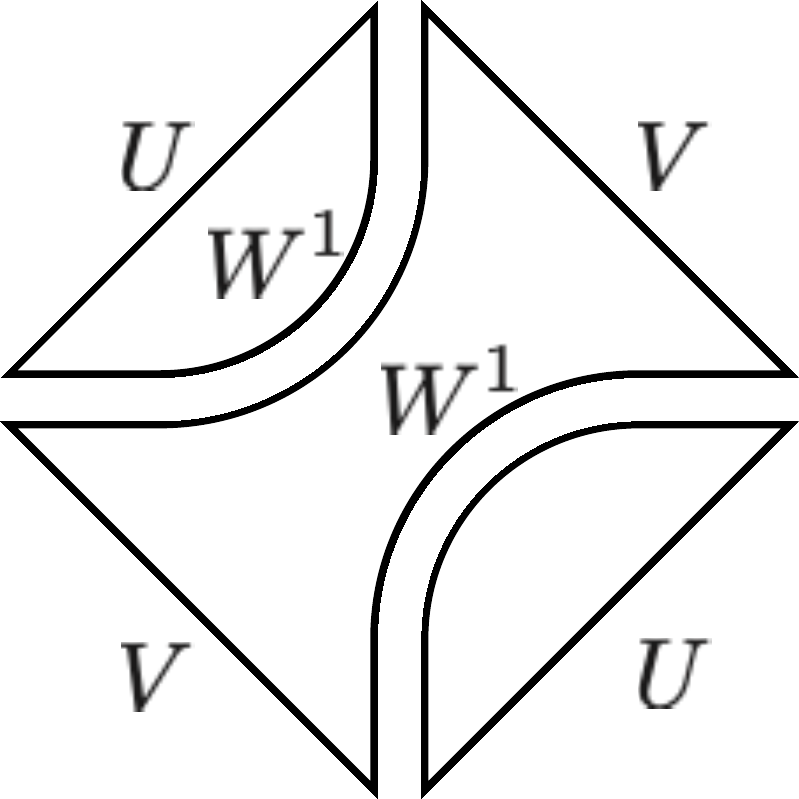}
        \caption{}
        \label{fig:dl_4_2hlb_0}
    \end{subfigure}
    \caption{Double-line diagrams for $\lexpp{\theta} f(x_1) f(x_2) f(x_3) f(x_4) \rexp$ for a deep linear network with 2 hidden layers.} \label{fig:dl_2pt_4pt}
\end{figure}
\begin{figure}[ht!]
  \centering
  \begin{subfigure}[b]{0.6\textwidth}
    \includegraphics[width=\textwidth]{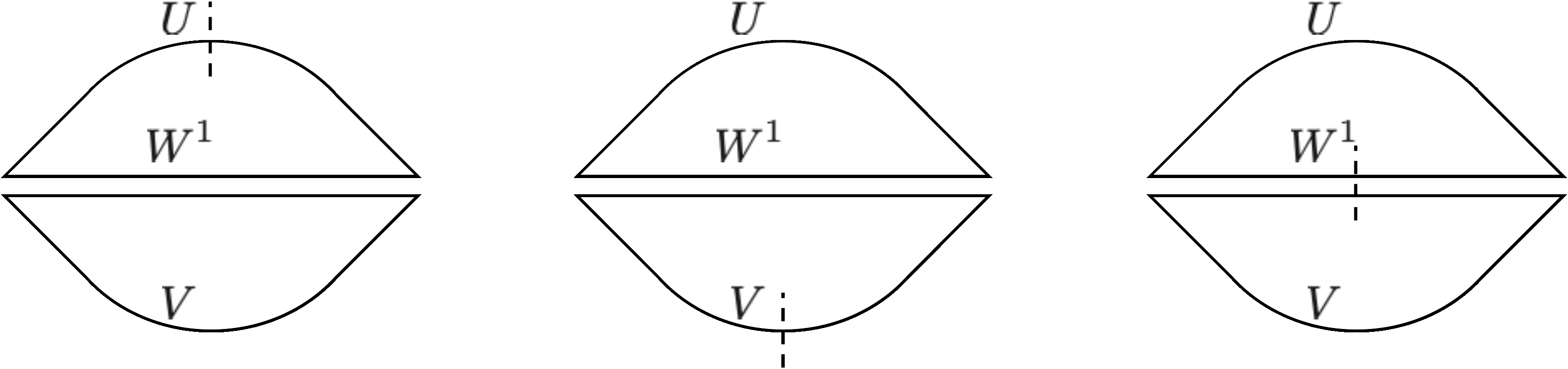}
  \end{subfigure}
  \caption{Double-line graphs describing the expectation value of the NTK, $\lexpp{\theta} \Theta \rexp$, for a deep linear network with two hidden layers.
    Crossed edges mark the edges that are forced by contracted derivatives.
    The derivative can act on either $U$, $V$, or $W^1$, and therefore there are three diagrams.
    Each diagram has 2 vertices and 2 faces, and therefore the correlation function is $\cO(n^0)$ according to the Feynman rules.}\label{fig:ntk_sqr_2hl}
\end{figure}

Generally, every Feynman diagram of a deep linear network can be interpreted as the triangulation of a 2-dimensional manifold with at least one boundary.
Intuitively, the presence of the boundary is due to the fact that the $U,V$ weights at both ends of network have only one dimension that scales linearly with the width.
As a result, in the double-line diagrams the $U,V$ edges become single lines rather than double lines.
These `missing' lines translate into fewer faces in the triangulation, and the `missing' faces can be geometrically interpreted as boundaries in the corresponding surface.\footnote{
  One can consider the \emph{cyclic model} 
  $\tilde{f}(x) = n^{-(d+1)/2} \trace \left( \tilde{V} W^{d-1} \cdots W^1 \tilde{U} \right) x$ with 1D input $x$, in which all the weight tensors $\tilde{U},\tilde{V},W^1,\dots,W^{d-1}$ are $n\times n$ matrices.
  The Feynman diagram construction for this model is similar to the deep linear case, except that all edges in the Feynman diagram map to double edges in the double-line diagrams.
  Such diagrams can be interpreted as triangulations of surfaces with no boundaries.
  Unlike the deep network diagrams, they have no `missing' loops.
  }

The discussion above is summarized by the following result, due to \cite{tHooft:1973alw}, that describes how the asymptotic behavior of a general correlation function can be computed using the Feynman rules for deep linear networks.
\begin{thm}\label{thm:feynDL}
  Let $C(x_1,\dots,x_m)$ be a correlation function of a deep linear network with $d$ hidden layers, and let $\gamma \in \Gamma(C)$ be a Feynman diagram.
  The diagram represents a subset of terms that contribute to $C$, and its asymptotic behavior is determined by the Feynman rules: the subset is $\cO(n^{s_\gamma})$ where $s_\gamma = l_\gamma - \frac{dm}{2}$, and $l_\gamma$ is the number of loops in the double-line diagram $\DL(\gamma)$.
 Furthremore, the correlation function is $C = \cO(n^s)$, where $s = \max_{\gamma\in\Gamma(C)} s_\gamma$.
\end{thm}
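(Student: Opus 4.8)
The plan is to prove the theorem by expanding $C$ explicitly via Isserlis' theorem (Theorem~\ref{thm:isserlis}) and organizing the resulting sum of Kronecker-delta products by double-line diagrams. First, since $\sigma$ is the identity, I would write $f(x) = n^{-d/2}\sum V_{i_d} W^{d-1}_{i_d i_{d-1}}\cdots W^1_{i_2 i_1} U_{i_1\alpha}\, x^\alpha$, so that $f$ is multilinear in the $d+1$ weight tensors $U,W^1,\dots,W^{d-1},V$ with overall prefactor $n^{-d/2}$. A derivative with respect to a component of one of these tensors deletes that factor (fixing its indices to those labelled by the corresponding $\theta$-index) while preserving the $n^{-d/2}$ prefactor; since $f$ has degree one in each tensor, a derivative tensor $T_{\mu_1\dots\mu_k}(x)$ vanishes unless the $\mu_j$ label components of $k$ distinct weight tensors. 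Substituting into \eqref{eq:corr}, and recalling that the $\Delta^{(\pi)}$ factor contracts every derivative index with exactly one other, each correlation function becomes $n^{-dm/2}$ times a sum over internal indices of a Gaussian moment of a monomial in weight components, times an $n$-independent function of the inputs.

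Next I would apply Isserlis' theorem to the Gaussian moment, producing a sum over Wick pairings of the surviving weight components; because parameters of different tensor types are independent, only pairings matching same-type components survive, each contributing a product of Kronecker deltas ($\delta_{ik}\delta_{jl}$ for a $W$-pair, a single $n$-range delta times an input-dimension delta for a $U$- or $V$-pair). The contracted-derivative indices must be treated in parallel: as in Section~\ref{sec:feynman} and Appendix~\ref{app:deep_linear_fd}, a pair of derivative indices contracted through $\Delta^{(\pi)}$ is nonzero only when the two indices label components of the same tensor type in two clusters, and summing over the shared parameter index produces exactly the same Kronecker-delta structure as a Wick contraction of those two tensors — this is the \emph{forced edge} rule, and one checks it carries the same $n$-weight as a Wick edge (two lines for $W^l$, one $n$-line for $U$ or $V$, with the input-dimension delta absorbed into the input factor). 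The combined data — an assignment of each cluster's derivatives to weight-tensor slots, plus a perfect matching of the remaining slots by Wick pairs — is precisely a Feynman diagram $\gamma\in\Gamma(C)$, with every vertex carrying one edge of each of the $d+1$ types.

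Then I would pass to the double-line picture. Grouping terms by their diagram $\gamma$ and performing the internal index sums, I would show the sum of Kronecker-delta products equals $n^{l_\gamma}$, where $l_\gamma$ is the number of loops of $\DL(\gamma)$: blowing up each vertex into its $d$ line-indices $i_1,\dots,i_d$, each $W^l$-edge into the two lines identifying the $i_l$- and $i_{l+1}$-slots, and each $U$- (resp.\ $V$-) edge into the single line identifying the $i_1$- (resp.\ $i_d$-) slots, the result is $2$-regular, hence a disjoint union of cycles; each cycle forces its indices equal and contributes one free sum over $\{1,\dots,n\}$. Thus the contribution of $\gamma$ is $n^{-dm/2}\cdot n^{l_\gamma}$ times $n$-independent quantities, i.e.\ $\cO(n^{s_\gamma})$ with $s_\gamma = l_\gamma - \tfrac{dm}{2}$. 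Finally, since $\Gamma(C)$ is finite and the multiplicity with which each $\gamma$ arises in the Isserlis expansion is $n$-independent (the $\tfrac{1}{2^k k!}$ prefactor exactly cancels the overcounting of each pairing), summing over diagrams gives $C=\cO(n^{\max_{\gamma\in\Gamma(C)} s_\gamma})$.

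The main obstacle I expect is the bookkeeping of the previous paragraph: establishing uniformly that Wick pairings together with derivative-slot assignments are in (multiplicity-)correspondence with elements of $\Gamma(C)$, and in particular that contracted derivatives yield genuine forced edges with the same $\delta$-structure and the same $n$-weight as Wick edges. This is the step with no physics precedent, and it is where one must verify that no extra index range (the input dimension $\Din$, or a repeated slot) slips into the count, that the slot assignment is consistent across each cluster (no two derivatives in one $f$ hitting the same tensor), and that every $\gamma\in\Gamma(C)$ is actually realized, so the stated maximum is a bona fide upper bound and not merely a term-by-term estimate. Once this diagrammatic dictionary is in place, the loop-counting step is standard 't Hooft combinatorics and should be routine.
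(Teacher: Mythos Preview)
Your proposal is correct and follows essentially the same approach as the paper. The paper does not give a formal proof of this theorem --- it attributes the result to 't~Hooft and provides only the intuition (Isserlis' theorem yields products of Kronecker deltas, each double-line loop contributes a factor of $n$, and the $n^{-d/2}$ normalization per vertex gives the $-\tfrac{dm}{2}$ term) --- and your plan is a faithful fleshing-out of that sketch, including the forced-edge treatment of contracted derivatives that the paper also highlights as the novel ingredient.
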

The intuition for the formula $s_\gamma = l_\gamma - \frac{dm}{2}$ is similar to the single hidden layer case, Theorem~\ref{thm:rules}.
The term $l_\gamma$ counts the number of factors of the form $\sum_{i_1,\dots,i_k} \delta_{i_1 i_2} \delta_{i_2 i_3} \cdots \delta_{i_k i_1} = n$ that appear in the Correlation function after applying Isserlis' theorem.
The term $\left( -\frac{dm}{2} \right)$ is due to the explicit $n^{-d/2}$ normalization of the network function.

\subsection{Asymptotics of deep linear networks}\label{app:adln}

We now prove Theorem~\ref{thm:main}.
The theorem follows from the following lemma, again due to \cite{tHooft:1973alw}, that relates the asymptotic behavior to the number of connected components in a Feynman diagram.
\begin{lemma}\label{lemma:asympBound}
  Let $C(x_1,\dots,x_m)$ be a correlation function for a deep linear network.
  Let $c_\gamma$ be the number of connected components of a graph $\gamma \in \Gamma(C)$. 
  Then $C = \cO(n^s)$, where
  \begin{align}
    s = \max_{\gamma \in \Gamma(C)} c_\gamma - \frac{m}{2} \,. \label{eq:dl_bound_total}
  \end{align}
\end{lemma}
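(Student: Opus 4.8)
The plan is to derive the bound from Theorem~\ref{thm:feynDL}, which already gives $C = \cO(n^{\max_{\gamma\in\Gamma(C)} s_\gamma})$ with $s_\gamma = l_\gamma - \frac{dm}{2}$ and $l_\gamma$ the number of loops of $\DL(\gamma)$; it therefore suffices to establish, for every Feynman diagram $\gamma\in\Gamma(C)$, the purely combinatorial inequality $l_\gamma \le c_\gamma + \frac{(d-1)m}{2}$, since this gives $s_\gamma \le c_\gamma - \frac{m}{2}$ and then taking the maximum over $\gamma$ proves the claim (we do not need it to be tight). First I would reduce to a single connected component: write $\gamma = \gamma_1\sqcup\cdots\sqcup\gamma_{c_\gamma}$ with $\gamma_a$ connected on $m_a$ vertices and $\sum_a m_a = m$; the blow-up of Definition~\ref{def:dl} is local, so $\DL(\gamma) = \bigsqcup_a \DL(\gamma_a)$ and $l_\gamma = \sum_a l_{\gamma_a}$, and it is enough to show $l_{\gamma_a} \le 1 + \frac{(d-1)m_a}{2}$ for each $a$. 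The relevant counts are immediate from Definitions~\ref{def:feynman} and~\ref{def:dl}: every vertex of $\gamma_a$ has degree $d+1$, and its single $U$-edge lies in the perfect matching formed by all $U$-edges, so $m_a$ is even and $\gamma_a$ has $E_a = \frac{(d+1)m_a}{2}$ edges; the blow-up then produces $dm_a$ degree-$2$ vertices and $dm_a$ edges in $\DL(\gamma_a)$, whose loops are precisely its connected components.

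The heart of the argument is an Euler-characteristic estimate, following the topological picture sketched after Definition~\ref{def:dl}. Viewing $\gamma_a$ as a ribbon graph --- thickening each $W^l$-edge to a band and leaving each $U$- and $V$-edge as a single strand --- gives a compact connected surface $\Sigma_a$ with nonempty boundary in which, by Definition~\ref{def:dl}, the index loops traced by the double lines are exactly the internal faces; thus $\Sigma_a$ carries a cell structure with $m_a$ vertices, $E_a$ edges and $l_{\gamma_a}$ faces. Crucially $\Sigma_a$ has at least one boundary circle: since $m_a\ge 2$ there is a $U$-edge, and its lone strand lies on $\partial\Sigma_a$. Capping off the $b_a\ge 1$ boundary circles yields a closed connected surface $\hat\Sigma_a$ with $\chi(\hat\Sigma_a) = m_a - E_a + l_{\gamma_a} + b_a$; since $\chi\le 2$ for any closed surface (orientable or not), $l_{\gamma_a} \le 2 - b_a - m_a + E_a = 2 - b_a + \frac{(d-1)m_a}{2} \le 1 + \frac{(d-1)m_a}{2}$. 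Summing over the components of $\gamma$ gives $l_\gamma \le c_\gamma + \frac{(d-1)m}{2}$, hence $s_\gamma \le c_\gamma - \frac{m}{2}$, and feeding this back into Theorem~\ref{thm:feynDL} yields $C = \cO(n^s)$ with $s = \max_{\gamma\in\Gamma(C)} c_\gamma - \frac{m}{2}$.

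I expect the main obstacle to be making the ribbon-graph dictionary airtight rather than any inequality manipulation: one must check cleanly that the loops of $\DL(\gamma_a)$ are precisely the faces of a well-defined compact surface (so Euler's formula applies with no off-by-one in the face count) and that the single-strand $U$- and $V$-edges genuinely force a boundary component, $b_a\ge 1$, rather than merely reducing the face count. Short index loops (length $1$ or $2$) and the possibility of a non-orientable $\Sigma_a$ are harmless, as they only enter through the single inequality $\chi(\hat\Sigma_a)\le 2$.
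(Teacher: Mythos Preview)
Your proposal is correct and follows essentially the same route as the paper: reduce to connected components, interpret each component as a cell decomposition of a connected surface with at least one boundary, and use the Euler-characteristic bound $\chi\le 1$ (which you obtain equivalently by capping to a closed surface with $\chi\le 2$ and $b_a\ge 1$). The only differences are cosmetic---you are slightly more explicit about the ribbon-graph/capping mechanics and about non-orientability, whereas the paper simply asserts $\chi\le 1$ for a connected surface with boundary---but the argument is the same.
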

\begin{proof}
We prove the result for 1D inputs ($\Din=1$), and it is easy to generalize to arbitrary input dimension.
Let $\gamma \in \Gamma(C)$ be a Feynman diagram and let $\gamma'$ be a connected component of $\gamma$ with $v_{\gamma'}$ vertices.
Notice that we can apply the Feynman rules of Theorem~\ref{thm:feynDL} separately to each component $\gamma'$ and find a bound $\cO(n^{s_{\gamma'}})$.
Then, $\gamma = \cO(n^{s_\gamma})$ where $s_\gamma = \sum_{\gamma'} s_{\gamma'}$, and the sum runs over the connected components of $s_\gamma$.
We will show below that $s_{\gamma'} \le 1 - \frac{v_{\gamma'}}{2}$, and therefore $s_\gamma \le c_\gamma - \frac{m}{2}$, which is sufficient to prove \eqref{eq:dl_bound_total}.

Let us prove the remaining statement about $s_{\gamma'}$.
The Euler character of the graph $\gamma'$ with $v=v_{\gamma'}$ vertices, $e$ edges and $f$ faces is $\chi = v - e + f$.
The degree of each vertex in the graph is $d+1$, and therefore $e = \frac{(d+1)v}{2}$.
Using Theorem~\ref{thm:feynDL} the graph is $\cO(n^{s_{\gamma'}})$ where $s_{\gamma'} = f - \frac{dv}{2}$.
We therefore find that $\chi = \frac{v}{2} + s_{\gamma'}$.
The graph $\gamma'$ is a triangulation of some connected surface with at least one boundary.
The Euler character for such a surface is bounded by $\chi \le 1$, and therefore $s_{\gamma'} \le 1 - \frac{v}{2}$.
\end{proof}

Let us now prove Theorem~\ref{thm:main}.
\begin{proof}
  Let $C(x_1,\dots,x_m)$ be a correlation function for a deep linear network.
  Suppose that the cluster graph $G_C$ has $n_e$ even size components and $n_o$ odd size components.
  Let $\gamma \in \Gamma(C)$ be a Feynman diagram with $c_\gamma$ connected components.
  We will show that $c_\gamma \le n_e + \frac{n_o}{2}$.
  It then follows immediately from Lemma~\ref{lemma:asympBound} that $C = \cO(n^s)$ where $s = n_e + \frac{n_o}{2} - \frac{m}{2}$, concluding the proof.

  Let us derive the bound on $c_\gamma$.
  First, all vertices that belong to a given cluster (a component of $G_C$) will also belong to the same connected component in $\gamma$.
  This is because every edge in $G_C$ is also an edge in $\gamma$ (note that $G_C$ and $\gamma$ have the same set of vertices).
  Therefore, $c_\gamma \le n_e + n_o$.
  Second, note that every connected component of the graph $\gamma$ has an even number of vertices.
  Indeed, each edge has a type $t$, and each vertex has exactly one edge of each type.
  Therefore, a connected component with $v$ vertices has $\frac{v}{2}$ edges of each type, and so $v$ must be even.
  It follows that the vertices of even clusters can form their own connected components in a Feynman diagrams, while odd clusters must be connected in sets of 2 or more to form connected components.
  The bound on $c_\gamma$ then follows.
\end{proof}

\section{Non-Linearities}
\label{app:non-lin}

In previous sections we presented the Feynman diagram method for computing the large width asymptotics of correlation functions.
In this section we show that the method applies as-is for deep networks with ReLU non-linearities and all-equal inputs, as well as to networks with a single hidden layer, a broader class of non-linearities, and arbitrary inputs.
Theorem~\ref{thm:nonlin} follows immediately from the results presented in this section.

\subsection{Deep networks with ReLU non-linearities}

\newcommand{\hW}{\hat{W}}
\newcommand{\hU}{\hat{U}}
\newcommand{\hV}{\hat{V}}
\newcommand{\hD}{\hat{D}}

The following result guarantees that the presence of ReLU non-linearities does not change the asymptotic upper bound compared with linear activations, when all inputs are the same.
\begin{thm}\label{thm:deep_relu}
  Let $f_{\rm nl}$ be a network function of the form \eqref{eq:fdl} with ReLU activation.
  Let $f$ be a network with the same architecture but with linear activation.
  Let $C(x_1,\dots,x_m;\fnl)$ be a correlation function of the deep network $\fnl$ with width $n$, and let $f$ be a deep linear network with the same width and depth.
  Suppose that all inputs are the same, $x_1 = x_2 = \cdots = x_m$.
  Then
  \begin{align}
    C(x_1,\dots,x_m;\fnl) = \cO(C(x_1,\dots,x_m;f)) \,.
  \end{align}
\end{thm}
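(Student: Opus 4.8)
The plan is to realize the ReLU network as a deep-\emph{linear}-like expression decorated with bounded, non-negative diagonal masks, and then argue that these masks cannot enhance the large-width scaling. Write ReLU as $\sigma(z)=\sigma'(z)\,z$ with $\sigma'(z)=\mathbf{1}_{z>0}\in\{0,1\}$, and note that $\sigma^{(k)}\equiv 0$ for $k\ge 2$ off a set of measure zero. Hence, up to a $\theta$-set of measure zero (irrelevant for the expectations defining $C$), $f_{\rm nl}(x)$ and every derivative tensor $T_{\mu_1\dots\mu_k}(x;f_{\rm nl})$ are obtained from the corresponding deep-linear quantities by inserting, between the layers, the diagonal matrices $D^l=\mathrm{diag}\big(\sigma'(h^l_1),\dots,\sigma'(h^l_n)\big)$ built from the pre-activations $h^l$ of input $x$; the key point is that the chain rule for ReLU never differentiates these masks. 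Thus each $T(x;f_{\rm nl})$ is a sum of ``path monomials'': a product of weight entries along a path through the network (with the $k$ differentiated factors fixed or removed) times a product of entries of the $D^l$, every such entry lying in $[0,1]$.

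First I would use the hypothesis that all inputs coincide. Then every mask $D^l$ depends only on $x$ and $\theta$, so the \emph{same} masks appear in all $m$ factors of $C(x,\dots,x;f_{\rm nl})$. Consequently, term by term in the Isserlis expansion, the ReLU correlation function is the linear one with each path monomial multiplied by a product of mask entries; the sole obstruction to quoting the linear Feynman analysis verbatim is that Isserlis cannot be applied to those Gaussian weights that also occur inside a mask. So I would split the weights into ``outer'' ones (those appearing in no mask --- for the architecture \eqref{eq:fdl} this is exactly $V$, together with any weight entry that has been differentiated away in a given monomial) and ``inner'' ones. Integrating out the outer Gaussian weights by Isserlis produces precisely the Kronecker-delta / loop structure of the Feynman diagrams of the deep-linear correlation function $C(x,\dots,x;f)$ analyzed in Theorem~\ref{thm:main}.

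For the inner weights I would integrate them out layer by layer, from the output toward the input, using Gaussian integration by parts, $\mathbb{E}[W_a\,g(W)]=\sum_b\mathrm{Cov}(W_a,W_b)\,\mathbb{E}[\partial_b g]$. The ``main'' terms, in which derivatives never land on a mask, reproduce exactly the pairings --- hence the loop count --- of the linear Feynman diagrams, while the leftover mask factors are bounded by $1$ and only improve the estimate. The ``correction'' terms, in which some derivative hits a mask, each carry an explicit factor $n^{-1/2}\sigma(h^{l-1}_j)$ coming from $\partial h^l_i/\partial W^l_{ij}=n^{-1/2}\sigma(h^{l-1}_j)$, together with a $\sigma''$ factor (a Dirac mass at $h^l_i=0$; this distributional step is best made rigorous either via the piecewise-polynomial structure of $f_{\rm nl}$ directly or by a short mollification). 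One then checks that each such ``mask hit'' is width-suppressed, so that the correction terms are $\cO(n^{-1/2})$ relative to the main terms. Summing over the finitely many diagrams and derivative assignments yields $C(x,\dots,x;f_{\rm nl})=\cO(n^{s})$, where $n^{s}$ is the large-width scaling the linear Feynman rules assign to $C(x,\dots,x;f)$, i.e. $C(x,\dots,x;f_{\rm nl})=\cO\big(C(x,\dots,x;f)\big)$.

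The main obstacle is this last step: controlling the corrections generated when integration by parts moves a derivative onto a mask. One must show that every ``mask hit'' costs a net factor of $n^{-1/2}$ --- that the explicit $n^{-1/2}$ in $\partial h^l_i/\partial W^l_{ij}$ is never outweighed by the new independent width summation the broken pairing can free --- and one must handle products of several simultaneous hits, as well as the interaction of these corrections with the surviving $0\le D^l\preceq I$ masks. Equivalently, one must verify that deleting mask-induced weight pairings from a Feynman diagram never raises its loop count by more than the accompanying explicit powers of $n^{-1/2}$ remove; I expect this to be the heart of the argument and to be organized most cleanly by the double-line / triangulation bookkeeping of Appendix~\ref{app:deep_linear_fd}.
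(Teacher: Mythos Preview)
Your approach differs substantially from the paper's, and the difference matters precisely at the point you flag as the main obstacle. The paper avoids Gaussian integration by parts (and the attendant distributional $\sigma''$ corrections) entirely, by invoking a construction of Hanin. One introduces auxiliary i.i.d.\ $\pm 1$ diagonal matrices $\xi^l,\eta^l$, sets $\hat U=\xi^1 U$, $\hat V=\eta^d V$, $\hat W^l=\xi^{l+1}W^l\eta^l$, and observes that the network $\hat f_{\rm nl}$ built from the hatted weights is equal in distribution to $f_{\rm nl}$. The payoff is that the masks $\hat D^l(x)$ associated with the hatted network, together with the sign factors $\rho^l=\eta^l\xi^l$, are \emph{independent} of the original weights $U,V,W^l$. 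With this independence the expectation factorizes cleanly: one applies Isserlis to the genuine Gaussian weights exactly as in the deep-linear case, and the remaining mask/sign expectation is bounded by an $n$-independent constant (the $\hat D^l$ entries are i.i.d.\ Bernoulli, so only $\cO(1)$ distinct values occur). There are no ``mask-hit'' correction terms to track at all.

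Your route --- direct integration by parts on the inner weights, with bookkeeping of the terms where a derivative lands on a mask --- is not wrong in spirit, but you have correctly identified that the heart of the argument is exactly the part you have not carried out. Showing that each mask hit costs a net $n^{-1/2}$, uniformly over all diagrams and over iterated hits, while simultaneously making the distributional $\sigma''$ step rigorous (mollification introduces its own width-dependent errors that must be controlled), is substantially more delicate than the Hanin decoupling, and the paper does not attempt it. If you pursued your plan to completion you would essentially be reproving the independence result by brute force; it is much cleaner to quote it and sidestep the obstacle altogether.
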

Intuitively, we will rely on the fact that for ReLU networks we can, in some cases, treat the binary neuron activations as being statistically independent of the weights.
This result is due to \pcite{hanin2018products}.
Given this result, we can bound the contribution of the binary activations, and the remaining Gaussian integral is equivalent to that found in a deep linear network.
The proof does not work for correlation functions with non-equal inputs, because in that case the independence result of \cite{hanin2018products} no longer holds.
\begin{proof}
  We can write the network function as
  \begin{align}
    \fnl(x) &= n^{-d/2} V^T D^{d}(x) W^{d-1} D^{d-1}(x) W^{d-2} \cdots D^2(x) W^1 D^1(x) U x \,,
    \\
    D^j(x) &:= H(W^{j-1} \sigma(W^{j-2} \cdots \sigma(U x))) \,,\quad j=1,\dots,d \,.
  \end{align}
  Here, $H$ is the Heaviside step function acting elementwise on its vector argument.
  We now introduce the construction from \cite{hanin2018products}.
  Let $\xi^j,\eta^j$, $j=1,\dots,d$ be diagonal matrices of dimension $n$, whose diagonal elements are $\pm 1$-Bernoulli($p$) variables with $p=\frac{1}{2}$.
  We define the new variables
  \begin{align}
    \hat{U} := \xi^1 U \,,\quad
    \hat{V} := \eta^{d} V \,,\quad
    \hat{W}^j := \xi^{j+1} W^j \eta^j \,,\quad j=1,\dots,d \,.
  \end{align}
  Let $\hfnl$ be a network function with the same architecture as $\fnl$ but using the re-defined weights.
  We define
  \begin{align}
    \hfnl(x) &= n^{-d/2} \hV^T \hD^{d}(x) \hW^{d-1} \cdots \hD^2(x) \hW^1 \hD^1(x) \hU x 
    \\ &= n^{-d/2} V^T \rho^d \hD^{d}(x) W^{d-1} \rho^{d-1} \cdots \rho^2 \hD^2(x) W^1 \rho^1 \hD^1(x) U x \,,
    \\
    \hD^{j}(x) &:= H(\hW^{j-1} \sigma(\hW^{j-2} \cdots \sigma(\hU x))) \,,\quad j=1,\dots,d \,,
    \\
    \rho^j &:= \eta^j \xi^j \,,\quad j=1,\dots,d \,.
  \end{align}
  The following was shown in \cite{hanin2018products}.
  \begin{enumerate}
  \item $\fnl$ and $\hfnl$ are equal in distribution.
  \item $\{\hD^j(x), \, \rho^j, \, j=1,\dots,d\}$ are independent of $\{U,V,W^1,\dots,W^{d-1}\}$.
  \item $\{\hD^j(x), \, j=1,\dots,d\}$ are independent of each other for fixed $x$. 
    The diagonal entries of each diagonal matrix $\hD^j(x)$ are independent, and take the values $\{0,1\}$ with probability $\frac{1}{2}$.
  \end{enumerate}
  Now, the correlation function can be written as
  \begin{align}
    C(x_1,\dots,x_m;\fnl) &= \lexpp{\theta} F(x_1,\dots,x_m;\fnl) \rexp 
    = \lexpp{\theta,\eta,\xi} F(x_1,\dots,x_m;\hfnl) \rexp
    \label{eq:fnlhfnl} \,. 
  \end{align}
  The second equality follows from the fact that $\fnl \overset{d}{=} \hfnl$.
  Let us assume for now that the correlation function has no contracted derivatives.
  The we have
  \begin{align}
    &C(x_1,\dots,x_m;\fnl) = \lexpp{\theta,\eta,\xi} \hfnl(x_1) \cdots \hfnl(x_m) \rexp
    \\ &\quad =
         \frac{1}{n^{dm/2}}
         \sum_{\vec{i}}^n \sum_{\vec{\alpha}}^{\Din}
         \lexp \prod_{l=1}^m V_{i_{l,d}} \rexp
         \left( \prod_{l'=1}^{d-1} \lexp \prod_{l''=1}^m W^{l'}_{i_{l'',l'+1},i_{l'',l'}} \rexp \right)
         \lexp \prod_{l=1}^m U_{i_{l,1},\alpha_l} \rexp E_{\vec{i},\vec{\alpha}} \,.
         \label{eq:cumb}
  \end{align}
  Here, $\vec{i} = \{i_{1,1},\dots,i_{m,d}\}$ and $\vec{\alpha} = \{\alpha_1,\dots,\alpha_m\}$, and
  \begin{align}
    E_{\vec{i},\vec{\alpha}} := 
    \lexp \prod_{l=1}^d \prod_{l'=1}^m \rho^l_{i_{l',l}} \hD^l_{i_{l',l}}(x_{l'}) \rexp
    \prod_{l=1}^m x^l_{\alpha_l} \,.
  \end{align}
  In writing the equation \eqref{eq:cumb} we used the facts that $\hD,\rho$ are independent of the parameters, and that parameters in different layers are independent.
  We now use Theorem~\ref{thm:isserlis} (Isserlis), which says that each of the expectation values over products of $V$, $W^j$, and $U$ elements is equal to a sum over permutations, where each term is a product over Kronecker delta functions --- the covariance matrices of the parameters.
  \begin{align}
    C(x_1,\dots,x_m;\fnl) = \frac{1}{n^{dm/2}}
      \sum_{\sigma_1,\dots,\sigma_{d+1} \in S_m}
      \sum_{\vec{i}}^n \sum_{\vec{\alpha}}^{\Din} \tilde{\Delta}^{\vec{\sigma}}_{\vec{i},\vec{\alpha}} E_{\vec{i},\vec{\alpha}}
    \,. \label{eq:permsum}
  \end{align}
  Here, $\tilde{\Delta}^{\vec{\sigma}}_{\vec{i},\vec{\alpha}}$ is a product of Kronecker delta functions.
  The precise form of this object will not be important for our purpose, though it can be easily derived using Theorem~\ref{thm:isserlis}.

  We can now bound the correlation function as follows.
  \begin{align}
    |C(x_1,\dots,x_m;\fnl)|
    &\le \frac{1}{n^{dm/2}}
      \sum_{\sigma_1,\dots,\sigma_{d+1} \in S_m} 
      \sum_{\vec{i}}^n \sum_{\vec{\alpha}}^{\Din} \tilde{\Delta}^{\vec{\sigma}}_{\vec{i},\vec{\alpha}}
      \left| E_{\vec{i},\vec{\alpha}} \right|
    \\ &\le
         \frac{E_{\rm max}}{n^{dm/2}}
         \sum_{\sigma_1,\dots,\sigma_{d+1} \in S_m} 
         \sum_{\vec{i}}^n \sum_{\vec{\alpha}}^{\Din} \tilde{\Delta}^{\vec{\sigma}}_{\vec{i},\vec{\alpha}}
    \\ &=
         E_{\rm max} |C(v,\dots,v;f)|
    \,.
  \end{align}
  Here, $E_{\rm max} = \max_{\vec{i},\vec{\alpha}} \left| E_{\vec{i},\vec{\alpha}} \right|$, and $v \in \bR^{\Din}$ is the all-ones vector.
  The diagonal elements of $\rho^l,\hD^l(x)$ are identical variables.
  Therefore, $E_{\vec{i},\vec{\alpha}}$ only takes an $\cO(1)$ number of values in the large width limit.
  Note that each of these values are independent of $n$, and therefore $E_{\rm max} = \mathcal{O}(1)$.
  We managed to bound the correlation function for $\fnl$ in terms of the correlation function for the corresponding linear network $f$ with fixed inputs.
  The asymptotics of the linear-network correlation function do not depend on the inputs, and therefore this concludes the proof.
\end{proof}

\subsection{Single hidden layer networks}

For networks with a single hidden layer, defined by
\es{logit_repeated}{
\fnl(x)&=\frac{1}{\sqrt{n}}\sum_{i=1}^{n}V_{i}\sigma(U_{i}^T x)\,,
}
we can extend our asymptotic analysis to smooth non-linearities.
We will show in Theorem~\ref{thrm:nl_scaling} that for any correlation function $C$, we have
\begin{align}\label{1hl_nl_scaling}
  C(x_1,\dots,x_m;\fnl) &= \cO(C(x_1,\dots,x_m;f)) \,,
\end{align}
where $f$ is a deep linear network of equal width and sufficient depth.
Therefore, computing the asymptotics using Feynman diagrams for deep linear networks yields a bound on networks with a single hidden layer and smooth non-linearities.

Before delving into the proof of this claim, we consider a few simple examples.
Let us begin with the correlation function $\lexpp{\theta} \fnl(x)\fnl(x^{\prime})\rexp$.
\es{twopoint}{
\lexpp{\theta} \fnl(x)\fnl(x^{\prime})\rexp&=\frac{1}{n}\sum_{i,j}^{n}\lexpp{\theta} V_{i}V_{j}\sigma(U_{i}^T x)\sigma(U_{j}^T x^{\prime})\rexp\\
&=\frac{1}{n}\sum_{i}^{n}\lexpp{\theta}\sigma(U_{i}^T x)\sigma(U_{i}^T x^{\prime})\rexp \,=\, \mathcal{O}(1)\,.
}
Here we used two facts.
First, the weights $V_{i}$ are unaffected by the non-linearity, so we can carry out the $V$ integral.
Second, the summand in the last equation, $\lexp\sigma(U_{i}^T x)\sigma(U_{i}^T x^{\prime})\rexp$, is independent of both $i$ and $n$ because $U_i$ are i.i.d. variables.

Next, consider the following correlation function.
For simplicity, here we set the input dimension to be $\Din=1$, and we set all inputs to 1; this does not change the asymptotics.
\es{nontrivexp}{
  \lexpp{\theta} \frac{d\Theta}{dt}\rexp&=
  \sum_{j,k}^n \lexpp{\theta} \fnl\frac{\dho^2 \fnl}{\dho U_{j} \dho V_{k}}
  \frac{\dho \fnl}{\dho U_{j}} \frac{\dho \fnl}{\dho V_{k}}\rexp\cr
  &=\frac{1}{n^{2}} \sum_{i_{1}, i_{2}}^n \lexpp{\theta}
  V_{i_{1}} V_{i_{2}} \sigma(U_{i_{1}}) \sigma^{\prime}(U_{i_{2}}) \sigma^{\prime}(U_{i_{2}}) \sigma(U_{i_{2}})\rexp\cr
  &=\frac{1}{n^{2}} \sum_{i_{1}}^n \lexpp{\theta} \sigma(U_{i_{1}})^2 \sigma^{\prime}(U_{i_{1}})^2
  \rexp\cr
  &=\mathcal{O}(n^{-1})\,.
} 
In the last line, we again used the fact that the summands are equal and independent of $n$.

In general, a correlation function $C(x_1,\dots,x_m;\fnl)$ can be reduced to the form
\es{general_term}{
C &= \frac{1}{n^{m/2}}\sum_{\alpha=1}^{K}\sum_{i_{1},\ldots,i_{r_{\alpha}}}^{n} \mathcal{S}^{(\alpha)}_{i_{1}\ldots i_{r_{\alpha}}}
\,, \\
\mathcal{S}_{i_{1}\ldots i_{r_{\alpha}}}^{(\alpha)} &:= \lexpp{U} \left(\sigma^{(\ell_{1}^{\alpha})}(U_{i_{1}}^T x_{\bar{\sigma}_{\alpha}(1)})\cdots\sigma^{(\ell_{k_{1}}^{\alpha})}(U_{i_{1}}^T x_{\bar{\sigma}_{\alpha}(k_{1}^{\alpha})})\right) \times \cdots \times
\right. \cr &\quad \qquad \,\, \left.
\left(\sigma^{(\ell^{\alpha}_{m+1-k^{\alpha}_{r_{\alpha}}})}(U_{i_{r_{\alpha}}}^T x_{\bar{\sigma}_{\alpha}(m+1-k^{\alpha}_{r_{\alpha}})})\cdots\sigma^{(\ell^{\alpha}_{m})}(U_{i_{r_{\alpha}}}^T x_{\bar{\sigma}_{\alpha}(m)})\right)F^{(\alpha)}(x_{1},\dots,x_{m})\rexp
\,.\nonumber\\
}
This form is obtained by carrying out the $V_{i}$ integrals, as well as all the sums that can be trivially carried out due to the presence of Kronecker deltas.
Here, the $\alpha$ sum represents a sum over all $K$ terms that appear from performing the $V_{i}$ integrals using Isserlis' theorem;
$\sigma^{(\ell)}$ denotes the $\ell$-th derivative of the non-linearity;
$k_{s}^{\alpha}$ is the number of $\sigma^{(\cdot)}(U^{T}_{i_{s}}x)$ factors sharing the $i_{s}$ index;
$\bar{\sigma}_{\alpha} \in S_m$ is a permutation; and $F^{(\alpha)}$ is a function whose exact form is not important for us.
We will denote by $r_{\textrm{max}}$ the maximum number of index sums appearing in \eqref{general_term}, namely $r_{\rm max}:=\max_{\alpha}r_{\alpha}$.

Our approach to establishing the asymptotic scaling will be to first bound the maximum number of index sums appearing in any term in our correlation function, written in the form \eqref{general_term}, and then to argue that the summands are bounded by an $n$-independent constant.

Let us introduce a family of diagrams, $\Gset$, which are different in general than the Feynman diagrams.
A given diagram $g\in \Gset$ is constructed as follows.
\begin{definition}\label{def:sumdiagram}
  Let $C(x_1,\dots,x_m;\fnl)$ be a correlation function, where $\fnl$ is the output of a network with one hidden layer, defined in \eqref{logit_repeated}.
  The family $\Gset$ is the set of all graphs that have the following properties.
  \begin{itemize}
  \item Each derivative tensor in $C$ is mapped to a vertex in the graph.
  \item Each edge has a type that corresponds to one of the weight vectors $U$ or $V$.
  \item Each vertex has exactly one edge of $V$ type.
  \item If two derivative tensors are contracted in $C$, the graph must have at least one edge (of any type) connecting the corresponding vertices for each contraction.
  \end{itemize}
\end{definition} 
The reason for introducing this graphical structure is that it allows us to make two important statements.
If we define $\tilde{c}_{g}$ the number of connected components in a graph $g \in \Gset$ and $\tilde{c}_{\textrm{max}} := \textrm{max}_{g\in \Gset}\tilde{c}_{g}$ then the following holds.
\begin{enumerate}
\item The maximal number of sums appearing in a correlation function of the form \eqref{general_term} is $\tilde{c}_{\textrm{max}}$.
\item $\tilde{c}_{\textrm{max}} = c_{\textrm{max}}$, where $c_{\textrm{max}}$ is the maximal number of connected components in the family of Feynman diagrams corresponding to a correlation function $C(x_1,\ldots, x_m;f_{d})$, where $f_{d}$ is a deep linear network with $d$ hidden layers, and $d$ is large enough that none of the derivative tensors vanish.\footnote{Note that any derivative tensor of $f$ that has rank greater than $d$ vanishes.}
\end{enumerate}
These two results, combined with a bound on the summands occurring in $C(x_1,\ldots,x_m;\fnl)$ will establish the bound \eqref{1hl_nl_scaling}.

\begin{lemma} \label{lemma:number_of_sums}
A correlation function $C$ has $r_{\textrm{max}} = \tilde{c}_{\textrm{max}}$. 
\end{lemma}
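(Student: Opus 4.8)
The plan is to prove $r_{\textrm{max}} = \tilde{c}_{\textrm{max}}$ by establishing a tight correspondence between the index-sum structure of the reduced form \eqref{general_term} and the connected components of diagrams in $\Gset$. First I would unpack where the sums in \eqref{general_term} come from: starting from the raw expression for $C(x_1,\dots,x_m;\fnl)$, each derivative tensor $T_{\dots}(x_a;\fnl)$ contributes a single summed hidden-layer index (call it $i_a$), together with some collection of $\sigma^{(\ell)}(U_{i_a}^T x)$ factors and $V_{i_a}$ factors, because differentiating $\fnl$ with respect to $U$ or $V$ components never introduces a new sum — it only picks out a component of an already-summed index. So before doing any integrals there are exactly $m$ index sums $i_1,\dots,i_m$, one per vertex. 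The $V$ integral is then carried out via Isserlis, producing for each permutation $\bar\sigma_\alpha$ a product of Kronecker deltas $\delta_{i_a i_b}$ pairing up the vertices that carry $V$ factors; the contracted-derivative pairs additionally force certain $\delta_{i_a i_b}$ factors directly (as explained in the $\Gset$ definition). Each such $\delta_{i_a i_b}$ identifies two sums, collapsing the count.

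The key combinatorial step is to recognize that the surviving number of independent sums $r_\alpha$ for the $\alpha$-th term equals the number of connected components of the graph on vertices $\{v_1,\dots,v_m\}$ whose edges are precisely the $\delta$-pairings of that term. I would therefore set up a bijection (or at least a cofinality argument in both directions) between the terms $\alpha$ in \eqref{general_term} and the graphs $g \in \Gset$: a $V$-edge in $g$ corresponds to a covariance pairing $\lexp V_{i_a} V_{i_b}\rexp = \delta_{i_a i_b}$, and the ``at least one edge per contraction'' condition in Definition~\ref{def:sumdiagram} encodes the forced $\delta$'s from contracted derivatives. Given that dictionary, the number of connected components $\tilde c_g$ of $g$ is exactly the number of independent index sums in the corresponding term, so maximizing over $\alpha$ on the analytic side matches maximizing $\tilde c_g$ over $g \in \Gset$ on the graph side, giving $r_{\textrm{max}} = \tilde c_{\textrm{max}}$. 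One subtlety to handle carefully: different Isserlis permutations can yield the same $\delta$-pattern, and a single $g \in \Gset$ may carry extra $U$-edges beyond what a given term's $\delta$'s require — but extra edges can only merge components, never split them, so they do not increase the sum count; what matters is that every pattern of forced/covariance $\delta$'s that actually arises is realized by some $g$, and conversely every $g$'s component structure is achievable by some choice of permutation.

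I expect the main obstacle to be the bookkeeping needed to argue that differentiation genuinely introduces no new sums and that the $V$-Isserlis pairings, together with the forced contraction deltas, exhaust all the index identifications — i.e.\ that no ``accidental'' further collapse occurs from the $U$-dependent factors $\sigma^{(\ell)}(U_{i}^T x)$ (these are kept inside the expectation $\lexpp{U}\cdot\rexp$ in \eqref{general_term} precisely so that they do \emph{not} trigger further sums to be carried out). Making this airtight requires being explicit about the structure of a general derivative tensor of $\fnl$ and about which sums ``can be trivially carried out due to the presence of Kronecker deltas'' versus which are retained. Once that is pinned down, the translation to $\Gset$ and the component count is essentially a restatement, and the equality $r_{\textrm{max}} = \tilde c_{\textrm{max}}$ follows; the separate identity $\tilde c_{\textrm{max}} = c_{\textrm{max}}$ (comparison with deep linear Feynman diagrams) is claimed as the second bullet and would be handled in the surrounding text rather than inside this lemma.
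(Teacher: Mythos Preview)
Your proposal is correct and follows essentially the same approach as the paper: each derivative tensor carries one hidden-layer index sum, the $V$-covariance pairings and forced derivative-contraction deltas collapse sums, and the number of surviving sums in a term equals the number of connected components of the corresponding graph in $\Gset$, so maximizing over terms matches maximizing $\tilde c_g$ over $g$. The paper's proof is a terse version of exactly this argument; your added care about the bijection/cofinality and the observation that extra $U$-edges can only merge components are sound elaborations of details the paper leaves implicit.
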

\begin{proof}
Each vertex corresponds to a derivative tensor, $T_{\mu_{1}\ldots\mu_{k}}$, which contains a single sum over paired $U_{i}$ and $V_{i}$ indices.
If two vertices are connected by one or more edges, there is a Kronecker delta factor that sets the corresponding indices to be equal (due either to a derivative contraction, or to a $V$ covariance), reducing the number of sums by one.
The result is that any connected component of a graph $g \in \Gset$ corresponds to a single sum, and the total numer of index sums corresponding to a particular graph is $\tilde{c}_{g}$, the number of connected components. As a result, the maximum number of sums corresponding to the collection of graphs $\Gset$ is $\tilde{c}_{\textrm{max}}$.
\end{proof}
\begin{lemma} \label{lemma:1hlconcomps}
The maximal number of connected components, $\tilde{c}_{\textrm{max}}$, over the collection of graphs $\Gset$ corresponding to $C(x_1,\dots,x_m;\fnl)$ is bounded by the maximal number of components, $c_{\textrm{max}}$, over the collection $\Gamma(C)$ corresponding to $C(x_1,\dots,x_m;f_{d})$ of Feynman diagrams, where $f_{d}$ is a deep linear network of sufficient depth $d$.
\end{lemma}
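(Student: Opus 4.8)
The plan is to prove $\tilde c_{\max} \le c_{\max}$ by showing that every sum-diagram can be ``lifted'' to an at-least-as-disconnected Feynman diagram of the deep linear network. Concretely, I would fix a graph $g \in \Gset$ attaining $\tilde c_g = \tilde c_{\max}$ and build a Feynman diagram $\gamma \in \Gamma(C)$ for $f_d$ with the property that every edge of $\gamma$ joins two vertices lying in a common connected component of $g$. From this property each connected component of $\gamma$ is contained in a connected component of $g$, so the $\gamma$-partition of the $m$ vertices refines the $g$-partition and $c_\gamma \ge \tilde c_g = \tilde c_{\max}$; hence $c_{\max} \ge \tilde c_{\max}$.

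To build $\gamma$, recall that $g$ and $\gamma$ have the same vertex set (one vertex per derivative tensor of $C$), and that a Feynman diagram of $f_d$ has two kinds of edges: ``forced'' edges, one per contracted pair of derivative indices of $C$, and ``Isserlis'' edges that pad every vertex to degree $d+1$ with exactly one edge of each type $U, W^1, \dots, W^{d-1}, V$. The first step is to \emph{color} the forced edges: assign each contracted index pair a layer type so that the types incident to any given vertex are pairwise distinct --- this is precisely the statement that $f_d$ cannot be differentiated twice with respect to one layer. This is a proper edge coloring of the contraction multigraph (vertex $v_a$ having degree $k_a = \mathrm{rank}\,T(x_a) \le d+1$), which exists once $d$ is large enough, by Shannon's (or Vizing's) edge-coloring bound for multigraphs. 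Since a forced edge joins a pair of tensors contracted in $C$, and Definition~\ref{def:sumdiagram} guarantees such a pair is joined by an edge of $g$, every forced edge of $\gamma$ already lies inside a $g$-component.

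The second step is to place the Isserlis padding edges inside $g$-components. Fix a type $l$ and a component $P$ of $g$. The forced edges of type $l$ form a partial matching, so the tensors of $P$ that carry a type-$l$ forced edge come in pairs and are even in number; and $|P|$ itself is even, because the ``one $V$-edge per vertex'' rule makes the $V$-edges of $g$ a perfect matching whose restriction to $P$ is a perfect matching of $P$. Hence the type-$l$-free tensors of $P$ are even in number and can be matched arbitrarily within $P$; taking the union over all components gives a perfect matching of the type-$l$-free tensors whose edges all stay inside $g$-components. Adding these to the colored forced edges produces a graph in which every vertex has one edge of each type and every $k$-fold contracted pair is joined by $k$ edges, so $\gamma \in \Gamma(C)$, and by construction every edge of $\gamma$ is internal to a $g$-component, as required.

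The hard part is the coloring step. One must verify the layer assignment can be made consistently (the two endpoints of a contracted pair get the same type), with all types at a vertex distinct, and using only the $d+1$ available types; this is exactly where ``$d$ sufficiently large'' is invoked, and it should really be read as $d+1$ at least the chromatic-index bound $\lfloor 3k_{\max}/2 \rfloor$ for the contraction multigraph, slightly stronger than merely asking that no derivative tensor vanish. A further point needing separate comment is a tensor contracted with itself: this is a self-loop in the contraction multigraph, which cannot be colored under the distinct-types rule (and indeed makes $C$ vanish identically for deep linear $f_d$), so such terms must be excluded or checked to satisfy the bound trivially. Given the coloring, the parity bookkeeping in the padding step and the verification that $\gamma \in \Gamma(C)$ are routine.
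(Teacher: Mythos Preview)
Your argument is correct, but it takes a different route from the paper. The paper's proof does not construct a lift $g\mapsto\gamma$. Instead it passes through the cluster graph $G_C$: since $G_C$ is a subgraph of every $g\in\Gset$, and since the $V$-edges of $g$ form a perfect matching (forcing every component of $g$ to have even size), one gets $\tilde c_g \le n_e + n_o/2$ directly. The paper then invokes the saturation remark from the proof of Theorem~\ref{thm:main} to conclude $n_e + n_o/2 \le c_{\max}$, yielding $\tilde c_{\max} \le c_{\max}$ in two short steps.

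Your approach is more constructive and, in a sense, more honest: the paper's last inequality $n_e+n_o/2\le c_{\max}$ rests on the existence of a saturating Feynman diagram, which is asserted but not carefully built there, whereas you actually produce $\gamma$. The price you pay is the edge-coloring machinery and the slightly stronger depth hypothesis $d+1\ge\lfloor 3k_{\max}/2\rfloor$ (versus the paper's implicit $d\ge k_{\max}$). Both are covered by ``sufficiently deep,'' so the lemma as stated is fine under either reading. Your flag on self-contractions is apt: for deep linear $f_d$ the multilinearity kills $\sum_\mu \partial^2 f/\partial\theta_\mu^2$, so $C(\cdot;f_d)\equiv 0$ and the bound is vacuous in that case; the paper does not address this explicitly either.
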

\begin{proof}
  Let $n_e(n_o)$ be the number of even(odd) clusters in the cluster graph $G_C$ of $C(x_1,\dots,x_m;f_d)$. The cluster graph, $G_C$ is a subgraph of any graph $g\in \Gset$. We can thus think about the embedding of even and odd clusters into $g$.
  In any graph $g \in \Gset$, an even cluster may belong to its own connected component, while for odd clusters there must be an even number of them in any connected component.
  This is because an even (odd) cluster contains an even (odd) number of factors of $V$, which must be paired up in any connected component.
  We find that
\es{tildecmax_bound}{
\tilde{c}_{\textrm{max}} = n_e+\frac{n_o}{2}\leq c_{\textrm{max}}\,.
}
The last inequality was used below Lemma~\ref{lemma:asympBound} in the proof of Theorem~\ref{thm:main}.
\end{proof}

\begin{thm}\label{thrm:nl_scaling}
Let $C(x_1,\dots,x_m;\fnl)$ be a correlation function for a one hidden layer network.
Let $c_{\rm max}$  be the maximal number of connected components over the collection of graphs $\Gamma(C)$ corresponding to $C(x_1,\dots,x_m;f_d)$, where $f_d(x)$ is a deep linear network map, with 
with depth $d$ greater than or equal to the maximum number of derivatives appearing in any single derivative tensor in $C$.
Then $C = \cO(n^s)$ where $s = c_{\rm max} - \frac{m}{2}$. Furthermore,
  \begin{align}
    C(x_1,\dots,x_m;\fnl) &= \cO(C(x_1,\dots,x_m;f)) \,.
  \end{align}
\end{thm}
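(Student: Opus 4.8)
The plan is to combine the two lemmas already in hand with a uniform bound on the summands appearing in \eqref{general_term}. First I would justify the reduction of an arbitrary correlation function $C(x_1,\dots,x_m;\fnl)$ to the canonical form \eqref{general_term}. Since the $V_i$ are unaffected by the non-linearity, carry out the Gaussian integral over the $V$'s using Isserlis' theorem (Theorem~\ref{thm:isserlis}); this replaces the product of $V$ factors by a sum over $K$ pairings, each contributing a product of Kronecker deltas. Then eliminate every width-index sum that is pinned by a Kronecker delta, regardless of whether that delta comes from a $V$ covariance or from a contracted pair of derivatives (a pair of derivatives acting on the same $V_i$ or $U_i$ produces a $\delta_{ij}$ exactly as a covariance does). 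What survives is a sum of $K$ terms, the $\alpha$-th being $n^{-m/2}$ times a sum over $r_\alpha$ free width-indices of a summand $\mathcal{S}^{(\alpha)}_{i_1\dots i_{r_\alpha}}$, which is an expectation over the $U$'s of a product of $\sigma^{(\cdot)}(U_i^T x)$ factors times an $n$-independent function of the inputs. This is precisely the structure the family $\Gset$ of Definition~\ref{def:sumdiagram} is built to track.

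Second, I would invoke the combinatorial input: Lemma~\ref{lemma:number_of_sums} gives $r_{\textrm{max}} = \tilde c_{\textrm{max}}$, and Lemma~\ref{lemma:1hlconcomps} gives $\tilde c_{\textrm{max}} = n_e + n_o/2 \le c_{\textrm{max}}$, so every term in \eqref{general_term} has at most $c_{\textrm{max}}$ free index sums, where $c_{\textrm{max}}$ is the maximal number of connected components in $\Gamma(C)$ for the deep linear surrogate $f_d$.

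Third — and this is the crux — I would show that each summand $\mathcal{S}^{(\alpha)}_{i_1\dots i_{r_\alpha}}$ is bounded by a constant independent of $n$. The key point, and where the single-hidden-layer hypothesis is really used, is that the $U_i$ are i.i.d.: the value of $\mathcal{S}^{(\alpha)}_{i_1\dots i_{r_\alpha}}$ depends on the indices only through which of them coincide, and it is an integral against a fixed Gaussian measure of dimension at most $\Din r_\alpha$, whose integrand is a product of derivatives of $\sigma$ evaluated at linear forms in the $U$'s. For $\sigma$ smooth with derivatives of at most polynomial growth (the natural hypothesis, satisfied by tanh), these Gaussian expectations are finite, and as the indices range over $\{1,\dots,n\}^{r_\alpha}$ there are only finitely many distinct values, all independent of $n$. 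Hence $E_{\textrm{max}} := \max_\alpha \max_{i_1,\dots,i_{r_\alpha}} |\mathcal{S}^{(\alpha)}_{i_1\dots i_{r_\alpha}}| = \cO(1)$.

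Finally I would assemble the bound $|C| \le n^{-m/2}\sum_{\alpha=1}^K n^{r_\alpha} E_{\textrm{max}} \le K\,E_{\textrm{max}}\,n^{r_{\textrm{max}}-m/2} = \cO(n^{c_{\textrm{max}}-m/2})$, which gives the exponent $s = c_{\textrm{max}} - \frac{m}{2}$. Since this exponent is exactly the one that Lemma~\ref{lemma:asympBound} (equivalently Theorem~\ref{thm:main}) assigns to the deep linear correlation function $C(x_1,\dots,x_m;f_d)$ with $d$ large enough that no derivative tensor vanishes, the last claim $C(x_1,\dots,x_m;\fnl) = \cO(C(x_1,\dots,x_m;f))$ follows. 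The main obstacle is the uniform summand bound: one must carefully argue both finiteness of each Gaussian expectation (needing the growth/smoothness condition on $\sigma$) and that only finitely many such values occur as $n\to\infty$; the only other delicate bookkeeping is confirming that derivative contractions are absorbed into the $\Gset$ count on the same footing as covariances, as encoded in Definition~\ref{def:sumdiagram}.
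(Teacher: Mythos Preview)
Your proposal is correct and follows essentially the same route as the paper: reduce to the canonical form \eqref{general_term}, use Lemmas~\ref{lemma:number_of_sums} and~\ref{lemma:1hlconcomps} to bound the number of free index sums by $c_{\textrm{max}}$, observe that the summands $\mathcal{S}^{(\alpha)}$ take only $\cO(1)$ many $n$-independent values because the $U_i$ are i.i.d., and assemble the bound via Lemma~\ref{lemma:asympBound}. Your write-up is more explicit about the reduction step and the growth condition on $\sigma$ needed for finiteness of the Gaussian expectations, but the argument is the same.
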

\begin{proof}
  The correlation function in \eqref{general_term} can be bound as
  \begin{align}
    |C(x_1,\dots,x_m;\fnl)| \le
    \frac{1}{n^{m/2}}
    \sum_{\alpha=1}^{K}\sum_{i_1,\ldots,i_r}^{n} \left| \mathcal{S}^{(\alpha)}_{i_{1},\ldots,i_{r_{\alpha}}} \right| \,.
    \label{termbound}
  \end{align}
  For fixed inputs, $\mathcal{S}_{i_{1},\ldots,i_{r_{\alpha}}}^{(\alpha)}$ can take at most $\cO(1)$ different values as a function of its indices, and the values are independent of $n$.
This is because the variables $U_i$ are identical.
We define $s_{\textrm{max}}$ as the maximum value of $|\mathcal{S}_{i_{1},\ldots,i_{r_{\alpha}}}^{(\alpha)}|$ as a function of $\alpha$ and the indices. Combining this with the above lemmas we can then write.
  \begin{align}
    |C(x_1,\dots,x_m;\fnl)| \leq Ks_{\textrm{max}} n^{c_{\textrm{max}}-m/2} \,.
    \label{termbound}
  \end{align}
  The result of the theorem follows from Lemma~\ref{lemma:asympBound}.
\end{proof}

\section{Correlation function asymptotics}
\label{app:asymp}

In this section we prove several general results about correlation function asymptotics in the large width limit.
Throughout this section, we assume that Conjecture~\ref{conj:main} holds.

\subsection{Variance asymptotics}

Conjecture~\ref{conj:main} can be used to bound the variance of the integrands that appear inside correlation functions.
\begin{lemma}\label{lemma:variance}
  Let $\tilde{C}(x_1,\dots,x_{m}) = \lexpp{\theta} F_\theta(x_1,\dots,x_m) \rexp$ be a correlation function, and let $G_C$ be the cluster graph of $C$ with $n_e$ even components and $n_o$ odd components.
  Assume Conjecture~\ref{conj:main} holds such that $C = \cO(n^{s_C})$, where $s_C = n_e + \frac{n_o}{2} - \frac{m}{2}$.
  Then $\mathrm{Var}_\theta \left[ F_\theta(x_1,\dots,x_m) \right] = \cO(n^{2s_C})$.
\end{lemma}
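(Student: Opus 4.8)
The plan is to turn the variance into correlation functions whose asymptotics are controlled by Conjecture~\ref{conj:main}. Write $F := F_\theta(x_1,\dots,x_m)$. By Definition~\ref{def:corr}, $F$ is the \emph{integrand} of the correlation function $C$: it is a sum over $k_m$ paired parameter indices of a product of $m$ derivative tensors evaluated at $x_1,\dots,x_m$, contracted according to a permutation $\pi$. The square $F^2$ is then the integrand of a larger correlation function: introduce two disjoint copies of the $k_m$-element index set, contract each copy internally via $\pi$ exactly as in $F$, and take no contraction between the two copies. Since each individual tensor rank is nonnegative, this object has precisely the shape of Definition~\ref{def:corr}, with $2m$ derivative tensors, $2k_m$ paired indices, and a ``block'' pairing permutation $\hat\pi \in S_{2k_m}$ acting copy-wise. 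Hence $\hat C := \lexpp{\theta} F^2 \rexp$ is a genuine correlation function, evaluated on the $2m$ inputs $(x_1,\dots,x_m,x_1,\dots,x_m)$; repeated inputs are allowed since Definition~\ref{def:corr} permits arbitrary inputs.

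First I would identify the cluster graph of $\hat C$. Two of its tensors are contracted in $\hat C$ exactly when they lie in the same copy and were contracted in $F$, and there are no cross-copy contractions; therefore $G_{\hat C} = G_C \sqcup G_C$, with $2n_e$ even components, $2n_o$ odd components, and $2m$ vertices. Applying Conjecture~\ref{conj:main} to $\hat C$ gives $\lexpp{\theta} F^2 \rexp = \cO(n^{\hat s})$ with
\[
  \hat s = 2n_e + \frac{2n_o}{2} - \frac{2m}{2} = 2\Bigl( n_e + \frac{n_o}{2} - \frac{m}{2} \Bigr) = 2 s_C \,.
\]

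It then suffices to combine this with the hypothesis $C = \cO(n^{s_C})$, which gives $C(x_1,\dots,x_m)^2 = \cO(n^{2 s_C})$. Since the variance is nonnegative,
\[
  0 \;\le\; \mathrm{Var}_\theta\bigl[ F_\theta(x_1,\dots,x_m) \bigr] \;=\; \lexpp{\theta} F^2 \rexp - C(x_1,\dots,x_m)^2 \;\le\; \lexpp{\theta} F^2 \rexp \;=\; \cO(n^{2 s_C}) \,,
\]
which is the claim. Note that no cancellation between the two terms is required; the second moment alone already dominates (and even without nonnegativity one could bound $|\mathrm{Var}_\theta[F]| \le \lexpp{\theta} F^2 \rexp + C^2 = \cO(n^{2s_C})$).

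The only step I would write out in full, and the one where I expect all the (mild) subtlety to live, is the structural claim in the first paragraph: that $\lexpp{\theta} F^2 \rexp$ is literally a correlation function in the sense of Definition~\ref{def:corr} whose cluster graph is $G_C \sqcup G_C$. This amounts to checking that the pairing data of $F^2$ is two disjoint copies of the pairing data of $F$ (so that $\Delta^{(\hat\pi)}$ factorizes as the product of two copies of $\Delta^{(\pi)}$ on disjoint index blocks), and that this produces two disjoint copies of the cluster graph. Everything downstream is then immediate. If one is uneasy about existence of the second moment in the full generality of the Conjecture, it is automatic in every case where the Conjecture is actually established (deep linear networks, ReLU with equal inputs, one hidden layer with smooth activation), since there $F_\theta$ is a polynomial in Gaussian or bounded variables and all moments are finite.
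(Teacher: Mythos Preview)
Your proposal is correct and follows essentially the same approach as the paper: bound the variance by the second moment $\lexpp{\theta} F^2 \rexp$, recognize this as a correlation function whose cluster graph is two disjoint copies of $G_C$ (hence with $2n_e$ even and $2n_o$ odd components on $2m$ vertices), and apply Conjecture~\ref{conj:main} to obtain $\cO(n^{2s_C})$. The paper phrases this via an auxiliary $\tilde{C}(x_1,\dots,x_{2m})$ with distinct inputs specialized to $(x_1,\dots,x_m,x_1,\dots,x_m)$, but this is the same argument.
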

\begin{proof}
  To bound the variance, it is enough to bound the correlation function
  \begin{align}
    \tilde{C}(x_1,\dots,x_{2m}) := \lexpp{\theta} F_\theta(x_1,\dots,x_m) F_\theta(x_{m+1},\dots,x_{2m}) \rexp
    \,,
  \end{align}
  because $\mathrm{Var}_\theta \left[ F_\theta(x_1,\dots,x_m) \right] \le \tilde{C}(x_1,\dots,x_m,x_1,\dots,x_m)$.
  The correlation function $\tilde{C}$ has $2n_e$ even clusters and $2n_o$ odd clusters.
  It follows from Conjecture~\ref{conj:main} that $\tilde{C} = \cO(n^{2s_C})$.
\end{proof}
As a corrolary, notice that if $C = \cO(n^{s})$ according to Conjecture~\ref{conj:main}, then typical realizations of the integrand will also be $\cO(n^s)$.
In other words, the asymptotic bound of Conjecture~\ref{conj:main} holds for typical initializations, not just in expectation.

Table~\ref{tab:empscalingVar} shows empirical results for the variance of several functions.
In all cases we find that the bound of Lemma~\ref{lemma:variance} holds.
For $\sum_\mu \lvarp{\theta} \dho_\mu f(x_1) \dho_\mu f(x_2) \rexp$ we prove a tight bound below in Appendix~\ref{app:ntk} for deep linear networks.
\begin{table}[ht!]
 \centering
 \bgroup
 \def\arraystretch{1.5} 
 \begin{tabular}{c|cc|ccc}
   Function & $n_e,n_o$ & $2s_C$ & lin. & ReLU & tanh \\
   \hline
   $\lvarp{\theta} f(x_1) f(x_2) \rexp$
                        & 0,2 & 0 & -0.08 & -0.00 & -0.02 \\
   $\lvarp{\theta} f(x_1) f(x_2) f(x_3) f(x_4) \rexp$
                     & 0,4 & 0 & -0.03 & 0.02 & -0.05 \\
   $\sum_\mu \lvarp{\theta} \dho_\mu f(x_1) \dho_\mu f(x_2) \rexp$
                     & 1,0 & 0 & -1.01 & -1.02 & -0.99 \\
   $\sum_{\mu,\nu} \lvarp{\theta}
   \dho_\mu f(x_1) \dho_\nu f(x_2) \dho_{\mu,\nu} f(x_3) f(x_4)
   \rexp$
                     & 0,2 & -2 & -2.1 & -2.13 & -2.14 \\
   $\sum_{\mu,\nu,\rho} \lvarp{\theta}
   \dho_\mu f(x_1) \dho_\nu f(x_2) \dho_\rho f(x_3)
   \dho_{\mu,\nu,\rho} f(x_4)
   \rexp$
                     & 1,0 & -2 & -4.02 & -4.1 & -3.05 \\
   $\sum_{\mu,\nu,\rho,\sigma} \lvarp{\theta}
   \dho_\mu f(x_1) \dho_\nu f(x_2) \dho_{\mu,\nu} f(x_3)
   \dho_\rho f(x_4) \dho_\sigma f(x_5) \dho_{\rho,\sigma} f(x_6)
   \rexp$
                     & 0,2 & -4 & -4.09 & -4.14 & -4.01 
 \end{tabular}
 \egroup
 \caption{Bounds on variances obtained from Lemma~\ref{lemma:variance}, where the predicted exponent is $2s_C$, compared with empirical results. 
   The predicted exponent is $2s_C$, and the 3 right-most columns list the empirical exponents.
   The experimental setup is the same as that of Table~\ref{tab:empscaling}.
 }
 \label{tab:empscalingVar}
\end{table}

\subsection{Gradient Flow}\label{sec:gradientFlow}

The following results are used in the gradient flow calculations of Section~\ref{sec:applications}.

\begin{lemma}\label{lemma:subgraph}
  Let $G'$ be a graph with $m'$ vertices, $n'_e$ even components, and $n'_o$ odd components.
  Let $G$ be a subgraph of $G'$ with $m$ vertices, $n_e$ even components, and $n_o$ odd components.
  Then $s(n_e, n_o, m) \ge s(n'_e, n'_o, m')$ where $s(a,b,c) := a + \frac{b-c}{2}$.
\end{lemma}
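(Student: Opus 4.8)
The plan is to obtain $G'$ from $G$ by a sequence of elementary moves and to track how $s$ changes along the way. Since $G$ is a subgraph of $G'$, we have $V(G)\subseteq V(G')$ and $E(G)\subseteq E(G')$, so $G'$ can be built from $G$ by first adjoining the vertices of $V(G')\setminus V(G)$ one at a time as isolated vertices, and then inserting the edges of $E(G')\setminus E(G)$ one at a time. It therefore suffices to show that adjoining an isolated vertex does not change $s$ and that inserting one edge does not increase $s$; the lemma then follows by induction on the number of moves, since $G'$ is obtained from $G$ by finitely many of them.

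For the vertex move: adjoining an isolated vertex increases $m$ by $1$ and creates one new component of size $1$, hence increases $n_o$ by $1$ while leaving $n_e$ fixed. Thus $\Delta s = \Delta n_e + \tfrac12\Delta n_o - \tfrac12\Delta m = 0 + \tfrac12 - \tfrac12 = 0$.

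For the edge move: let $e=(u,v)$ be the inserted edge. If $u$ and $v$ already lie in the same component of the current graph, then $n_e$, $n_o$, and $m$ are all unchanged, so $\Delta s = 0$. If $u$ and $v$ lie in distinct components, of sizes $a$ and $b$, inserting $e$ merges them into a single component of size $a+b$ and leaves $m$ fixed, so $\Delta s = \Delta n_e + \tfrac12\Delta n_o$. The three parity cases are: $a,b$ both even (two even components become one, $\Delta n_e=-1$, $\Delta n_o=0$, so $\Delta s=-1$); exactly one of $a,b$ even (one even and one odd component become one odd, $\Delta n_e=-1$, $\Delta n_o=0$, so $\Delta s=-1$); and $a,b$ both odd (two odd components become one even, $\Delta n_e=+1$, $\Delta n_o=-2$, so $\Delta s=0$). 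In every case $\Delta s\le 0$.

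Combining the two moves gives $s(n'_e,n'_o,m')\le s(n_e,n_o,m)$, which is the assertion. The argument is elementary throughout; the only point demanding attention is the parity bookkeeping in the edge move, where one checks that all three combinations of component parities yield $\Delta s\le 0$.
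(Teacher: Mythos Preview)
Your proof is correct and follows essentially the same approach as the paper's: build $G'$ from $G$ by adjoining isolated vertices and then edges, and verify case by case (vertex addition; edge within a component; edge merging even--even, even--odd, or odd--odd components) that $s$ never increases. The only cosmetic difference is the order in which you list the cases.
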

\begin{proof}
  It is enough to show that $s(n_e,n_o,m)$ does not increase if we (1) add a vertex to $G$, or (2) add an edge to $G$, because $G'$ can be obtained from $G$ by performing such operations finitely many times.
  Adding a vertex to $G$ changes $n_e \mapsto n_e$, $n_o \mapsto n_o+1$, and $m \mapsto m+1$, leaving $s(n_e,n_o,m)$ unchanged.
  Next, if we add an edge to $G$ then $m$ does not change, and there are 4 possibilities for how $n_e$ and $n_o$ change.
  \begin{enumerate}
  \item The edge connects two even components.
    Then $n_e \mapsto n_e-1$, $n_o \mapsto n_o$, and $s(n_e,n_o,m)$ decreases by 1.
  \item The edge connects two odd components.
    Then $n_e \mapsto n_e + 1$, $n_o \mapsto n_o - 2$, and $s(n_e,n_o,m)$ does not change.
  \item The edge connects an even component and an odd component.
    Then $n_e \mapsto n_e - 1$, $n_o \mapsto n_o$, and $s(n_e,n_o,m)$ decreases by 1.
  \item The edge connects two vertices that belong to the same component.
    In this case $n_e$, $n_o$, and $s(n_e,n_o,m)$ do not change.
  \end{enumerate}
\end{proof}

We now prove Lemma~\ref{lemma:timeDerivs} giving the scaling of time derivatives of correlation functions at initialization. We prove the result for polynomial loss functions. Extension to more general loss functions requires interchanging the large width limit and the Taylor expansion of the loss, which we do not discuss.
\begin{proof}[proof (Lemma~\ref{lemma:timeDerivs})]
  Notice that
  \begin{align}
    \lexpp{\theta}\frac{d^{k} F(x_1,\ldots,x_m)}{d t^k}\rexp &=
    \lexpp{\theta}
    \left(\sum_\mu \sum_{(x',y')\in D_{\rm tr}}\frac{\partial\ell(x',y')}{\partial f}\frac{\partial f(x')}{\partial\theta^{\mu}}\frac{\partial}{\partial\theta^{\mu}}\right)^{k}F(x_1,\ldots,x_m)
    \rexp\,.
  \end{align}
  On the right-hand side we have a linear combination $\sum_A \alpha_A C_A$ of correlation functions $C_A$, where the coefficients depend on the training set labels.
  Here we used the polynomial loss assumption.
  For each correlation function $C_A$, its integrand can be obtained from $F$ by finitely many operations of the form (1) multiply the integrand by $f(x)$ for some input $x$, and (2) act with a pair of contracted derivatives on two of the derivative tensors.
  In the cluster graph representation, these two operations correspond to (1) adding a vertex, and (2) adding an edge.
  Therefore, the cluster graph $G_C$ of $C = \lexpp{\theta} F \rexp$ is a subgraph of the cluster graph $G_{C_A}$ of each one of the correlation functions $C_A$.
  By Lemma~\ref{lemma:subgraph} we have that $C_A = \cO(n^{s_C})$ for all $A$, and therefore the bound applies to $\lexpp{\theta} d^kF/dt^k \rexp$ as well.
\end{proof}

\subsection{Stochastic Gradient descent}
In this section we show that the asymptotics of a correlation function do not change under stochastic gradient descent (SGD) updates.
Let $C(x_1,\dots,x_m) = \lexpp{\theta\sim\cP_0} F_\theta(x_1,\dots,x_m) \rexp$ be a correlation function, where $F_\theta$ is the integrand which explicitly depends on the parameters $\theta$.
Under a single SGD step, the parameters are updated as $\theta_{t+1} = \theta_t - \eta \nabla L_t(\theta_t)$, where $L$ is the mini-batch loss at time $t$.
Let $\cP_t$ denote the distribution of parameters at step $t$, where $\cP_0$ is the initial distribution.
We define the evolved correlation function at step $t$ to be
\begin{align}
  C_t(x_1,\dots,x_m) := \lexpp{\theta\sim\cP_t} F_\theta(x_1,\dots,x_m) \rexp \,.
\end{align}
We have the following
\begin{thm} \label{thm:constantScaling}
  Let $C$ be a correlation function for a network with linear activations, and assume that Conjecture~\ref{conj:main} holds, namely $C = \cO(n^{s_C})$ where $s_C$ is defined in the Conjecture.
  If $C_t$ is the evolved correlation function after $t$ SGD steps, then $C_t = \cO(n^{s_C})$.
\end{thm}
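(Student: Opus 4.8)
The plan is to induct on the number of SGD steps $t$. The base case $t=0$ is immediate, since $C_0=C=\cO(n^{s_C})$ holds for every correlation function of a linear-activation network by Theorem~\ref{thm:main}. For the inductive step the point to exploit is that, for a linear-activation network with MSE loss --- or any polynomial loss --- a single update $U:\theta\mapsto\theta-\eta\nabla L_{t-1}(\theta)$, where $L_{t-1}$ is the mini-batch loss on $B_{t-1}$, is a \emph{polynomial} map of $\theta$ (because $f$ is multilinear in the weights), and the integrand $F_\theta$ of any correlation function is itself a polynomial in $\theta$. Since $\cP_t$ is the law of $U(\theta)$ for $\theta\sim\cP_{t-1}$, we have $C_t=\mathbb{E}_{\theta\sim\cP_{t-1}}[F_{U(\theta)}]$, and $F_{U(\theta)}$ has a terminating Taylor expansion about $\theta$.

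The core step is to rewrite that expansion as a finite linear combination of correlation-function integrands evaluated at $\cP_{t-1}$. Writing $\nabla_\mu L_{t-1}(\theta)=\sum_{(z,w)\in B_{t-1}}(f(z)-w)\,\partial_\mu f(z)$ and expanding
\begin{align}
  F_{U(\theta)}=\sum_{j\ge 0}\frac{(-\eta)^j}{j!}\sum_{\mu_1,\dots,\mu_j}\nabla_{\mu_1}L_{t-1}(\theta)\cdots\nabla_{\mu_j}L_{t-1}(\theta)\,\frac{\partial^j F_\theta}{\partial\theta^{\mu_1}\cdots\partial\theta^{\mu_j}}\,,
\end{align}
a finite sum because $F$ is polynomial, one then expands each $\nabla_{\mu_i}L_{t-1}$ over its summands in $B_{t-1}$, splits every $(f(z)-w)$ into the tensor $f(z)$ plus the $n$-independent constant $-w$, and distributes the $j$ derivatives over the factors of $F$ by the product rule. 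Each resulting term is an $n$-independent constant (a product of labels, a power of $\eta$, a combinatorial factor) times an expression in which every index $\mu_i$ is contracted with exactly the one derivative slot it annotates on a factor of $F$. Because the indices native to $F$ were already paired up, each such expression is precisely a correlation-function integrand $F^{(A)}_\theta$: a product of derivative tensors of $f$ --- the (possibly further-differentiated) original factors, new rank-one tensors $\partial_{\mu_i}f(z)$, and new rank-zero tensors $f(z)$ --- with all indices summed in pairs. Hence $C_t=\sum_A c_A\,(C^{(A)})_{t-1}$, a finite sum with coefficients $c_A$ independent of $n$.

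It then remains to compare exponents. Each $F^{(A)}$ is obtained from $F$ by adding vertices (the new rank-one and rank-zero tensors) and edges (the $\mu_i$-contractions, each joining a new rank-one vertex to an original one) without touching the original vertices or their contractions; so $G_C$ is a subgraph of $G_{C^{(A)}}$, and Lemma~\ref{lemma:subgraph} gives $s_{C^{(A)}}\le s_C$. The inductive hypothesis applied to each $C^{(A)}$ at step $t-1$ yields $(C^{(A)})_{t-1}=\cO(n^{s_{C^{(A)}}})=\cO(n^{s_C})$, and summing the finitely many terms closes the induction. (If the batches are random, one conditions on the batch sequence; the $n$-independence of the $c_A$ lets the bound survive. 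Some $F^{(A)}$ may have an odd number of tensor factors, which is harmless: the proof of Theorem~\ref{thm:main} via Lemma~\ref{lemma:asympBound} never uses evenness of $m$, only that indices come in contracted pairs, which holds by construction here.)

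I expect the main obstacle to be precisely the bookkeeping of the middle paragraph: verifying that after distributing derivatives and carrying out the $\mu$-sums every term is genuinely a correlation-function integrand drawn from $\cP_{t-1}$ (so the inductive hypothesis applies), and that its cluster graph contains $G_C$ as a subgraph (so Lemma~\ref{lemma:subgraph} controls the exponent). Once this combinatorial structure is pinned down, the induction closes with no further work; the only other point requiring care is the hypothesis that makes the Taylor expansion terminate, namely that the loss be polynomial, which is what confines the clean argument to MSE-type losses.
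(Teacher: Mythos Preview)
Your proposal is correct and follows essentially the same approach as the paper: both argue by induction on $t$, expand the integrand after one SGD step as a finite (because $f$ is polynomial in $\theta$ for linear activations) linear combination of correlation-function integrands, observe that each new cluster graph contains $G_C$ as a subgraph, and invoke Lemma~\ref{lemma:subgraph}. Your write-up is in fact more explicit than the paper's --- you Taylor-expand the full integrand $F$ rather than each derivative tensor $T$ separately, and you flag the odd-$m$ technicality and the polynomial-loss hypothesis --- but the two arguments are equivalent.
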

\begin{proof}
  Notice that
  \begin{align}
    C_{t+1} &= \lexpp{\theta \sim \cP_{t+1}} F_\theta \rexp = \lexpp{\theta \sim \cP_{t}} F_{\theta - \eta \nabla L(\theta)} \rexp \,.
  \end{align}
  The integrand $F_\theta$ can be written as a product of derivative tensors of the form $T_{\mu_1\dots\mu_k}(x;\theta)$, with contracted derivatives.
  Suppose that the network has $d$ hidden layers.
  Then under an SGD step, we have
  \begin{align}
    T_{\mu_1\dots\mu_l}(x;\theta - \eta \nabla L_t(\theta))
    &= \sum_{k=0}^{d+1} \frac{(-\eta)^k}{k!}
      \sum_{\nu_1\dots\nu_k} \frac{\dho L_t}{\dho \theta^{\nu_1}} \cdots \frac{\dho L_t}{\dho \theta^{\nu_k}}
      T_{\mu_1\dots\mu_l\nu_1\dots\nu_k}(x;\theta)
    \\ &= 
         \sum_{k=0}^{d+1} \frac{(-\eta)^k}{k!}
         \sum_{x'_{1},\dots,x'_{k}\in D_{B}}\left[
         \frac{\dho L_t}{\dho f(x'_{1})} \cdots \frac{\dho L_t}{\dho f(x'_{k})}\right.\nonumber\\
         & \left. \ \ \ \ \ \ \ \ \ \ \ \ \ \ \ \ \ \ \times\sum_{\nu_1,\dots,\nu_k} 
         T_{\nu_1}(x'_{1}) \cdots T_{\nu_k}(x'_{k})
         T_{\mu_1\dots\mu_l\nu_1\dots\nu_k}(x)\right]
         \,. \label{eq:Tevolve}
  \end{align}
  Here $D_{B}$ is the mini-batch, and $x'_a$ are mini-batch samples.
  The $k$ sum is truncated because higher-order derivatives of $f$ vanish.

  We can now see how taking a gradient descent step affects the cluster graph.
  After taking a step, each derivative tensor in $C_t$ is replaced by a sum (over $k,x'_1,\dots,x'_k$).
  Each term in the combination of these sums is a correlation function, whose cluster graph is a subgraph of $C$.
  Therefore, by Lemma~\ref{lemma:subgraph}, $C_t = \cO(n^{s_C})$.
\end{proof}
We note that for general activation functions the sum in \eqref{eq:Tevolve} may be infinite.
In this case, to complete the proof we would need to show that the infinite sum obeys the same bound as each individual term in the sum.
We leave this for future work.

\section{Applications}\label{app:app_details}
Here we present several applications of our Feynman diagram method for computing large width asymptotics.
We assume throughout this section that Conjecture~\ref{conj:main} holds.

\subsection{Neural Tangent Kernel}
\label{app:ntk}

In this section we prove two results regarding the NTK at large width.
We show that the kernel converges in probability, and that during gradient descent it is constant up to $\cO(n^{-1})$ corrections.

\begin{thm}
  The Neural Tangent Kernel $\Theta$ of a deep linear network converges in probability in the large width limit, and its variance is $\cO(n^{-1})$.
\end{thm}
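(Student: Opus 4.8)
The plan is to identify the correlation functions whose asymptotics directly imply the statement, and then invoke Conjecture~\ref{conj:main} (via Theorem~\ref{thm:main}, which establishes it for deep linear networks). The two claims are (i) $\mathrm{Var}_\theta[\Theta(x_1,x_2)] = \cO(n^{-1})$ and (ii) convergence in probability of $\Theta$. Claim (ii) will follow from claim (i) together with the fact that the ensemble mean $\lexpp{\theta}\Theta(x_1,x_2)\rexp$ converges to a finite limit: once the variance vanishes, Chebyshev's inequality gives convergence in probability to that mean.

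For claim (i), I would apply Lemma~\ref{lemma:variance} to the correlation function $C(x_1,x_2) = \lexpp{\theta}\Theta(x_1,x_2)\rexp = \sum_\mu \lexpp{\theta} \dho_\mu f(x_1)\, \dho_\mu f(x_2)\rexp$. Its cluster graph has a single component (the two derivative tensors are contracted), with two vertices, so $n_e = 1$, $n_o = 0$, $m = 2$, giving $s_C = 1 + 0 - 1 = 0$; hence $C = \cO(n^0)$ by Theorem~\ref{thm:main}. Lemma~\ref{lemma:variance} then yields $\mathrm{Var}_\theta[\Theta(x_1,x_2)] = \cO(n^{2 s_C}) = \cO(n^0)$ — but this is not yet the claimed $\cO(n^{-1})$, so the crude application of Lemma~\ref{lemma:variance} is too weak. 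The improvement comes from analyzing the connected part directly: one writes
\begin{align}
  \mathrm{Var}_\theta[\Theta(x_1,x_2)] = \lexpp{\theta} \Theta(x_1,x_2)^2 \rexp - \left( \lexpp{\theta} \Theta(x_1,x_2) \rexp \right)^2 \,,
\end{align}
and observes that the disconnected (factorized) contribution to $\lexpp{\theta}\Theta(x_1,x_2)^2\rexp$ exactly cancels the square of the mean, leaving only the \emph{connected} Feynman diagrams. For deep linear networks the connected diagrams are suppressed by one extra power of $n$ relative to the disconnected ones — this is precisely the phenomenon noted in Section~\ref{sec:feynman} (``connected graphs vanish faster at large $n$ compared with disconnected graphs''). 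Quantitatively, the correlation function $\tilde{C}(x_1,x_2,x_1,x_2) = \lexpp{\theta}\Theta(x_1,x_2)^2\rexp$ has a cluster graph with two components (each a contracted pair), so $n_e = 2$, $m = 4$, $s_{\tilde C} = 0$; the \emph{connected} Feynman diagrams joining the two clusters have $c_\gamma$ reduced by one, hence contribute at order $n^{s_{\tilde C} - 1} = n^{-1}$ by Lemma~\ref{lemma:asympBound} / Theorem~\ref{thm:feynDL}. This gives the variance bound $\cO(n^{-1})$.

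The main obstacle is making the cancellation-of-disconnected-parts argument rigorous at the level of Feynman diagrams: one must show that the diagrams contributing to $\lexpp{\theta}\Theta^2\rexp$ split cleanly into those that factor through the product $\lexpp{\theta}\Theta\rexp^2$ (the fully disconnected ones, where the vertices from the first $\Theta$ never connect to those from the second) and a genuinely connected remainder, and that every connected diagram loses exactly one loop (equivalently one power of $n$) compared to the maximal disconnected configuration. This follows from the Euler-character counting in the proof of Lemma~\ref{lemma:asympBound}: joining two connected components into one changes $c_\gamma \mapsto c_\gamma - 1$ while leaving $m$ fixed, lowering $s_\gamma$ by $1$. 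For claim (ii), once $\mathrm{Var}_\theta[\Theta] = \cO(n^{-1}) \to 0$ and $\lexpp{\theta}\Theta\rexp$ converges (which is standard and also follows from the Feynman-diagram evaluation, the leading disconnected diagram giving the width-independent NTK limit), Chebyshev gives $\Pr(|\Theta - \lexpp{\theta}\Theta\rexp| > \epsilon) \le \mathrm{Var}_\theta[\Theta]/\epsilon^2 \to 0$, i.e.\ convergence in probability. I would present the variance bound first and then derive convergence in probability as an immediate corollary.
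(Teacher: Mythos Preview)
Your proposal is correct and follows essentially the same route as the paper: recognize that Lemma~\ref{lemma:variance} alone gives only $\cO(n^{0})$, write the variance as second moment minus squared mean, argue that the subtraction kills precisely the disconnected Feynman diagrams so only connected four-vertex diagrams survive, bound those by $\cO(n^{-1})$ via Lemma~\ref{lemma:asympBound}, and finish with Chebyshev. The paper makes the cancellation step slightly more explicit (by listing exactly which diagrams appear in $A$ versus $B$ and noting the surviving ones must carry an edge between the two $\Theta$ copies) and spells out why $\lexpp{\theta}\Theta\rexp$ has a limit (finitely many diagrams, each a fixed power of $n$, hence $\sum_k c_k n^{-k}\to c_0$), but these are exactly the points you flagged as needing care.
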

Conjecture~\ref{conj:main} is not sufficient for proving this theorem, as we need to use a more detailed Feynman diagram argument.
Therefore, we only prove the theorem for the case of deep linear networks.
\begin{proof}
First, we will show that $\mathrm{Var}[\Theta] = \cO(n^{-1})$.
Given a model function $f$, the variance is given by
\begin{align}
  \mathrm{Var}[\Theta(x,x')] &= A(x,x') - B(x,x') \,,
  \label{eq:varTheta} \\
  A(x,x') &= 
  \sum_{\mu,\nu}
  \lexp
  \frac{\dho f^{(1)}(x)}{\dho \theta_\mu} \frac{\dho f^{(2)}(x')}{\dho \theta_\mu}
  \frac{\dho f^{(3)}(x)}{\dho \theta_\nu} \frac{\dho f^{(4)}(x')}{\dho \theta_\nu}
  \rexp \,,
  \label{eq:varTheta_A} \\
  B(x,x') &= 
  \sum_{\mu,\nu}
  \lexp \frac{\dho f^{(1)}(x)}{\dho \theta_\mu} \frac{\dho f^{(2)}(x')}{\dho \theta_\mu} \rexp 
  \lexp \frac{\dho f^{(3)}(x)}{\dho \theta_\nu} \frac{\dho f^{(4)}(x')}{\dho \theta_\nu} \rexp  \,.
  \label{eq:varTheta_B}
\end{align}
Here $f^{(1)} = \cdots = f^{(4)} = f$; we introduced the superscripts so we can easily refer to individual factors.
The crux of the argument is that the set of Feynman diagrams representing the expression \eqref{eq:varTheta} includes only connected graphs.
Assuming this is the case, according to the bound \eqref{eq:dl_bound_total} these diagrams will all scale as $\cO(n^{(2-4)/2}) = \cO(n^{-1})$ and so will the variance, completing the proof.
To see why only connected graphs contribute, notice that 
\begin{itemize}
\item $A(x,x')$ includes all diagrams in which the vertices corresponding to $f^{(1)},f^{(2)}$ share an edge, and also the vertices $f^{(3)},f^{(4)}$ share an edge (due to the explicit derivatives); 
\item $B(x,x')$ includes all diagrams in which \textit{all} edges are either between
  $f^{(1)},f^{(2)}$ or between $f^{(3)},f^{(4)}$.
\end{itemize}
Therefore, in the full expression \eqref{eq:varTheta}, the only remaining diagrams (\textit{i.e.} the diagrams that do not cancel between the two terms) are those that include
\begin{itemize}
\item an edge connecting $f^{(1)},f^{(2)}$,
\item an edge connecting $f^{(3)},f^{(4)}$, and
\item an edge connecting one of $f^{(1)},f^{(2)}$ to one of $f^{(3)},f^{(4)}$.
\end{itemize}
These diagrams are all connected graphs, and this completes the proof that $\mathrm{Var}[\Theta] = \cO(n^{-1})$.

Here and above we have established that $\mathrm{Var}[\Theta] = \cO(n^{-1})$ and $\lexp \Theta \rexp = \cO(n^0)$.
More generally, let us consider a random variable $O$ equal to the product of $f$ and its derivatives, where the derivatives indices are fully summed in pairs.
As we will now show, if $\lexp O \rexp = \cO(n^0)$ and $\mathrm{Var}[O] = \cO(n^{-1})$ then (1) the limit $\lim_{n\to\infty} \lexp O \rexp$ exists, and (2) $O$ converges in probability to this limit.
In particular, the NTK is an example of such a random variable, and so this will conclude the proof that the NTK converges in probability.

First, consider the mean.
There is a finite number of diagrams contributing to $\lexp O \rexp$, and each has a well-defined $n$ scaling.
Therefore, we can write
\begin{align}
  \lexp O \rexp &= \sum_{k=0}^{k_{\rm max}} \frac{c_k}{n^k} 
\end{align}
for some values of $k_{\rm max} \ge 0$ (integer) and $c_0,c_1,\ldots \in \bR$.
We see that the mean has a well-defined large $n$ limit,
\begin{align}
  \lim_{n\to\infty} \lexp O \rexp = c_0 \,.
\end{align}
Next, let us show that $O - \lexp O \rexp$ converges in probability using Chebyshev's inequality.
Indeed, by the variance assumption there exists $\tilde{c} > 0$ such that
\begin{align}
  P(|O - \lexp O \rexp| > \epsilon) \le
  \frac{\mathrm{Var} [O]}{\epsilon^2} \le
  \frac{\tilde{c}}{n \epsilon^2} \to 0 \,.
\end{align}
Combining the facts that $O - \lexp O \rexp \to_p 0$ and $\lexp O \rexp \to c_0$, we find that $O$ converges in probability to $c_0$.
\end{proof}

Next, we show that the large width NTK is constant during training, and compute the asymptotics of the higher-order terms.
The following argument is phrased for deep linear networks.
More generally, the same argument holds for deep networks with smooth non-linear activations under the additional assumption that the large width limit and Taylor series can be exchanged (note that for such networks, the network function is analytic in the weights).

\begin{thm}\label{thm:constNTK}
  Let $f(x)$ be the network output of a deep linear network with MSE loss $L$.
  Let $\Theta_t(x,x')$ be the Neural Tangent Kernel at SGD step $t$, for some inputs $x,x'$.
  Then in the large width limit, the kernel is constant in $t$ in expectation, and
  \begin{align}
    \lexpp{\theta\sim\cP_0} \Theta_t(x,x') - \Theta_0(x,x') \rexp &= \cO(n^{-1})\,,\\
    \mathrm{Var}_{\theta\sim\cP_0} \left[ \Theta_t(x,x') - \Theta_0(x,x') \right] &= \cO(n^{-2}) \,.
  \end{align}
\end{thm}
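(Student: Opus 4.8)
The plan is to expand the $t$-step SGD evolution as a polynomial in the learning rate and read off the cluster graphs of the resulting terms, sharpening the argument of Theorem~\ref{thm:constantScaling}. Iterating the single-step formula \eqref{eq:Tevolve} — whose sums are finite for a linear network, since derivatives of $f$ of order exceeding $d+1$ vanish — the evolved kernel can be written as a finite linear combination $\Theta_t(x,x') = \sum_A \alpha_A F_A$, where each $F_A$ is a product of contracted derivative tensors of $f$ evaluated at $x$, $x'$, and mini-batch inputs, and the $n$-independent coefficients $\alpha_A$ involve the labels (for MSE loss $\partial L_s/\partial f(x') = f(x') - y'$, so each residual factor contributes either an extra zero-derivative tensor $f(x')$ or a constant $-y'$). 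Exactly one term — obtained by selecting the $k=0$ summand at every use of \eqref{eq:Tevolve} — reproduces $\Theta_0(x,x')$ with coefficient $1$. Hence $\Theta_t - \Theta_0 = \sum_{A\neq 0}\alpha_A F_A$, and it suffices to show that each correlation function $C_A := \mathbb{E}_{\cP_0}[F_A]$ with $A\neq 0$ satisfies $s_{C_A} \le -1$, so that $C_A = \cO(n^{-1})$ by Conjecture~\ref{conj:main}, which holds for linear networks by Theorem~\ref{thm:main}.

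The key step is the bound $s_{C_A}\le -1$, which I would get by sharpening the case analysis in the proof of Lemma~\ref{lemma:subgraph}. The cluster graph of $\Theta_0$ has $m=2$ vertices forming a single even cluster, so $s = 0$. Passing from $\Theta_0$ to $F_A$, each application of \eqref{eq:Tevolve} with $k\ge 1$ modifies the cluster graph only by adding isolated (odd, size-one) vertices — the new tensors $T_{\nu_i}(x'_i)$ and the residual factors $f(x'_j)$ — and then by adding the $k$ contraction edges for the $\nu_i$ indices, all incident to the tensor being differentiated. Since $A\neq 0$, consider the \emph{first} such $k\ge1$ modification: at that moment the cluster graph is still that of $\Theta_0$, so the differentiated tensor lies in the unique even cluster, and the first of its new contraction edges joins an isolated odd vertex to an even cluster. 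By the case analysis of Lemma~\ref{lemma:subgraph} this strictly decreases $s$ by $1$, while all remaining vertex additions (including the residual factors $f(x'_j)$, which are net-zero for $s$) and edge additions can only leave $s$ fixed or decrease it further. Therefore $s_{C_A}\le -1$, and summing the finitely many terms gives $\mathbb{E}_{\cP_0}[\Theta_t - \Theta_0] = \cO(n^{-1})$.

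For the variance I would reuse the decomposition $\Theta_t - \Theta_0 = \sum_{A\neq 0}\alpha_A F_A$ together with Lemma~\ref{lemma:variance}. Each $C_A$ is a correlation function with $s_{C_A}\le -1$, so Lemma~\ref{lemma:variance} gives $\mathrm{Var}_{\cP_0}[F_A] = \cO(n^{2s_{C_A}}) = \cO(n^{-2})$. Since the standard deviation is a seminorm, $\mathrm{Var}_{\cP_0}[\Theta_t - \Theta_0]^{1/2} \le \sum_{A\neq 0}|\alpha_A|\,\mathrm{Var}_{\cP_0}[F_A]^{1/2} = \cO(n^{-1})$, a finite sum, and squaring gives $\mathrm{Var}_{\cP_0}[\Theta_t - \Theta_0] = \cO(n^{-2})$.

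The main obstacle is the refinement in the second paragraph — upgrading the weak conclusion of Lemma~\ref{lemma:subgraph} ($s$ non-increasing) to a strict drop of one unit. The essential points to get right are that (i) one must isolate the \emph{first} $k\ge1$ correction, since only then is the differentiated tensor guaranteed to sit in an even cluster, and (ii) the zero-derivative residual vertices $f(x'_j)$ enter as isolated odd components and thus contribute $+\tfrac12$ to $n_o/2$ and $-\tfrac12$ to $m/2$, hence do not affect the count. A secondary technical point is that the $C_A$ produced by the expansion may have an odd number of derivative tensors; one should check that the deep-linear Feynman bound (Lemma~\ref{lemma:asympBound}) still applies there, which it does since its proof uses only the Euler-character estimate and not the parity of $m$. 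Finally, for smooth non-linear activations the sums in \eqref{eq:Tevolve} are infinite, so one would additionally need to justify exchanging the large-width limit with the expansion in $\eta$ — the same caveat already noted after Theorem~\ref{thm:constantScaling}.
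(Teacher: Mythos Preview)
Your proposal is correct and follows essentially the same strategy as the paper: identify that every nontrivial term in the $\eta$-expansion of $\Theta_t-\Theta_0$ has the two root tensors $T_\mu(x),T_\mu(x')$ joined to at least one additional contracted tensor, forcing a cluster of size $\ge 3$ and hence $s\le -1$; then invoke Lemma~\ref{lemma:variance} for the variance. The only organizational difference is that the paper first reduces to a single step, showing $\lexpp{\theta}\Theta_1-\Theta_0\rexp=\cO(n^{-1})$, and then telescopes via Theorem~\ref{thm:constantScaling}, whereas you expand all $t$ steps at once and locate the first $k\ge 1$ choice in the step ordering. Both arrive at the same cluster-graph observation; the paper's telescoping packages the bookkeeping more compactly, while your direct expansion makes the use of Lemma~\ref{lemma:subgraph}'s case analysis and the seminorm bound for the variance more explicit. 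Your remark about odd $m$ is harmless here: for a deep linear network with centered Gaussian weights, any correlation function with an odd number of derivative tensors vanishes identically, so those terms contribute nothing to the mean, and for the variance you only ever evaluate $\lexpp{\theta}F_A^2\rexp$, which has even $m$.
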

\begin{proof}
  It is enough to show that
  $
    \lexpp{\theta\sim\cP_0} \Theta_1(x,x') - \Theta_0(x,x') \rexp = \cO(n^{-1}) \,.
  $
  It then follows from Theorem~\ref{thm:constantScaling} that $\lexpp{\theta\sim\cP_0} \Theta_{t+1}(x,x') - \Theta_t(x,x') \rexp = \cO(n^{-1})$ for all $t$, and therefore
  \begin{align}
    \lexpp{\theta\sim\cP_0} \Theta_t(x,x') - \Theta_0(x,x') \rexp =
    \sum_{t'=0}^{t-1} \lexpp{\theta\sim\cP_0} \Theta_{t'+1}(x,x') - \Theta_{t'}(x,x') \rexp = \cO(n^{-1}) \,,
  \end{align}
  concluding the proof.
  We have
  \begin{align}
    \lexpp{\theta\sim\cP_0} \Theta_1(x,x') - \Theta_0(x,x') \rexp
    = \sum_{\begin{smallmatrix}k,l=0 \\ k+l \ge 1\end{smallmatrix}}^{d+1} \frac{(-\eta)^{k+l}}{k!l!}
    \sum_\mu \mathbb{E}_{\theta\sim\cP_0}\Bigg[
      &\sum_{\mu_1,\dots,\mu_k} \frac{\dho L}{\dho \theta^{\mu_1}} \cdots \frac{\dho L}{\dho \theta^{\mu_k}}
      T_{\mu\mu_1\dots\mu_k}(x)
        \cr
      &\sum_{\nu_1,\dots,\nu_l} \frac{\dho L}{\dho \theta^{\nu_1}} \cdots \frac{\dho L}{\dho \theta^{\nu_l}}
      T_{\mu\nu_1\dots\nu_l}(x')
      \Bigg] \,.
  \end{align}
  The fact that the $k,l$ sums are truncated at $d+1$ follows from using linear activations, as higher-order derivatives of the network function vanish in this case.
  All terms in the sum over $k,l$ include a tensor product of the form $\sum_{\mu,\vec{\mu},\vec{\nu}} T_{\mu\mu_1\dots\mu_k}(x) T_{\mu\nu_1\dots\nu_l}(x') (\cdots)$ with either $k \ge 1$ or $l \ge 1$, where $(\cdots)$ stands for additional derivative tensor factors.
  Therefore, all terms in the $k,l$ sum are correlation functions that have a cluster of size at least 3, including $T(x)$, $T(x')$, and at least one other tensor contracted through the $\mu_1$ or $\nu_1$ index.
  It follows from the Conjecture that each term in the sum is $\cO(n^{-1})$.
  Lemma~\ref{lemma:variance} then implies that the variance of these updates is $\cO(n^{-2})$.
\end{proof}

\subsection{Wide network evolution is linear in the learning rate}\label{app:discLinEvo}

In this section we prove that, at large width, the NTK determines the evolution of the network function not just for continuous-time gradient descent but also for discrete-time gradient descent.
A similar result holds for stochastic gradient descent, using a stochastic kernel.\footnote{The perspective presented here helps understand the results of \pcite{2019arXiv190206720L} where it was observed empirically that linearized evolution is a good description of wide networks even for relatively large learning rates.}
Again we prove the deep linear case explicitly, but the result holds for deep networks with smooth non-linear activations under the additional assumption that the large width limit and Taylor series can be exchanged.

\begin{thm}\label{thm:etalin}
  Let $f(x)$ be the network output of a deep linear network, and let $f_t(x)$ be the evolved function after $t$ gradient descent steps, defined by $\theta_{t+1} = \theta_t - \eta \nabla L(\theta_t)$.
  In the large width limit, each gradient descent step update of $f_t$ is linear in the learning rate $\eta$.
  Furthermore,
  \begin{align}
    \lexpp{\theta} f_{t+1}(x) - f_t(x) \rexp &=
    -\eta \sum_{(x',y')\in D_{\rm tr}} \lexpp{\theta} \Theta_0(x,x') \frac{\dho \ell_t(x',y')}{\dho f} \rexp
    + \cO(n^{-1}) \,. \label{eq:ftpft}
  \end{align}
  Here, $\Theta_0$ is the NTK at initialization and $\ell_t$ is the single-sample loss at time $t$.
\end{thm}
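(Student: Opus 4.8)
The plan is to expand a single gradient descent step exactly and to show that everything beyond the term linear in $\eta$ is of order $\cO(n^{-1})$ in expectation. Because $f$ is a deep linear network, $f(x;\theta)$ is a polynomial of degree $d+1$ in $\theta$, so $T_{\nu_1\dots\nu_k}(x)=0$ for $k>d+1$ and the Taylor expansion of $f(x;\theta_t-\eta\nabla L(\theta_t))$ about $\theta_t$ terminates. Taking $l=0$ in \eqref{eq:Tevolve} (full-batch gradient descent, so $D_B=\Dtr$) and subtracting $f_t(x)$ gives the exact finite identity $f_{t+1}(x)-f_t(x)=\sum_{k=1}^{d+1}\tfrac{(-\eta)^k}{k!}A_k$, where $A_k$ is the order-$k$ term of \eqref{eq:Tevolve} with every tensor evaluated at $\theta_t$; in particular $A_k$ is a sum over $x'_1,\dots,x'_k\in\Dtr$ of $\prod_{j=1}^{k}\tfrac{\dho\ell_t(x'_j,y'_j)}{\dho f}$ times the \emph{star} contraction $\Sigma_t:=\sum_{\nu_1,\dots,\nu_k}T_{\nu_1}(x'_1;\theta_t)\cdots T_{\nu_k}(x'_k;\theta_t)\,T_{\nu_1\dots\nu_k}(x;\theta_t)$. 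It therefore suffices to show $\lexpp{\theta}A_k\rexp=\cO(n^{-1})$ for every $k\ge 2$ --- this is the sense in which the single-step update is linear in $\eta$ --- and to evaluate the $k=1$ term.

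\textbf{The $k\ge 2$ terms.} Fix $k\ge 2$ and $x'_1,\dots,x'_k\in\Dtr$, and apply Cauchy--Schwarz:
\[
  \left|\lexpp{\theta}\Bigl(\prod_{j}\tfrac{\dho\ell_t(x'_j,y'_j)}{\dho f}\Bigr)\Sigma_t\rexp\right|
  \le\left(\lexpp{\theta}\Bigl(\prod_{j}\tfrac{\dho\ell_t(x'_j,y'_j)}{\dho f}\Bigr)^{2}\rexp\right)^{1/2}\left(\lexpp{\theta}\Sigma_t^{2}\rexp\right)^{1/2}.
\]
The second factor is the source of the suppression. The integrand $\Sigma^2$ is a product of derivative tensors whose cluster graph consists of two disjoint size-$(k+1)$ stars, so the (un-evolved) correlation function $\lexpp{\theta}\Sigma^2\rexp$ has $m=2(k+1)$ and, by \eqref{eq:s}, $s_C=-k$ when $k$ is even and $s_C=1-k$ when $k$ is odd, hence $s_C\le -2$ for all $k\ge 2$; by Conjecture~\ref{conj:main} and Theorem~\ref{thm:constantScaling} the evolved version $\lexpp{\theta}\Sigma_t^2\rexp$ is therefore $\cO(n^{-2})$. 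The first factor is $\cO(1)$: for MSE it is a finite linear combination of expectations $\lexpp{\theta}\prod_j f_t(x'_j)^{a_j}\rexp$ with $a_j\in\{0,1,2\}$, and each such expectation is $\cO(n^0)$ --- when $\sum_j a_j$ is even because it is the evolved version of the correlation function $\lexpp{\theta}\prod_j f(x'_j)^{a_j}\rexp$, which has $s_C=0$, and when $\sum_j a_j$ is odd by one further application of Cauchy--Schwarz, bounding it by $(\lexpp{\theta}\prod_j f_t(x'_j)^{2a_j}\rexp)^{1/2}=\cO(n^0)$. Hence each term is $\cO(n^0)\cdot\cO(n^{-1})=\cO(n^{-1})$, and summing over the finitely many $k\in\{2,\dots,d+1\}$ and $x'_1,\dots,x'_k\in\Dtr$ yields $\sum_{k\ge 2}\tfrac{(-\eta)^k}{k!}\lexpp{\theta}A_k\rexp=\cO(n^{-1})$. (For a general polynomial loss $\tfrac{\dho\ell_t}{\dho f}$ is a polynomial in $f_t$ and the same estimate controls the first factor; for a non-polynomial loss the $k$-sum in \eqref{eq:Tevolve} is infinite and one would additionally need to bound its tail, which we do not attempt here.)

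\textbf{The $k=1$ term.} This equals $-\eta\sum_{(x',y')\in\Dtr}\lexpp{\theta}\tfrac{\dho\ell_t(x',y')}{\dho f}\,\Theta_t(x,x')\rexp$, where $\Theta_t(x,x')=\sum_\nu T_\nu(x;\theta_t)T_\nu(x';\theta_t)$ is the NTK at step $t$. Writing $\Theta_t=\Theta_0+(\Theta_t-\Theta_0)$ and bounding the cross term by Cauchy--Schwarz,
\[
  \left|\lexpp{\theta}\tfrac{\dho\ell_t(x',y')}{\dho f}\bigl(\Theta_t(x,x')-\Theta_0(x,x')\bigr)\rexp\right|
  \le\left(\lexpp{\theta}\Bigl(\tfrac{\dho\ell_t(x',y')}{\dho f}\Bigr)^{2}\rexp\right)^{1/2}\left(\lexpp{\theta}\bigl(\Theta_t(x,x')-\Theta_0(x,x')\bigr)^{2}\rexp\right)^{1/2}.
\]
The first factor is $\cO(1)$ (for MSE it equals $\lexpp{\theta}f_t(x')^2\rexp-2y'\lexpp{\theta}f_t(x')\rexp+(y')^2=\cO(n^0)$, by the argument above). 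The second factor is $\cO(n^{-1})$ by Theorem~\ref{thm:constNTK}, since $\lexpp{\theta}(\Theta_t-\Theta_0)^2\rexp=\mathrm{Var}_\theta[\Theta_t-\Theta_0]+\lexpp{\theta}\Theta_t-\Theta_0\rexp^2=\cO(n^{-2})$. Summing over $\Dtr$, the $k=1$ term is $-\eta\sum_{(x',y')\in\Dtr}\lexpp{\theta}\Theta_0(x,x')\,\tfrac{\dho\ell_t(x',y')}{\dho f}\rexp+\cO(n^{-1})$, which together with the previous step establishes \eqref{eq:ftpft}.

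\textbf{Main obstacle.} The delicate step is the replacement $\Theta_t\to\Theta_0$ in the linear term: plain cluster-graph counting gives only $\lexpp{\theta}\tfrac{\dho\ell_t(x',y')}{\dho f}\,\Theta_0(x,x')\rexp=\cO(n^0)$, not $\cO(n^{-1})$ for the difference, so one genuinely needs the finer estimate of Theorem~\ref{thm:constNTK} --- in particular its $\cO(n^{-2})$ variance bound, whose proof relies on the cancellation of disconnected Feynman diagrams rather than on Conjecture~\ref{conj:main} alone. A secondary point is that every correlation function above is evaluated on the evolved parameters $\theta_t$, so Theorem~\ref{thm:constantScaling} (and through it Lemma~\ref{lemma:subgraph}) must be invoked at each step; this, together with the truncation of \eqref{eq:Tevolve}, is exactly where the deep-linear hypothesis --- which makes $\theta_t$ a polynomial in $\theta_0$ --- is used, and extending the argument to smooth non-linear activations would additionally require that the large-width limit commute with the Taylor expansion.
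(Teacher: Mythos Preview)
Your proof is correct but takes a somewhat different route from the paper's. The paper argues directly on the expanded correlation functions: after writing $\dho L_t/\dho\theta^{\mu_j}=\sum_{x'_j}(f_t(x'_j)-y'_j)\,T_{\mu_j}(x'_j;\theta_t)$ and distributing, every term in the $k\ge 2$ sum is (up to a constant) an evolved correlation function whose cluster graph contains the star of size $k+1\ge 3$ as a connected subgraph; Lemma~\ref{lemma:subgraph} then gives $s_C\le -1$ immediately, and Theorem~\ref{thm:constantScaling} transfers the bound from $\theta_0$ to $\theta_t$. For the $k=1$ term the paper likewise expands $\Theta_t-\Theta_0$ step by step (as in the proof of Theorem~\ref{thm:constNTK}) and observes that every resulting correlation function again contains a size-$\ge 3$ cluster, so the product with $\dho\ell_t/\dho f$ is $\cO(n^{-1})$ by the same subgraph argument. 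No Cauchy--Schwarz and no variance bound are needed.

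Your Cauchy--Schwarz decomposition is a legitimate alternative: it cleanly separates the ``structural'' star contraction $\Sigma_t$ from the loss-dependent prefactor, which makes the role of the MSE/polynomial hypothesis more transparent and would in principle accommodate losses for which $\dho\ell/\dho f$ is not a polynomial in $f$ (provided one can bound its second moment). The price is a slight detour: you need $\lexpp{\theta}\Sigma_t^2\rexp=\cO(n^{-2})$ and, for the linear term, the second-moment bound $\lexpp{\theta}(\Theta_t-\Theta_0)^2\rexp=\cO(n^{-2})$, whereas the paper gets by with first-moment cluster-graph counting alone. One small correction to your ``main obstacle'' paragraph: the $\cO(n^{-2})$ variance bound in Theorem~\ref{thm:constNTK} does \emph{not} require the disconnected-diagram cancellation argument; it is a direct consequence of Lemma~\ref{lemma:variance}, which in turn uses only Conjecture~\ref{conj:main}. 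The cancellation argument is used elsewhere (for $\mathrm{Var}[\Theta_0]$ itself), not here.
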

\begin{proof}
  For a deep linear network, under a single gradient descent step we have
  \begin{align}
    \lexpp{\theta} f_{t+1}(x) - f_t(x) \rexp &=
    \sum_{k=1}^{d+1} \frac{(-\eta)^k}{k!}
    \sum_{\mu_1,\dots,\mu_k}
    \lexpp{\theta}
    \frac{\dho^k f_t(x)}{\dho \theta^{\mu_1} \cdots \dho \theta^{\mu_k}}
    \frac{\dho L_t}{\dho \theta^{\mu_1}} \cdots \frac{\dho L_t}{\dho \theta^{\mu_k}}
    \rexp
    \\ &=
    -\eta \sum_{(x',y')\in D_{\rm tr}}
    \lexpp{\theta}
    \Theta_t(x,x')
    \frac{\dho \ell_t(x',y')}{\dho f}
    \rexp
    \cr &\quad
    +
    \sum_{k=2}^{d+1} \frac{(-\eta)^k}{k!}
    \sum_{\mu_1,\dots,\mu_k}
    \lexpp{\theta}
    \frac{\dho^k f_t(x)}{\dho \theta^{\mu_1} \cdots \dho \theta^{\mu_k}}
    \frac{\dho L_t}{\dho \theta^{\mu_1}} \cdots \frac{\dho L_t}{\dho \theta^{\mu_k}}
    \rexp
    \,. 
  \end{align}
  First, consider the sum over $k$ in \eqref{eq:ftpft}.
  Each term in the sum is a correlation function for which the cluster graph contains a connected subgraph of size at least 3, and is therefore $\cO(n^{-1})$ by Lemma~\ref{lemma:subgraph} and Theorem~\ref{thm:main}.
  In the remaining $\cO(\eta)$ term, by the same argument as Theorem~\ref{thm:constNTK} we can replace $\Theta_t = \Theta_0 + \cO(n^{-1})$ in the correlation function.
  The result is equation \eqref{eq:ftpft}.
\end{proof}
As mentioned above, we note that the proof goes through when using stochastic gradient descent updates, with the difference that in \eqref{eq:ftpft} we should sum over mini-batch samples instead of over the entire training set.

\subsection{Spectral properties of the NTK and the Hessian}\label{app:spectrum}
With an eye towards understanding the structure of the loss landscape at large width and as another example use case of our approach, we investigate the relation between the spectra of the Hessian, and the NTK. 

Among other observations, we present an argument that for a network with $d$ hidden layers and MSE loss, the top $\Din$ ordered eigenvalues of the Hessian, $\lambda_{i}^{(H)}$, are related to those of the kernel $\lambda_{i}^{(\Theta)}$ by
\es{evrel}{
\lexpp{\theta}\lambda_{i}^{(H)} - \lambda_{i}^{(\Theta)}\rexp&=
\left\{
\begin{array}{l l }
\mathcal{O}(n^{-1/2}) &, d = 1\\
\mathcal{O}(n^{-1}) &, d > 1
\end{array}
\right.\,.
}
The remaining Hessian eigenvalues vanish at large width as $\mathcal{O}(n^{-1/2})$ for one hidden layer networks and $\mathcal{O}(n^{-1})$ for deeper networks.\footnote{Empirically we actually find the even stronger bound $\lexpp{\theta}\lambda_{i}^{(H)} - \lambda_{i}^{(\Theta)}\rexp=\mathcal{O}(n^{-1})$ for the top $\Din$ eigenvalue differences and $\mathcal{O}(n^{-1/2})$ for the remaining eigenvalues in the one hidden layer case. We can gain insight into this improved scaling through the perspective of degenerate eigenvalue perturbation theory \pcite{trefethen97}, but this is outside the scope of the current presentation.}
This is experimentally corroborated in Figure \ref{fig:ev_diff}.

The Hessian of a general loss takes the form,
\es{Hessian}{
H_{\mu\nu}&=\sum_{(x,y)\in D_{\rm tr}}\Big[\underbrace{\frac{\partial^{2}\ell(x,y)}{\partial f^{2}}\frac{\partial f(x)}{\partial\theta^{\mu}}\frac{\partial f(x)}{\partial\theta^{\nu}}}_{\mathcal{A}_{\mu\nu}}
+\underbrace{\frac{\partial\ell(x,y)}{\partial f}\frac{\partial^{2}f(x)}{\partial\theta^{\mu}\partial\theta^{\nu}}}_{\mathcal{B}_{\mu\nu}}\Big]\,.
}
The moments of $H$ are a useful way to understand its spectrum. As in other examples that we have seen, traces of powers of $\mathcal{B}$ lead to connected diagrams with higher and higher numbers of vertices, and so scale as increasingly negative powers of $n$. 
More explicitly, traces involving powers of $\mathcal{B}$ contain multiple contracted derivative tensors. 
As a result, taking the average of these traces over the weight distribution leads to correlation functions where the associated cluster graph, $G_C$, contain subgraphs with at least three connected vertices. By Lemma~\ref{lemma:subgraph} and Conjecture~\ref{conj:main} these correlation functions vanish at infinite width.
For example $\lexpp{\theta}\trace\left(\mathcal{A}\mathcal{B}\right)\rexp=\mathcal{O}(1/n)$.
There is a single exception to this, as can be easily seen in figure \ref{fig:d2f2_1hl}, $\lexpp{\theta}\trace(\mathcal{B}^{2})\rexp$ is $\mathcal{O}(1)$.
\begin{figure}
    \centering
    \begin{subfigure}[b]{0.2\textwidth}
        \includegraphics[width=\textwidth]{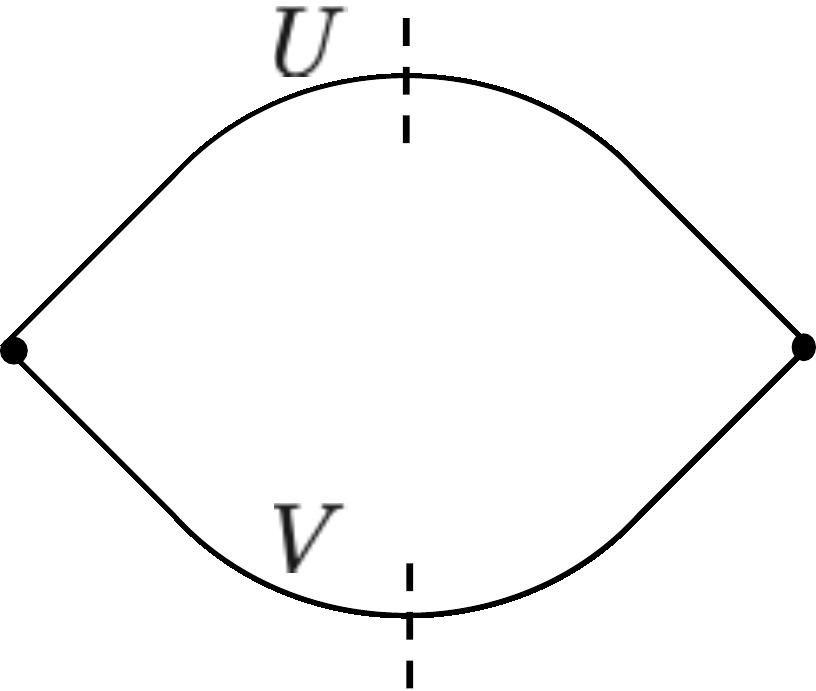}
    \end{subfigure}
      \caption{A prototypical diagram corresponding to $\lexpp{\theta}\textrm{Tr}\left(\mathcal{B}^{2}\right)\rexp$}\label{fig:d2f2_1hl}
\end{figure}
Thus most moments of the Hessian are equal to moments of $\mathcal{A}$ in expectation.\footnote{For the first moment in linear or ReLU networks, $\trace{\mathcal{B}}=0$.}
\es{hessArel}{
\lexpp{\theta}\trace\left(H^{m}\right)\rexp&=
\left\{\begin{array}{ll}
\lexpp{\theta}\trace\left(\mathcal{A}^{m}\right)\rexp&, m\neq 2\\
\lexpp{\theta}\trace\left(\mathcal{A}^{2}\right)\rexp + \lexpp{\theta}\trace\left(\mathcal{B}^{2}\right)\rexp&, m=2
\end{array}\right.+\mathcal{O}(1/n)\,.
}
What's more, we can relate moments of $\mathcal{A}$ to those of the kernel, as both are built out of two logit derivatives. 
Explicitly,
\es{AThetaRel}{
\trace(\mathcal{A}^{m})&=\trace((M\Theta)^{m})\,,
}
and so the moments of the Hessian are also related to those of the kernel.\footnote{Here we have argued for relations relating the mean of moments. It is not too difficult to see that these relations will also hold for typical realizations. This follows from Lemma \ref{lemma:variance}.
}
\es{hessThetarel}{
\lexpp{\theta}\trace\left(H^{m}\right)\rexp&=
\left\{\begin{array}{ll}
\lexpp{\theta}\trace\left((M\Theta)^{m}\right)\rexp&,m\neq 2\\
\lexpp{\theta}\trace\left((M\Theta)^{2}\right)\rexp + \lexpp{\theta}\trace\left(\mathcal{B}^{2}\right)\rexp&, m=2
\end{array}\right.+\mathcal{O}(1/n)\,.
}
Here, we have defined,
\es{Mdef}{
M(x_{a},x_{b})&=\delta_{ab}\frac{\partial^{2}\ell(x_a,y_a)}{\partial f^{2}}\ \ \ :(x_a,y_a)\in D_{\rm tr}\,.
}
For the case of MSE loss, we can go even further. In that case, $M(x_{a},x_{b})=\delta_{ab}$ and $\mathcal{B}$ decays to zero during training. We thus have,
\es{hessArelMSE}{
\begin{array}{ll}
\textrm{Initially:} & \lexpp{\theta}\trace\left(H^{m}\right)\rexp =
\left\{\begin{array}{ll}
\lexpp{\theta}\trace\left(\Theta^{m}\right)\rexp&, m\neq 2\\
\lexpp{\theta}\trace\left(\Theta^{2}\right)\rexp + \lexpp{\theta}\trace\left(\mathcal{B}^{2}\right)\rexp&, m=2
\end{array}\right.+\mathcal{O}(1/n)\\
\,\\
\textrm{At late times:} &  \lexpp{\theta}\trace\left(H^{m}\right)\rexp \to \lexpp{\theta}\trace\left(\Theta^{m}\right)\rexp\,\quad \forall m\,.
\end{array}
}

These results indicate that the only difference between the spectra of the Hessian and the NTK come from $\mathcal{B}$ and that $\mathcal{B}$ must have eigenvalues which scale as $1/\sqrt{\textrm{dim}(\mathcal{B})}$. As $\textrm{dim}(B)=\mathcal{O}(n)$ for one hidden layer networks and $\mathcal{O}(n^{2})$ for deep networks we are left with the relation \eqref{evrel} between the eigenvalues of $\Theta$ and $H$. As the network trains, the difference between these eigenvalues gets even smaller. 
These results are confirmed experimentally in Figure \ref{fig:ev_diff} and Figure \ref{fig:moments}.
  \begin{figure}
     \centering
     \begin{subfigure}[b]{0.46\textwidth}
         \centering
         \includegraphics[width=\textwidth]{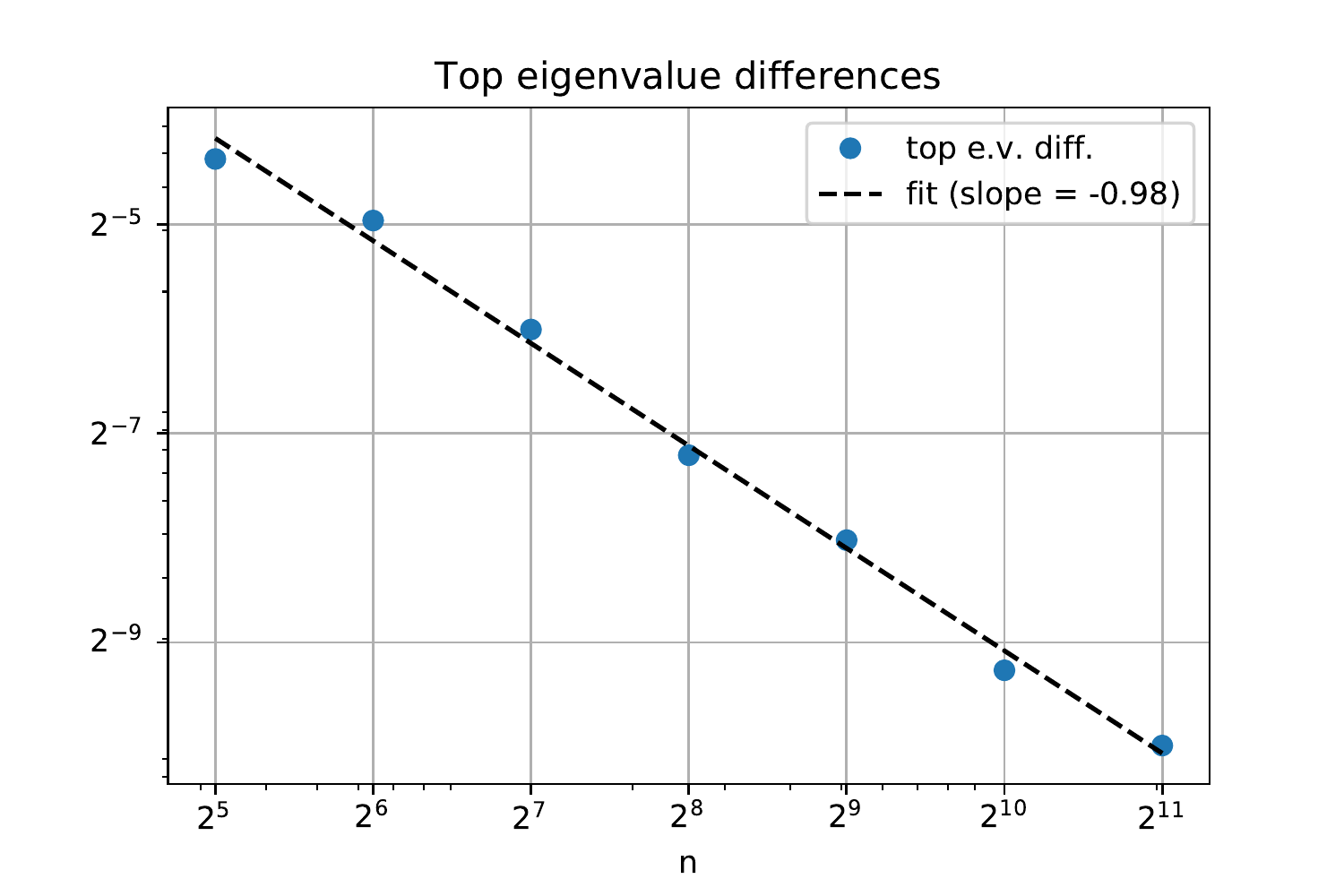}
         \caption{Hessian and NTK e.v. difference.}
         \label{fig:top_ev_diff}
     \end{subfigure}
 \ \ \ \ \ \ \ \ \ \ 
     \begin{subfigure}[b]{0.46\textwidth}
         \centering
         \includegraphics[width=\textwidth]{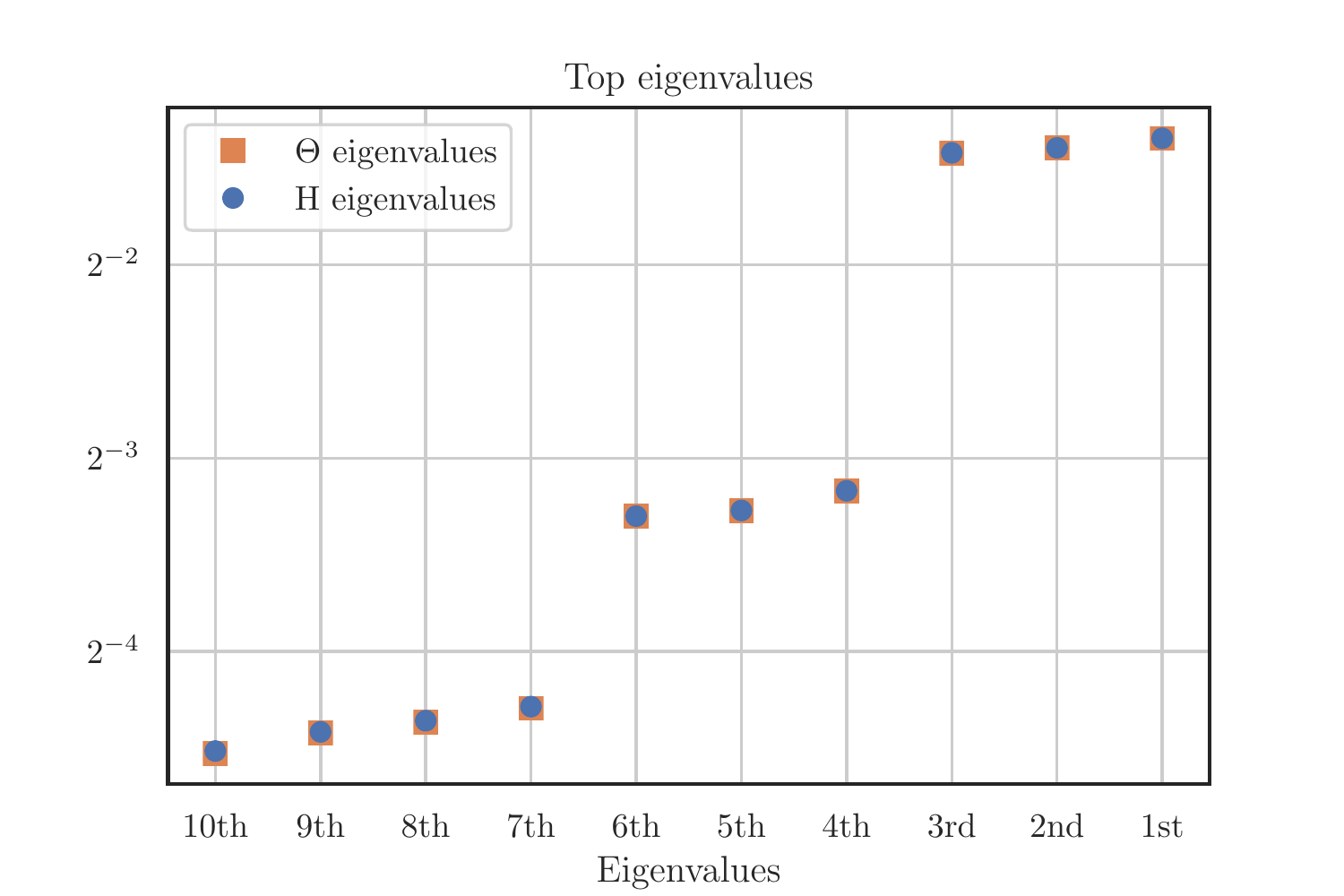}
         \caption{Degenerate spectrum.}
         \label{fig:repeatedspec}
     \end{subfigure}
     \caption{(a) The mean difference between the top eigenvalues of the NTK and the Hessian at initializations for a two hidden layer ReLU network of varying width match well with the predicted $\mathcal{O}(n^{-1})$ behavior. The mean is taken over 100 instances per width on two-class MNIST with 10 images per class. (b) The top 10 eigenvalues of both the Hessian and the NTK for a two-layer ReLU network of width 2048 trained on a \textbf{three-class} subset of MNIST. The top eigenvalues match well and show aspects of the repeated $N_{\textrm{class}}$ structure predicted at large width. The eigenvalues of $H$ are computed using the lanczos algorithm.} \label{fig:ev_diff}
 \end{figure}
 
 \begin{figure}
     \centering
     \begin{subfigure}[b]{0.46\textwidth}
         \centering
         \includegraphics[width=\textwidth]{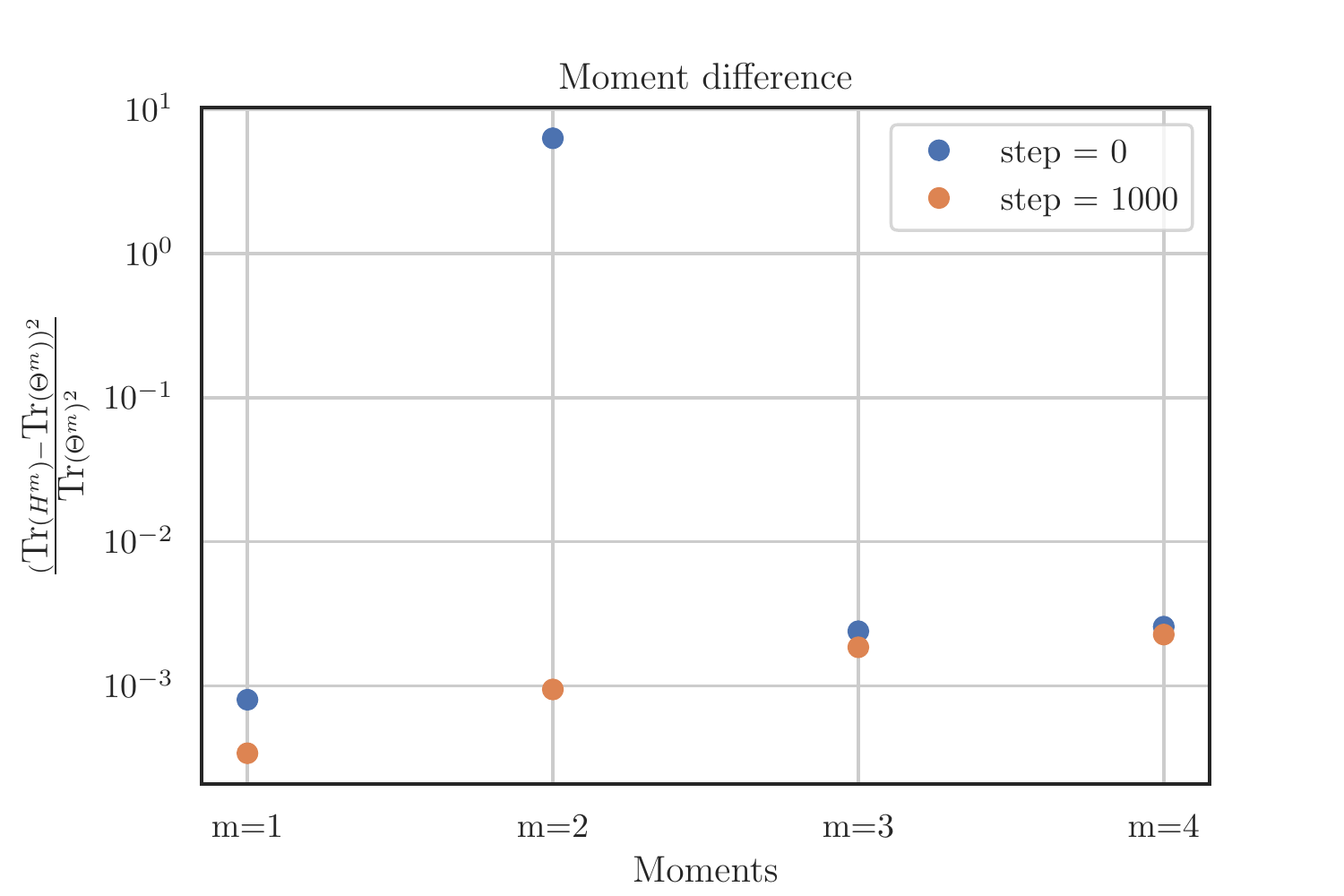}
         \caption{Relative moment differences}
         \label{fig:mom_diff}
     \end{subfigure}
 \ \ \ \ \ \ \ \ \ \ 
     \begin{subfigure}[b]{0.46\textwidth}
         \centering
         \includegraphics[width=\textwidth]{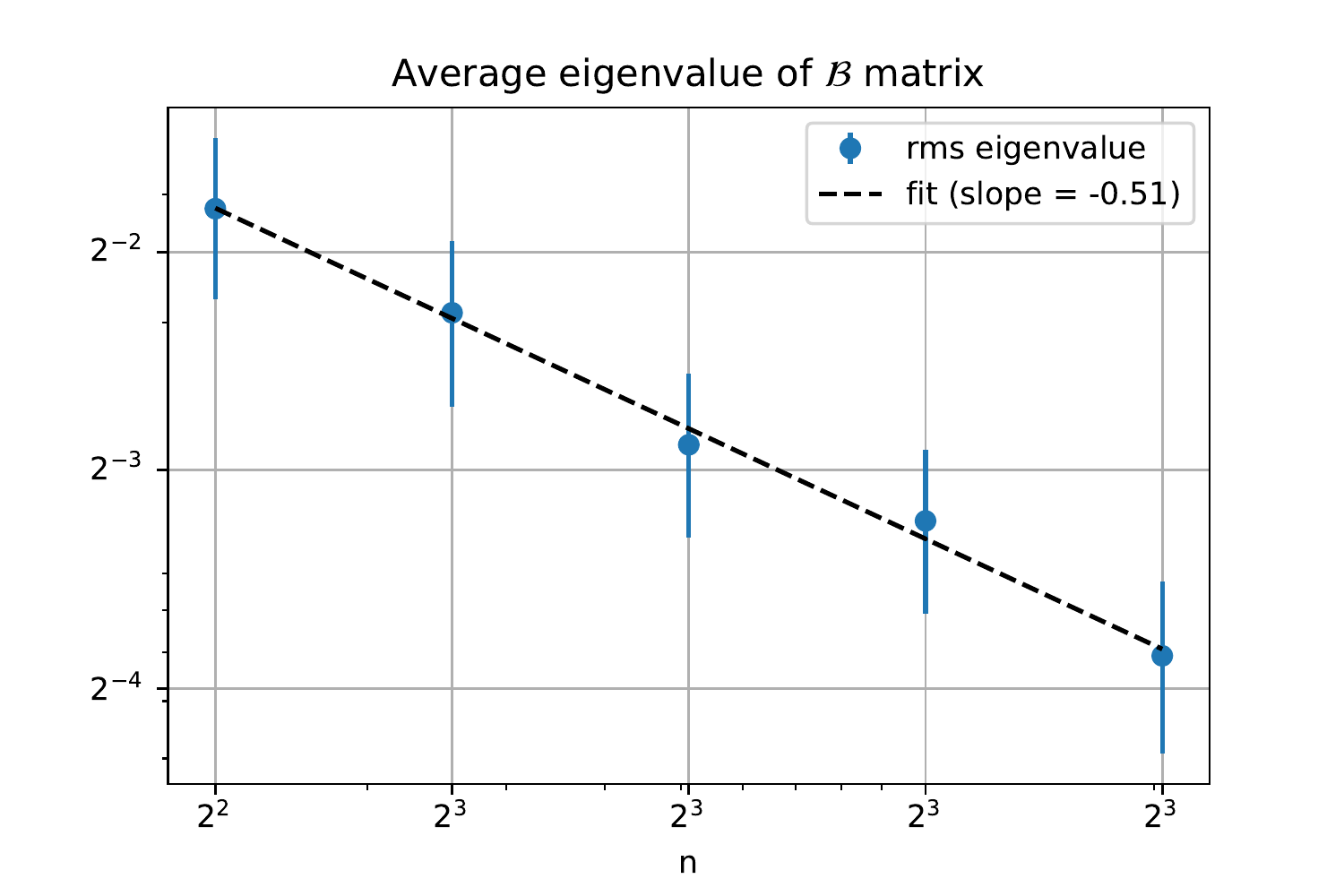}
         \caption{Second moment of $\mathcal{B}$}
         \label{fig:bmom}
     \end{subfigure}
     \caption{Relation between the spectrum of the Hessian and NTK. (a) The first 4 moments of $H$ and $\Theta$ for two hidden layer ReLU networks. At initialization the first third and fourth moments are numerically similar, while the relative difference of the second moment is $\mathcal{O}(1)$. After training all moments are numerically close. (b) The root mean square eigenvalue of the $\mathcal{B}$ matrix shows good agreement with the predicted $1/\sqrt{n}$ fall off for single hidden layer ReLU networks. Experiments were performed on ReLU networks trained on two-class MNIST with 100 images per class. Moments of the NTK were computed exactly, while moments of the Hessian and $\mathcal{B}$ were computed by randomly sampling 1000 vectors and using Hutchinson's trick.} \label{fig:moments}
 \end{figure}
\paragraph{Multi-class.}
This story can be generalized to multi-class neural network maps. In this case the logit is a map $f^{A}:\mathbb{R}^{\Din}\rightarrow \mathbb{R}^{N_\textrm{class}}$, and the NTK has two class indices. 
\es{classyNTK}{
\Theta^{AB}(x,x^{\prime})&=\frac{\partial f^{A}(x)}{\partial\theta^{\mu}}\frac{\partial f^{B}(x^{\prime})}{\partial\theta^{\mu}}\,.
}
Expression \eqref{hessThetarel} relating the moments of the Hessian to the NTK still holds in this context provided we take $\trace\left((M\Theta)^{m}\right)\rightarrow \trace\left((M_{AB}\Theta^{AB})^{m}\right)$, with the more general matrix,
\es{classyMdef}{
M_{AB}(x_{a},x_{b})=\delta_{ab}\frac{\partial\ell(x_a,y_a)}{\partial f^{A}\partial f^{B}} \ \ \ :(x_a,y_a) \in D_{\rm tr}\,.
}

At large width, the NTK approaches the identity matrix in class space, $\Theta^{AB}\rightarrow N_{\textrm{class}}^{-1}\delta^{AB}\trace\left(\Theta\right)$ \pcite{ntk}. This implies that the NTK spectrum consists of $N_{\textrm{class}}$ repeated copies.
This has consequences for the spectrum of the Hessian at large width.
For instance, for the case of MSE error it also implies $N_{\textrm{class}}$ repeated copies of the Hessian spectrum (See Figure \ref{fig:repeatedspec}).
It is intriguing to think this could serve as a path towards understanding the emergence of the $N_{\textrm{class}}$ (or $N_{\textrm{class}}-1$) large eigenvalues and corresponding subspace observed in the Hessian spectrum \pcite{sagun2016eigenvalues, sagun2017empirical, gur2018gradient, ghorbani2019investigation, papyan2019measurements}. 

\subsection{Higher-order network evolution}\label{app:corrections}

In this section we will explain how to compute higher-order corrections to training dynamics.
In principle, this prescription allows one to compute model dynamics as an expansion in $1/n$ to arbitrary order.
We apply this explicitly to compute the $\mathcal{O}(1/n)$ training dynamics. 
In Figure \ref{fig:nlo_time_dep}, we present experimental confirmation of our predictions for the evolution of the NTK.
In the main text, we presented these results for the special case of gradient flow with MSE loss.

So far we have mostly been using diagrammatic techniques to understanding the leading order scaling of correlation functions. It is interesting to try and understand finite width networks by asking how training dynamics are modified beyond the leading order asymptotics.
There are three clear sources of corrections to the leading behavior.
\begin{enumerate}
\item The initial kernel, $\Theta_{0}$, receives finite width corrections.
\item The linear (in learning rate) update for the network function, equation \eqref{eq:ftpft}, is modified away from infinite width.
\item The kernel is not constant at finite width. 
\end{enumerate}
The first source of corrections is automatically taken into account in typical empirical settings, as the finite-width $\Theta_0$ can be computed explicitly.
The other sources of corrections are non-trivial, and will be the focus of this section.
We begin by explaining how to take into account the non-constancy of the kernel order by order in $1/n$, while maintaining the continuous time approximation.
Next, we back away from the continuous time limit and write down the full discrete evolution.
We introduce a method to compute arbitrary order corrections, and explicitly work out the corrections at order $1/n$ for MSE loss.
\subsubsection{Continuous time.}
Continuous time evolution of the model function in neural networks is governed by the differential equation
\es{basic_diff}{
\frac{df(x;t)}{dt}&=-\sum_{(x',y')\in D_{\rm tr}}\Theta(x,x';t)\frac{\partial \ell(x',y')}{\partial f}\,.
}
As we have discussed at length, in the large width limit, this equation simplifies and $\Theta(t)$ is asymptotically constant \pcite{ntk}. Our goal is to move beyond this leading behavior at large width and solve \eqref{basic_diff} order-by-order in a $1/n$ expansion.
In Section~\ref{sec:applications} we described how to compute the $\mathcal{O}(n^{-1})$ corrections for the case of MSE loss. Here we explain how to handle corrections more generally as well as giving a more detailed discussion of the MSE case. 

The network map $f$ and kernel, $\Theta$ are members of the following family of operators.
\es{operator_def_app}{
  O_{s}(x_1,\ldots,x_{s};t):=\sum_{\mu}\frac{\partial O_{s-1}(x_1,\ldots,x_{s-1};t)}{\partial\theta_{\mu}}\frac{\partial f(x_{s};t)}{\partial\theta_\mu}\,,
  \quad s \ge 2 \,,
}
with $O_{1}:=f$. Here, as above, $O_{2} = \Theta$ is the kernel. With a general loss function, these operators satisfy.
\es{derivative_relation}{
\frac{dO_s(x_1,\ldots,x_{s};t)}{dt}& =-\sum_{(x',y')\in\Dtr} O_{s+1}(x_1,\ldots,x_{s},x';t)\frac{\partial \ell(x',y';t)}{\partial f}\,, \quad s \ge 1 \,.
}
Equations \eqref{operator_def_app} and \eqref{derivative_relation}. Give an infinite tower of first order ODEs, the solution of which gives the time evolution of the network map and the kernel.
\es{tower_written_out}{
\frac{df(x_1;t)}{dt}&=-\sum_{(x,y)\in D_{\rm tr}}\Theta(x_1,x;t)\frac{\partial\ell(x,y)}{\partial f}\nonumber\\
\frac{d\Theta(x_1,x_2;t)}{dt}&=-\sum_{(x,y)\in D_{\rm tr}}O_3(x_1,x_2,x;t)\frac{\partial \ell(x,y)}{\partial f}\nonumber\\
\frac{dO^{(1)}(x,x^{\prime},x^{\prime\prime};t)}{dt}&=-\sum_{(x,y)\in D_{\rm tr}}O_{4}(x_1,x_2,x_3,x;t)\frac{\partial \ell(x,y)}{\partial f}\\
&\vdots\nonumber
}
Solving this infinite tower is not feasible. If we wish to work to a given order in an expansion in $1/n$, however, there is a dramatic simplification, which makes a solution possible. Firstly, we can truncate these equations.
To see this note that the operators $O_s$ contain $s$ contracted derivative tensors.
As a result, by Lemma~\ref{lemma:subgraph} and Conjecture~\ref{conj:main}, correlation functions involving the operators, $O_s$ satisfy
\es{operator_scaling}{
\lexpp{\theta}O_{s}(x_1,\ldots,x_s;t)F(\vec{x};t)\rexp = \left\{\begin{array}{ll}
\mathcal{O}\left(n^{\frac{2-s}{2}}\right)&, s\, \textrm{even}\\
\mathcal{O}\left(n^{\frac{1-s}{2}}\right) &, s\, \textrm{odd}
\end{array}\right.\,,
} 
where $F(\vec{x};t)$ is arbitrary additional contribution to the integrand.

Thus, if we wish to work to solve for the time evolution up to corrections which scale as $\mathcal{O}(n^{-r})$ we can truncate the tower at $s=2r$ and set $O_{2r}(t)$ to be equal to its initial value. Note that the leading order solution, \eqref{basic_diff}, is the result of this procedure with $r=1$ and the results presented in the main text for the $\mathcal{O}(n^{-1})$ evolution correspond to $r=2$.

The truncation provides a dramatic simplification, however it is not immediately clear how to solve even the truncated differential equations in \eqref{derivative_relation}. We now describe how to organize the perturbative expansion of the operators $O_s(t)$ (including $f$ and $\Theta$) in such a way that the differential equations become tractable. 

The central idea is to write each operator, $O_s(t)$, as an expansion.
\es{expanded_ops}{
O_s(x_1,\ldots,x_s;t)&=\sum_{r=\lfloor\frac{s-1}{2}\rfloor}^{\infty}O_s^{(r)}(x_1,\ldots,x_s;t)
}

where each order $O_s^{(r)}$ captures the $\mathcal{O}(n^{-r})$ evolution of $O_s$. For example,
\es{expanded_ops_example}{
f(x_1;t)&=f^{(0)}(x_1;t)+f^{(1)}(x_1;t)+\cdots\nonumber\\
\Theta(x_1,x_2;t)&=\Theta^{(0)}(x_1,x_2;t)+\Theta^{(1)}(x_1,x_2;t)+\cdots\nonumber\\
O_3(x_1,x_2,x_3;t)&=O_3^{(1)}(x_1,x_2,x_3;t)+O_3^{(2)}(x_1,x_2,x_3;t)+\cdots\\
&\quad\!\!\!\vdots\nonumber
}
The notation $O_s^{(r)}$ means both that any correlation function containing $O_s^{(r)}$ is $\mathcal{O}(n^{-r})$ and, by Lemma~\ref{lemma:variance}, that typical realizations of the operators scale as $\mathcal{O}(n^{-r})$.\footnote{Note that in Section \ref{sec:applications} we used the alternate notation $\Theta_1$ for $\Theta^{(1)}$.}

Once the operators $O_s$ are organized in this way, solving the differential equations \eqref{basic_diff} is tractable. As the differential equations describing the evolution of $O_s^{(r)}$ for $r>0$ only depend on the time dependent solutions of $O_s^{(r-1)}$, we can iteratively solve for the $O_s^{(r)}$ order by order. For example, in Section~\ref{sec:applications}, we used the leading order solution for $f$ and $\Theta$ to solve for $\Theta^{(1)}$.

In principle this procedure can be extended to arbitrary order in $1/n$. Before going onto explain the finite step corrections to this procedure, we reproduce the results of section \ref{sec:applications} in more detail. 

\paragraph{MSE continuous time example.}

The MSE loss is,
\es{MSE}{
L^{\textrm{MSE}}&=\frac{1}{2}\sum_{(x,y)\in D_{\rm tr}}\left(f(x)-y_{a}\right)^{2}
}
As such the update equation simplifies to,
\es{MSE_update}{
\frac{df(x;t)}{dt}&=-\sum_{(x',y')\in D_{\rm tr}}\Theta(x,x';t)\left(f(x';t)-y'\right)\,,
}
The leading order solution to equation \eqref{MSE_update} is exponential, kernel evolution,
\es{lo_sol}{
f^{(0)}(t)&=y+e^{-t\Theta_{0}}(f_{0}-y)\\
\Theta^{(0)}(t)&=\Theta_{0}\,.
}
Here we are using a condensed notation where $f_0:=f^{(0)}(0)$, $\Theta_0:=\Theta^{(0)}(0)$. Equation \eqref{lo_sol} is a vector equation with $f_0$, $f^{(0)}(t)$, and $y$ are vectors over the training set and the leading order kernel $\Theta_0$ is a square matrix over the same space.

As discussed above to study the $\mathcal{O}(1/n)$ evolution, we can truncate the set of equations in \eqref{derivative_relation} at $s=4$, and set $O_{4}(t)=O_{4}^{(1)}(t)=O_{4}(0)$ up to corrections which scale as $\mathcal{O}(n^{-2})$.

Moving up a rung in \eqref{derivative_relation}, we have
\es{truncated_eqs2}{
\frac{dO_3(x_1,x_2,x_3;t)}{dt}&=-\sum_{(x,y)\in D_{\rm tr}}O_4(x_1,x_2,x_3,x;t)\left(f(x;t)-y\right)\,.
}
To solve for $O_3^{(1)}(t)$, i.e. neglecting $\mathcal{O}(n^{-2})$ corrections, we can plug in the leading order approximations, to $O_4(t)$ and $f(t)$.
\es{truncated_eqs2_simp}{
\frac{dO_3^{(1)}(x_1,x_2,x_3;t)}{dt}&=-\sum_{(x,y)\in D_{\rm tr}}O_4(x_1,x_2,x_3,x;0)\left(f^{(0)}(x;t)-y\right)\,.
}
This gives,
\es{integrated_eqO1}{
  O^{(1)}_{3}(x_1,x_2,x_3;t)&=O_{3}(x_1,x_2,x_3;0)-\int_{0}^{t}dt^{\prime}\sum_{(x,y)\in D_{\rm tr}}O_{4}(x_1,x_2,x_3,x;0)\left(f^{(0)}(x;t)-y\right)\,.
 }
Using the explicit form of $f^{(0)}(t)$ we can write,
\es{expO1LO}{
  O^{(1)}_{3}(\vec{x};t)&=O_{3}(\vec{x};0)-\sum_{a}O_{4}(\vec{x};0)\cdot\left(\Theta^{-1}_{0}(1-e^{-t\Theta_{0}})(f_{0}-y)\right) \,.
  }
 Here we are again using a condensed notation. $\vec{x}=(x_1,x_2,x_3)$. $f_0$ and $y$ are vectors over the training set while $\Theta_0$ is a square matrix, and $O_{4}(\vec{x};0)$ is also a vector over the training set, with the value $O_4(\vec{x},x';0)$ on the point $x'\in D_{\rm tr}$.

Moving one more step up the ladder gives
\es{nlo_theta}{
  \Theta(x_1,x_2;t)&=\Theta_{x_1,x_2;0}+\Theta^{(1)}(x_1,x_2;t)+\mathcal{O}(n^{-2})\,,\nonumber\\
  \Theta^{(1)}(x_1,x_2;t) &= -\int_{0}^{t}dt^{\prime}\sum_{(x,y)\in D_{\rm tr}}O^{(1)}_{3}(x_1,x_2,x;t')\left(f^{(0)}(x;t')-y\right) \,.
}
Plugging in $O^{(1)}_{3}$, and using the eigen-decomposition of $\Theta_{0}$ to perform the integrals gives,
\begin{align}
\Theta^{(1)}(\vec{x};t)=&-\sum_{i}\frac{1}{\lambda_{i}}(O_{3}(\vec{x};0)\cdot\hat{e}_{i})(\Delta f_{0}\cdot\hat{e}_{i})\left[1-e^{-t\lambda_{i}}\right]\nonumber\\
&+\sum_{ij}\frac{1}{\lambda_{j}}(\hat{e}_{i}^{T}O_{4}(\vec{x};0)\hat{e}_{j})(\Delta f_{0}\cdot \hat{e}_{i})(\Delta f_{0}\cdot \hat{e}_{j})\left[\frac{1-e^{-t\lambda_{i}}}{\lambda_{i}}-\frac{1-e^{-t(\lambda_{i}+\lambda_{j})}}{\lambda_{i}+\lambda_{j}}\right]\,.\label{ThetaNLOExpint}
\end{align}
Here, we have introduced the eigenvectors, $\hat{e}_{i}$, which are vectors over the training dataset and eigenvalues $\lambda_{i}$ of the initial kernel, $\Theta_{0}$. The vector $\vec{x}=(x_1,x_2)$ as $\Theta$ depends on two inputs. $O_{3}(\vec{x};0)$ is a vector over the dataset while $O_{4}(\vec{x};0)$ is a square matrix and $\Delta f_{0}:=f_{0}-y$ is a vector over the training data.

Finally, we can plug in the expression \eqref{nlo_theta} into \eqref{MSE_update} and give the sub-leading behavior for $f(t)$
\es{nlo_f}{
\frac{df(x;t)}{dt}&=-\sum_{(x,y)\in D_{\rm tr}}\left(\Theta(x,x';0)+\Theta^{(1)}(x,x';t)\right)\left(f(x';t)-y\right)+\mathcal{O}(n^{-2})\\
f(t)&=y+e^{-\Theta_{0}t}\left(1-\int_{0}^{t}dt^{\prime}e^{t'\Theta_{0}}\Theta^{(1)}(t^{\prime})e^{-t'\Theta_{0} }\right)\left(f_{0}-y\right)\,.
}
This completes the full $\mathcal{O}(n^{-1})$ time dependance.This prediction for the $\mathcal{O}(1/n)$ time dependence is confirmed experimentally in Figure~\ref{fig:nlo_time_dep}.
 \begin{figure}
     \centering
     \begin{subfigure}[b]{.49\textwidth}
         \centering
         \includegraphics[width=\textwidth]{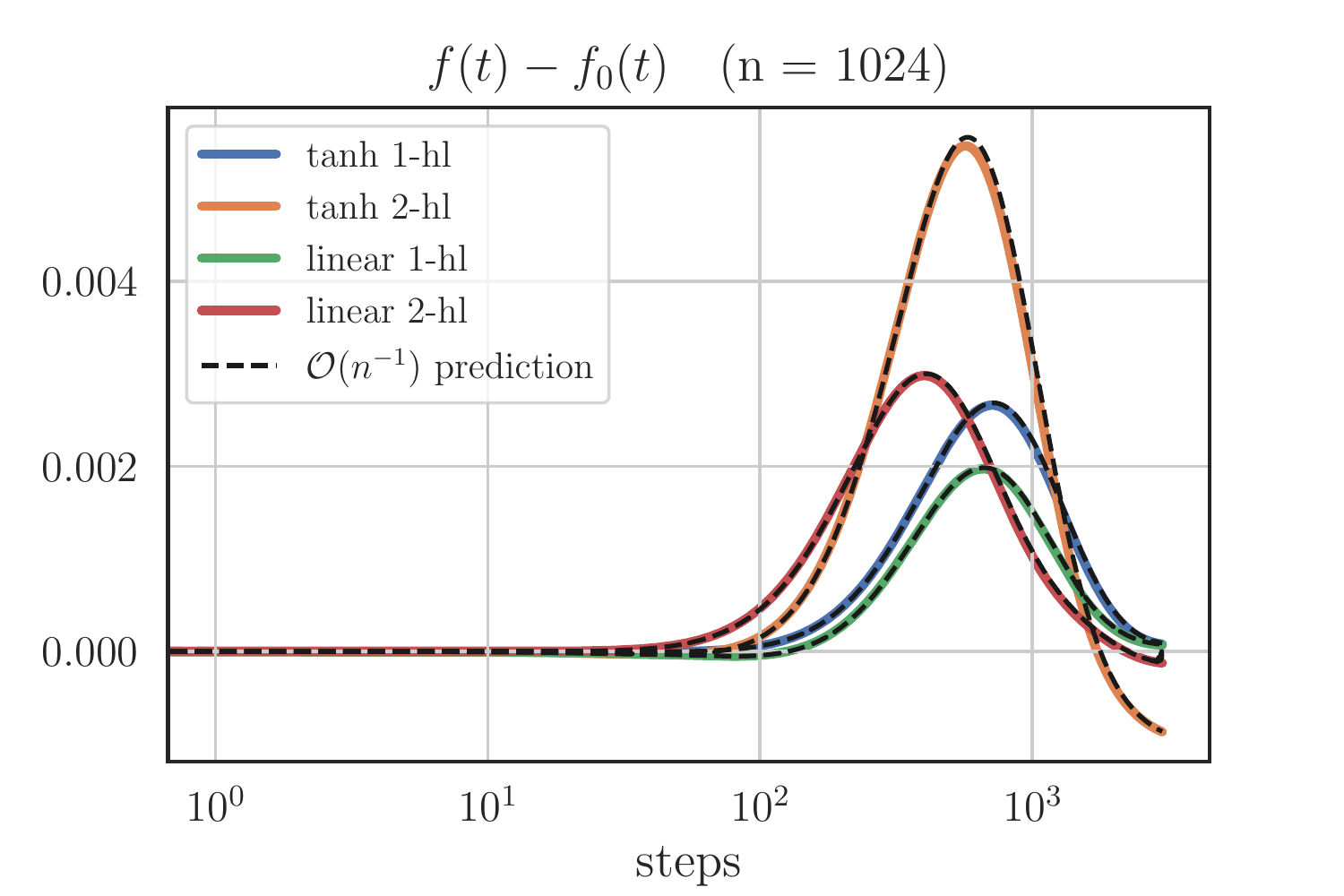}
         \caption{Evolution of network map.}
         \label{fig:f_evol_wide}
     \end{subfigure}
    \begin{subfigure}[b]{.49\textwidth}
         \centering
         \includegraphics[width=\textwidth]{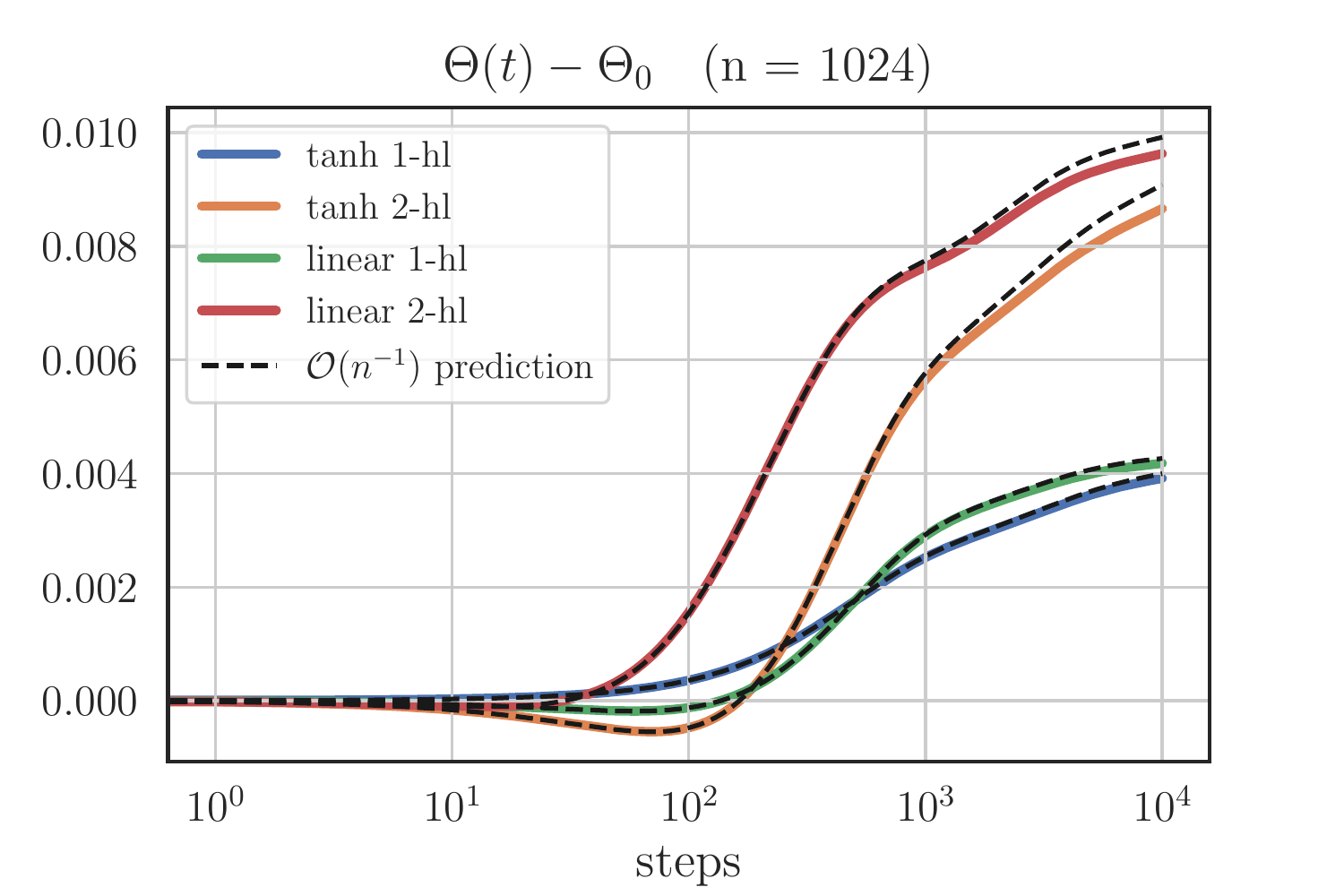}
         \caption{Evolution of NTK.}
         \label{fig:Theta_evol_wide}
     \end{subfigure}
     
     \begin{subfigure}[b]{.49\textwidth}
         \centering
         \includegraphics[width=\textwidth]{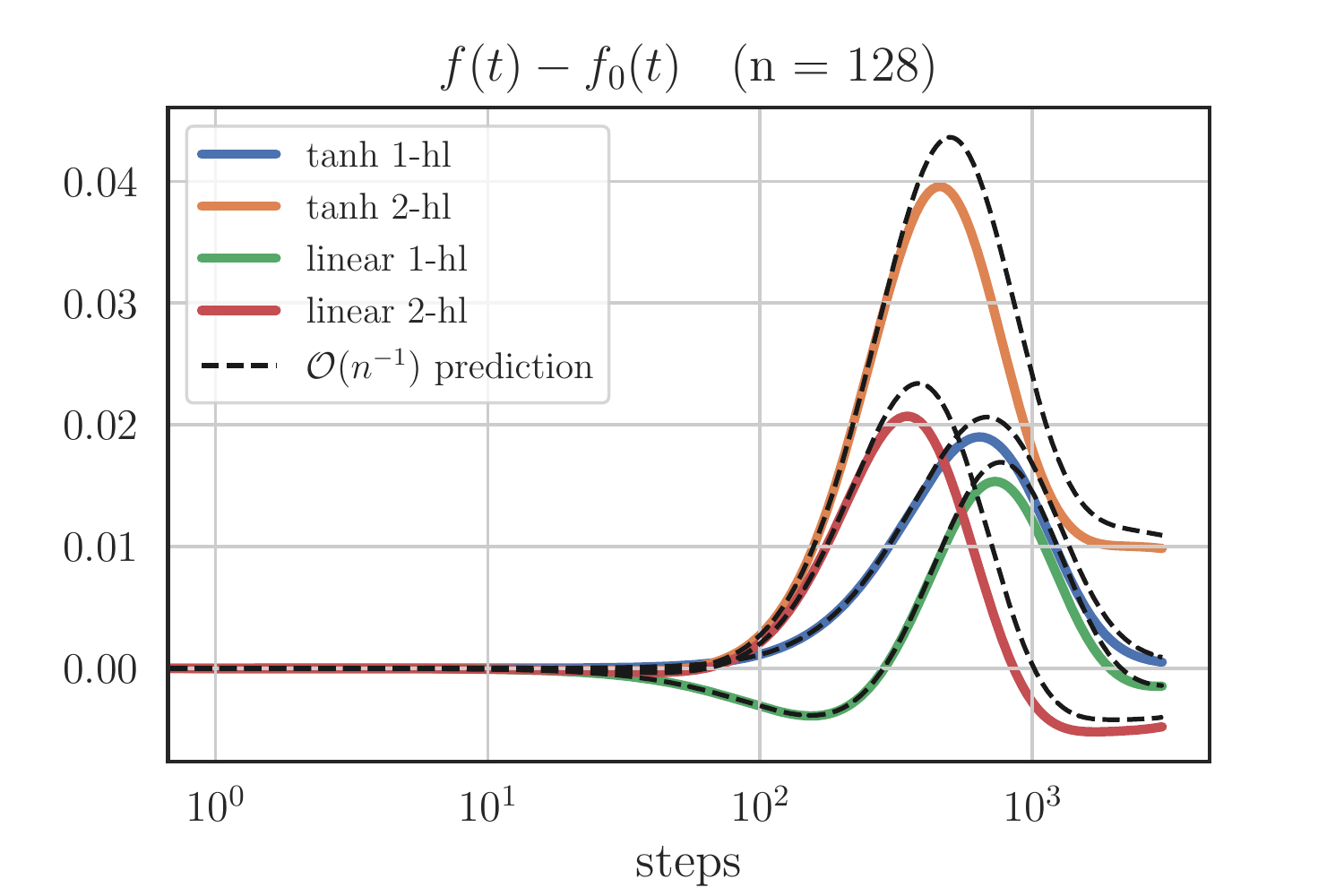}
         \caption{Evolution of network map for a narrower network.}
         \label{fig:f_evol_narrow}
     \end{subfigure}
     \begin{subfigure}[b]{.49\textwidth}
         \centering
         \includegraphics[width=\textwidth]{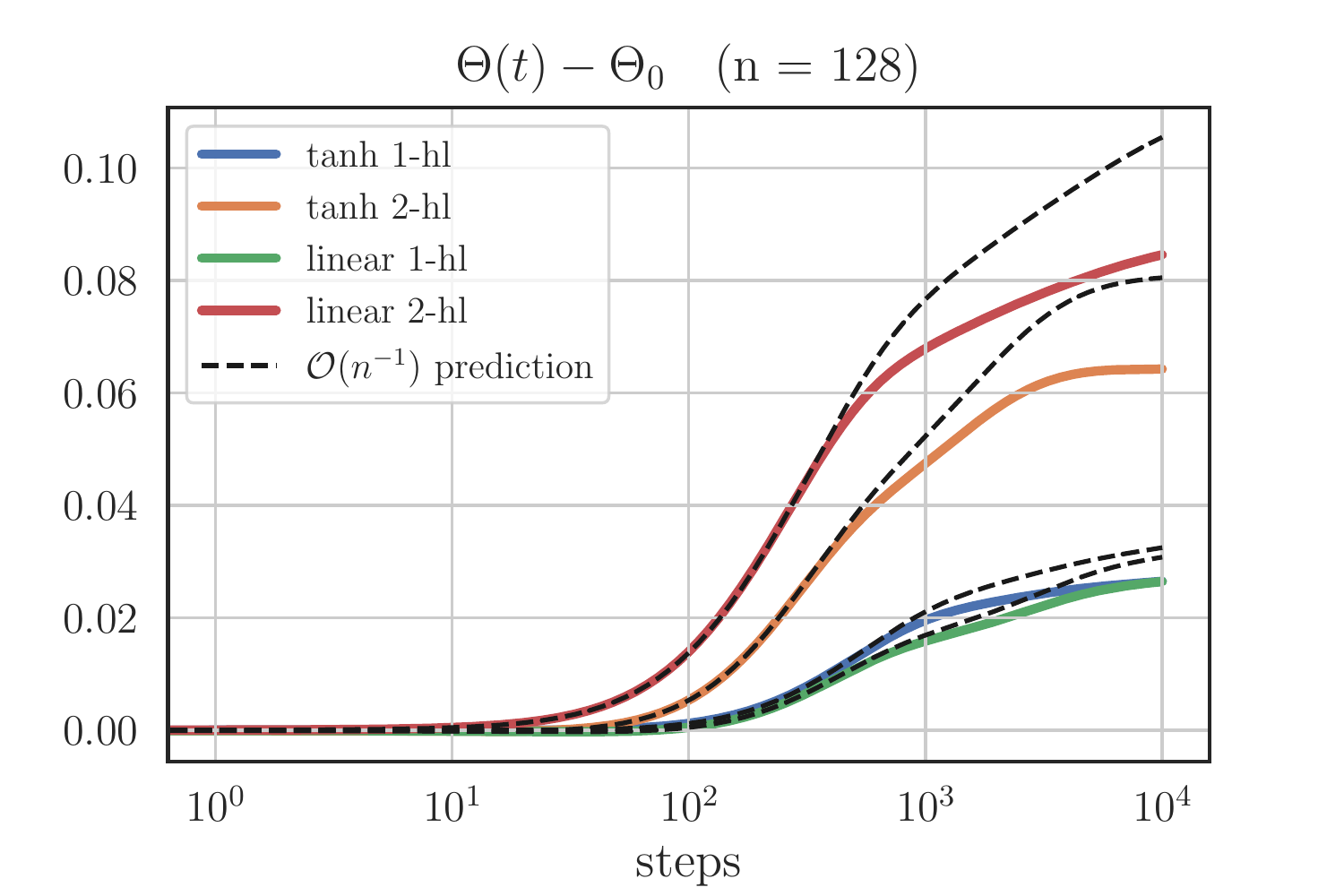}
         \caption{Evolution of NTK for a narrower network.}
         \label{fig:Theta_evol_narrow}
     \end{subfigure}
     \caption{Single instance evolution for a randomly selected element of the NTK and network map. Theoretical predictions at $\mathcal{O}(n^{-1})$ are represented by dashed lines, while the true network evolution is shown in solid. (a-b) Wide networks show excellent agreement with predictions. (c-d) Predictions match reasonably well even down to modest widths.
All plots correspond to single training runs on 2-class MNIST with 10 samples per class. The learning rate is 0.1 for all runs. All runs reach training accuracy 1. 
} \label{fig:nlo_time_dep}
 \end{figure}

Note, one consequence of \eqref{ThetaNLOExpint} is that the $\mathcal{O}(1/n)$ corrections to $\Theta$ go to a constant at late times.
\es{ThetaLateTimes}{
  \Theta^{(1)}_\infty
  &:=
  \lim_{t\rightarrow\infty}\Theta^{(1)}(t)
  \,=\, -\sum_{i}\frac{1}{\lambda_{i}}(O_{3}(\vec{x};0)\cdot\hat{e}_{i})(\Delta f_{0}\cdot\hat{e}_{i})
+\sum_{ij}\frac{1}{\lambda_{i}(\lambda_{i}+\lambda_{j})}(\hat{e}_{i}^{T}O_{4}(\vec{x};0)\hat{e}_{j})(\Delta f_{0}\cdot \hat{e}_{i})(\Delta f_{0}\cdot \hat{e}_{j})\,.
}
And the asymptotic evolution of $\Delta f(t)$ is again constant kernel evolution, but with a corrected kernel.
\es{asymp_f}{
f(t)&\rightarrow y+e^{-t(\Theta_{0}+\Theta_{\infty}^{(1)})}\left(f_{0}-y\right)\,.
}

It is worth noting that these predictions are quite detailed, giving the full time dependence, rather than just the overall scaling with width. In deriving these results and in the discrete time analysis below, we implicitly make an assumption that we can control the discrete time corrections or in the continuous time case that $f(t)$, $\Theta(t)$ and all the $O_s(t)$ are differentiable. This relies on being able to differentiate through the activation function. For non-smooth activations such as ReLU, this assumption is suspect, and indeed the evolution is more subtle in the ReLU case and a full analysis is left to future work. For this reason we present experimental results for networks with smooth activation functions. 

\subsubsection{Discrete time.}
With the $1/n$ corrections to continuous time evolution under our belts, it is possible to also keep track of corrections coming from the discrete update step.
The essential point, that it is possible to solve iteratively, order by order in $1/n$ is  not changed.
To see this we can look at how the update equations are modified.
Beginning with the network output update equation, we have,
\begin{align}
f_{t+1}(x)-f_{t}(x) &=-\eta\sum_{(x',y')\in D_{\rm tr}} \Theta_{t}(x,x') \frac{\partial\ell_{t}(x',y')}{\partial f}\nonumber\\
&\quad +\frac{\eta^{2}}{2}\sum_{\mu\nu}\frac{\partial^{2}f_{t}(x)}{\partial\theta^{\mu}\partial\theta^{\nu}}g^{\mu}_{t}g^{\nu}_{t}-\frac{\eta^{3}}{3!}\sum_{\mu\nu\rho}\frac{\partial^{3}f_{t}(x)}{\partial\theta^{\mu}\partial\theta^{\nu}\partial\theta^{\rho}}g^{\mu}_{t}g^{\nu}_{t}g^{\rho}_{t}+\cdots\,.\label{disc_logit_update}
\end{align}
In Section~\ref{app:discLinEvo} we argued that the $\cO(\eta^2)$ terms vanish at large width.
In more detail, each factor of the gradient, $g^{\mu}$, contains a contracted derivative tensor. Thus, the cluster graph for correlation functions involving the $\eta^{2}$ term will always contain three contracted vertices and thus the correlation functions will scale as $\mathcal{O}(n^{-1})$. The $\eta^{3}$ term will give four vertices, and thus also scale as $\mathcal{O}(1/n)$, the $\eta^{4}$ contribution will be $\mathcal{O}(1/n^{2})$ and so on.

Just as in the continuous time case this equation is still solvable order by order in $1/n$.
What we mean by solvable is the following. All terms on the RHS appearing with $f_{t}$ are already higher order as a result of their derivative structure. This means, just as in the continuous case, we can use the lower order solution of $f_{t}$ in the gradient terms on the RHS to solve for $f_{t}$ on the LHS. 

Similarly, the update equation for $\Theta$ receives discrete time modifications.
\es{disc_theta_update}{
\Theta_{t+1}(x_1,x_2)- \Theta_{t}(x_1,x_2)&=-\eta \sum_{a}O_{3;t}(x_1,x_2,x)\frac{\partial \ell_{t}(x,y)}{\partial f}+\frac{\eta^{2}}{2}\sum_{\mu\nu}\frac{\partial^{2}\Theta_{t}(x_1,x_2)}{\partial\theta^{\mu}\partial\theta^{\nu}}g^{\mu}_{t}g^{\nu}_{t}+\cdots\,.
}
Here we have adopted a notation where, $\vec{x}=(x_1,\ldots,x_s)$. Just as for $f$, increased powers in $\eta$ are increasingly suppressed in $n$ and we can iteratively solve this equation using solutions at leading orders in $n$ to solve for sub-leading behavior. 

More generally the tower of differential equations defined recursively in \eqref{derivative_relation} becomes
\es{discrete_relation}{
O_{s;t+1}(\vec{x}) - O_{s;t}(\vec{x})=&-\sum_{(x',y')\in\Dtr} O_{s+1;t}(\vec{x},x')\frac{\partial \ell(x',y';t)}{\partial f}\nonumber\\
&+\sum_{k=2}^{\infty}\frac{\eta^{k}}{k!}\sum_{\mu_1,\mu_2,\dots,\mu_k}\frac{\partial^{k}O_{s;t+1}(\vec{x})}{\partial \theta_{\mu_1}\partial \theta_{\mu_2}\dots\partial\theta_{\mu_k}}g_{\mu_1}g_{\mu_2}\cdots g_{\mu_k}
\,, \quad s \ge 1 \,.
}

As in the continuous time case, at a given order in $n$, we can truncate this tower and use lower order solutions to solve for the time evolution up to the desired order.
To ground this discussion in a concrete use case, we again walk through the $1/n$ corrections for MSE loss, now in discrete time.
\paragraph{MSE discrete time example.}
\*

For discrete time evolution, the leading order behavior of the model map is
\es{lo_sol_disc}{
f_{t}^{(0)}&=\left(1-\eta\Theta_{0}\right)^{t}\left(f_{0}-y\right)\\
\Theta^{(0)}_{t}&=\Theta_{0}\,.
}
At next order we can proceed by solving the truncated set of equations in \eqref{discrete_relation}.
The first equation, for $O_{4;t}$ still gives a constant solution,
\es{truncated_eqs1_disc}{
O_{4;t}^{(1)}&=O_{4;0}\,.
}
Here, both the discrete and continuous time terms that would appear on the RHS are $\mathcal{O}(1/n^{2})$.

Next, we must solve for $O_{3;t}$. The discrete time update is,
\es{O1_update_disc}{
O_{3;t+1}(\vec{x})-O_{3;t}(\vec{x})&=
-\eta \sum_{(x',y')\in D_{\rm tr}}O_{4;t}(\vec{x},x')\left(f_{t}(x')-y'\right)
+\frac{\eta^{2}}{2}\sum_{\mu\nu}\frac{\partial^{2}O_{3;t}(\vec{x})}{\partial\theta^{\mu}\partial\theta^{\nu}}g^{\mu}_{t}g^{\nu}_{t}+\cdots\,.
}
To order $1/n$ we can drop the order $\eta^{2}$ and higher terms, as they contain expressions with 5 or more contracted derivative tensors.
Thus at this order, we are left with the discrete analogue of \eqref{truncated_eqs2}.
\es{O1_update_disc_leading}{
O_{3;t+1}(\vec{x})-O_{3;t}(\vec{x})&=-\sum_{(x',y')\in D_{\rm tr}}O_{4;t}(\vec{x},x')\left(f_t(x')-y'\right)\,.
}
which we can sum to get the discrete version of \eqref{integrated_eqO1},
\es{O1_disc_leading}{
O^{(1)}_{3;t}(\vec{x})&=O_{3;0}(\vec{x})-\eta\sum_{t^{\prime}=1}^{t}\sum_{(x',y')\in D_{\rm tr}}O_{4;0}(\vec{x},x')\left( f_{0;t}(x')-y'\right)\,.
}
Explicitly, this takes the form
\es{O1_disc_leading_exp}{
O^{(1)}_{3;t}(\vec{x})&=O_{3;0}(\vec{x})-O_{4;0}(\vec{x})\cdot\left(\Theta_{0}^{-1}(1-\eta\Theta_{0})(1-(1-\eta\Theta_{0})^{t})\left(f_{0}-y\right)\right)\,.
}
Here we have adopted notation similar to above, where $O_{4;0}(\vec{x})$ is a vector over the training dataset.

So far, this procedure has been a discrete analogue of what we have done in the continuous time case, however as we move on to compute $\Theta$ and $f$ we will have to keep track of the novel corrections, which vanish in continuous time. Explicitly, at order $1/n$ the discrete update for $\Theta_{t}$ is given by,
\begin{align}
\Theta_{t+1}^{(1)}(\vec{x}) - \Theta_{t}^{(1)}(\vec{x})&=-\eta \sum_{a}O_{3;t}^{(1)}(\vec{x},x')\left(f_{t}^{(0)}(x')-y'\right)\nonumber\\
&\quad +\frac{\eta^{2}}{2}\sum_{x',x'',y',y''\in D_{\rm tr}}\sum_{\mu\nu}\frac{\partial^{2}\Theta_{0}(\vec{x})}{\partial\theta^{\mu}\partial\theta^{\nu}}\frac{\partial f_{0}(x')}{\partial\theta^{\mu}}\frac{\partial f_{0}(x'')}{\partial\theta^{\nu}}\left(f_{t}^{(0)}(x')-y'\right)\left(f_{t}^{(0)}(x'')-y''\right)\,.\label{disc_theta_update_nlo}
\end{align}
This can be summed to give $\Theta_{t}^{(1)}$.

To move up to the neural network map itself there is one additional complication. The $\mathcal{O}(\eta^{2}/n)$ term, $\sum_{\mu\nu}\frac{\partial^{2}f_t(x)}{\partial\theta^{\mu}\partial\theta^{\nu}}\frac{\partial f_{t}(x')}{\partial\theta^{\mu}}\frac{\partial f_{t}(x'')}{\partial\theta^{\nu}}$, in \eqref{disc_logit_update} has non trivial time dependence. We can deal with this just as we have been doing, by taking an extra time derivative and integrating. 
Defining,
\es{o1tildedef}{
O_{1,1;t}(x_1,x_2,x_2)&:=\sum_{\mu\nu}\frac{\partial^{2}f_t(x_1)}{\partial\theta^{\mu}\partial\theta^{\nu}}\frac{\partial f_{t}(x_2)}{\partial\theta^{\mu}}\frac{\partial f_{t}(x_3)}{\partial\theta^{\nu}}\,.
}
We have 
\es{o1tdepdiff}{
O_{1,1;t+1}^{(1)}(\vec{x})-O_{1,1;t}^{(1)}(\vec{x})&=-\eta\sum_{(x',y')\in D_{\rm tr}}\sum_{\mu}\frac{\partial O_{1,1;0}(\vec{x})}{\partial\theta^{\mu}}\frac{\partial f^{(0)}_{t}(x')}{\partial\theta^{\mu}}\left(f_{t}^{(0)}(x')-y'\right)\,,
}
with the solution,
\es{o1tdepsol}{
O_{1,1;t}^{(1)}(\vec{x})&=O_{1,1;0}(\vec{x})-\sum_{\mu}\frac{\partial O_{1,1;0}(\vec{x})}{\partial\theta^{\mu}}\frac{\partial f_{0}}{\partial\theta^{\mu}}\cdot\left(\Theta_{0}^{-1}(1-\eta\Theta_{0})(1-(1-\eta\Theta_{0})^{t})\Delta f_{0}\right)_{a}\,.
}
This can in turn be plugged into the update equation for $f_{t}$.
\begin{align}
f_{t+1}(x)-f_{t}(x) &=-\eta\sum_a \left(\Theta_{0}(x,x_a)+\Theta_{t}^{(1)}(x,x_a)\right)\left(f_{t}(x_a)-y_a\right)\nonumber\\
&\quad +\frac{\eta^{2}}{2}\sum_{a,b}O_{1,1;t}^{(1)}(x,x_a,x_b)\left(f_{t}^{(0)}(x_a)-y_a\right)\left(f_{t}^{(0)}(x_b)-y_b\right)\nonumber\\
&\quad-\frac{\eta^{3}}{3!}\sum_{a,b,c}\frac{\partial^{3}f_{0}(x)}{\partial\theta^{\mu}\partial\theta^{\nu}\partial\theta^{\rho}}\frac{\partial f_{0}(x_{a})}{\partial\theta^{\mu}}\frac{\partial f_{0}(x_{b})}{\partial\theta^{\nu}}\frac{\partial f_{0}(x_{c})}{\partial\theta^{\rho}}\left(f_{t}(x_a)-y_a\right)\left(f_{t}(x_b)-y_b\right)\left(f_{t}(x_c)-y_c\right)\,.\label{disc_f_update_partial}
\end{align}
Here $(x_{a},y_{a})$ are elements of the training set $D_{\rm tr}$ and summed over. This equation can be solved to give $f_{t}^{(1)}$.
\es{disc_f_update_nlo}{
f_{t}^{(1)}&=(1-\eta\Theta_{0})^{t}\sum_{t'=1}^{t}(1-\eta\Theta_{0})^{-(t'+1)}\left[-\eta\Theta_{\textrm{NLO},\,t'}(1-\eta\Theta_{0})^{t'}\Delta f_{0}+\textrm{Disc}_{t'}\right]\,.
}
Where $\textrm{Disc}_{t}$ contains the discrete derivative updates at $\mathcal{O}(1/n)$.
\begin{align}
\textrm{Disc}_{t}:=&\frac{\eta^{2}}{2}\sum_{a,b}O^{(1)}_{1,1;\,t}(x_{a},x_{b})\left(f_{t}(x_a)-y_a\right)\left(f_{t}(x_b)-y_b\right)\nonumber\\
&-\frac{\eta^{3}}{3!}\sum_{a,b,c}\frac{\partial^{3}f_{0}}{\partial\theta^{\mu}\partial\theta^{\nu}\partial\theta^{\rho}}\frac{\partial f_{0}(x_{a})}{\partial\theta^{\mu}}\frac{\partial f_{0}(x_{b})}{\partial\theta^{\nu}}\frac{\partial f_{0}(x_{c})}{\partial\theta^{\rho}}\left(f_{t}(x_a)-y_a\right)\left(f_{t}(x_b)-y_b\right)\left(f_{t}(x_c)-y_c\right)\,.\label{Discdef}
\end{align}
These expressions may look fairly intimidating. The key point is that all terms in the summand in \eqref{disc_theta_update_nlo} and thus \eqref{disc_f_update_nlo} are known functions of time and initial data, just as in the continuous time setting.

\bibliography{large-width-feynman}
\bibliographystyle{unsrt}
\end{document}